\documentclass{article}

\usepackage[final,nonatbib]{neurips_2022}

\def\neurips{1}
\def\arxiv{0} %

\usepackage[utf8]{inputenc} %
\usepackage[T1]{fontenc}    %
\usepackage{url}            %
\usepackage{booktabs}       %
\usepackage{amsfonts}       %
\usepackage{nicefrac}       %
\usepackage{microtype}      %
\usepackage[usenames,dvipsnames]{xcolor}         %

\definecolor{redcolor}{rgb}{0.79607843, 0.25098039, 0.25882353}
\definecolor{bluecolor}{rgb}{0,0.36,0.69}
\definecolor{pinkcolor}{rgb}{0.70196078, 0.16078431, 0.50980392}
\definecolor{darkbluecolor}{rgb}{0,0.08,0.45}
\usepackage[colorlinks=true,linkcolor=redcolor,citecolor=bluecolor,urlcolor=bluecolor]{hyperref}       %

\usepackage{amsmath}
\usepackage{amssymb}
\usepackage{mathtools}
\usepackage{amsthm}
\usepackage[capitalize]{cleveref}
\usepackage{enumitem}
\usepackage[font=small]{caption}
\usepackage{nicefrac}
\usepackage{algorithm}
\usepackage[noend]{algpseudocode}
\usepackage{wrapfig}
\usepackage{booktabs} %
\usepackage{multicol}
\usepackage{multirow, varwidth}

\usepackage{stackengine}

\setstackEOL{\\}

\usepackage{etoc}
\theoremstyle{plain}
\newtheorem{theorem}{Theorem}[section]
\newtheorem{proposition}[theorem]{Proposition}
\newtheorem{lemma}[theorem]{Lemma}
\newtheorem{corollary}[theorem]{Corollary}
\theoremstyle{definition}
\newtheorem{definition}[theorem]{Definition}
\newtheorem{assumption}[theorem]{Assumption}
\theoremstyle{remark}

\usepackage{xspace} %

\usepackage{etoolbox}
\newcommand{\smalldisplayskips}{%
  \setlength{\abovedisplayskip}{2pt}%
  \setlength{\belowdisplayskip}{2pt}%
  \setlength{\abovedisplayshortskip}{2pt}%
  \setlength{\belowdisplayshortskip}{2pt}}
\appto{\normalsize}{\smalldisplayskips}
\appto{\small}{\smalldisplayskips}
\appto{\footnotesize}{\smalldisplayskips}

\DeclareMathOperator*{\argmin}{argmin}
\DeclareMathOperator*{\argmax}{argmax}
\newcommand{\mrmtl}{\textsf{MR-MTL}\xspace}
\allowdisplaybreaks

\crefformat{section}{\S#2#1#3} %
\crefformat{subsection}{\S#2#1#3}
\crefformat{subsubsection}{\S#2#1#3}

\newcommand{\lam}{\lambda}
\newcommand{\eps}{\varepsilon}

\title{On Privacy and Personalization in\\ Cross-Silo Federated Learning}

\author{%
  Ziyu Liu \quad Shengyuan Hu \quad Zhiwei Steven Wu \quad Virginia Smith \\
  Carnegie Mellon University\\
  \texttt{\{kzliu, shengyuanhu, zstevenwu, smithv\}@cmu.edu} \\
}

\begin{document}

\ifnum\arxiv=1
  \date{\vspace{-1em}}
\fi

\etocdepthtag.toc{mtchapter}
\etocsettagdepth{mtchapter}{subsection}
\etocsettagdepth{mtappendix}{none}

\maketitle

\vspace{-0.1in}
\begin{abstract}
  \noindent
While the application of differential privacy (DP) has been well-studied in cross-device federated learning (FL), there is a lack of work considering DP and its implications for cross-silo FL, a setting characterized by a limited number of clients each containing many data subjects. In cross-silo FL, usual notions of \textit{client-level} DP are less suitable as real-world privacy regulations typically concern the in-silo data subjects rather than the silos themselves. In this work, we instead consider an alternative notion of \textit{silo-specific sample-level} DP, where silos set their own privacy targets for their local examples. Under this setting, we reconsider the roles of personalization in federated learning. In particular, we show that mean-regularized multi-task learning (\textsf{MR-MTL}), a simple personalization framework, is a strong baseline for cross-silo FL: under stronger privacy requirements, silos are incentivized to federate more with each other to mitigate DP noise, resulting in consistent improvements relative to standard baseline methods. We provide an empirical study of competing methods as well as a theoretical characterization of \textsf{MR-MTL} for mean estimation, highlighting the interplay between privacy and cross-silo data heterogeneity. Our work serves to establish baselines for private cross-silo FL as well as identify key directions of future work in this area.

\end{abstract}

\vspace{-0.1in}
\section{Introduction} \label{sec:intro}

Recent advances in machine learning often rely on large, {centralized} datasets~\cite{dalle, chowdhery2022palm, liu2022convnet}, but curating such data may not always be viable, particularly when the data contains private information and  must remain siloed across clients (e.g.\ mobile devices or hospitals).
Recently, federated learning~(FL) \cite{mcmahan2017communication, kairouz2021advances} has emerged as a paradigm for learning from such distributed data, but it has been shown that its data minimization principle alone may not provide adequate privacy protection for  participants~\cite{wen2022fishing,yin2021see}.
To obtain formal privacy guarantees, there has thus been extensive work applying \textit{differential privacy} (DP)~\cite{dwork2006calibrating, dwork2014algorithmic} to various parts of the FL pipeline (e.g.\ \cite{geyer2017differentially, mcmahan2018learning, heikkila2020differentially, kairouz2021distributed, agarwal2021skellam, proj-fedavg, hu2021private, ramaswamy2020training, girgis2021shuffled}).

Existing approaches for differentially private FL are typically designed for \textit{client-level} DP in that they protect the federated clients, such as mobile devices (``user-level''),
tasks in multi-task learning (``task-level''),
or data silos like institutions (``silo-level''), and DP is achieved by clipping and noising the client model updates.
While client-level DP is considered a strong privacy notion as all data of a single client is protected, it may not be suitable for \textit{cross-silo} FL, where there are fewer clients but each hold many data subjects that require protection.
For example, when hospitals/banks/schools wish to federate patient/customer/student records, it is the people owning those records rather than the participating silos that should be protected. In fact, laws and regulations may mandate such participation in FL be disclosed publicly~\cite{Vaid2020.08.11.20172809}, compromising the privacy of the federating clients.

In this work, we instead consider a more natural model of \textit{silo-specific sample-level privacy} (\cref{fig:dp-setting}, with variants appearing in~\cite{heikkila2020differentially, lowy2021private, zheng2021federated, kanani2021private}): the $k$-th silo may set its own $(\eps_k, \delta_k)$ sample-level DP target for any learning algorithm with respect to its local dataset. With this formulation in mind, we then reconsider the impact of privacy, heterogeneity, and personalization in cross-silo FL. In particular, we explore existing baselines for FL (mostly developed in cross-device settings) across private cross-silo benchmarks, and we find that the simple baseline of mean-regularized MTL (\mrmtl)
\begin{wrapfigure}[10]{t}[5pt]{0.57\textwidth}
  \centering
  \includegraphics[width=\linewidth]{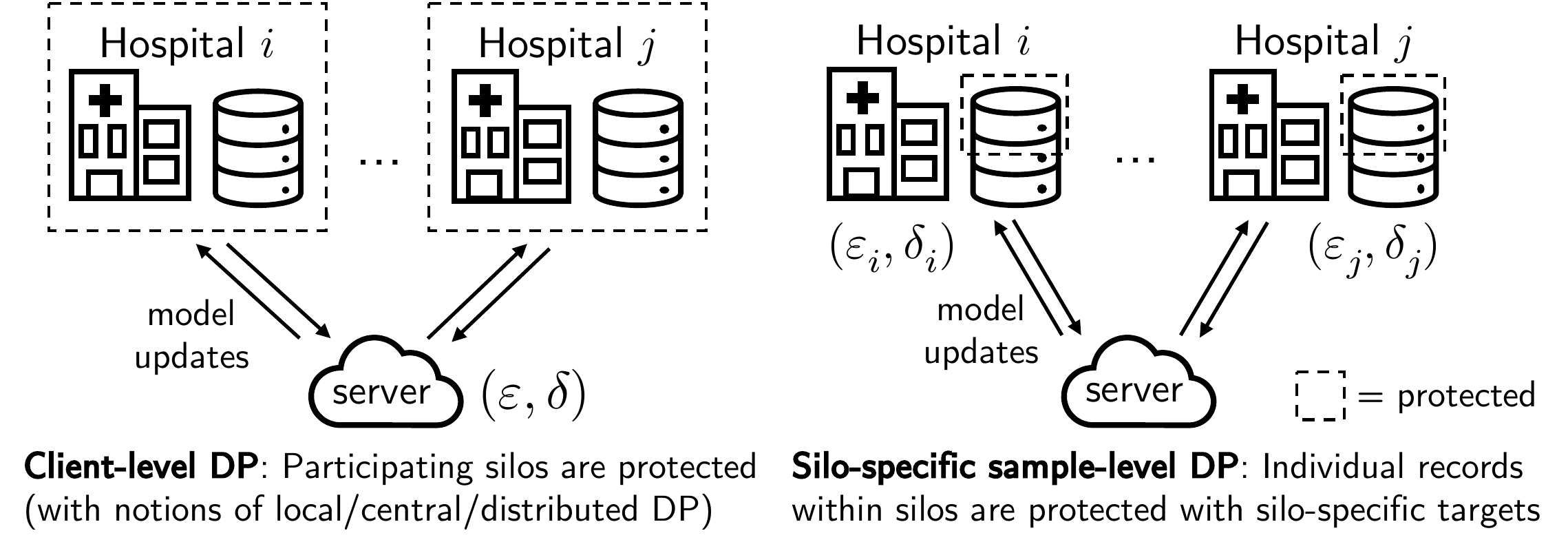}
  \vspace{-1.5em}
  \caption{\textbf{Client-level} DP vs \textbf{Silo-specific sample-level} DP.}
  \label{fig:dp-setting}
  \vspace{-1em}
\end{wrapfigure}
has many advantages for this setting relative to other more common (and possibly more complex) methods.
We then further analyze the performance of \mrmtl under varying levels of heterogeneity and privacy, both in theory and practice. In addition to establishing baselines for cross-silo FL, we also identify interesting future directions in this area (\cref{sec:discussion,supp-sec:future-directions}).
We summarize our contributions below:\footnote{Code is available at \url{https://github.com/kenziyuliu/private-cross-silo-fl}.}

\begin{itemize}[leftmargin=*,itemsep=2pt,parsep=0pt,topsep=0pt,partopsep=0pt]
  \item We consider the notion of \textit{silo-specific sample-level differential privacy} (DP) as a more realistic privacy model for \textit{cross-silo federated learning} (FL). We analyze its implications on existing FL algorithms and, in particular, how it interfaces with data heterogeneity across silos.
  \item We empirically show that mean-regularized multi-task learning (\textsf{MR-MTL}), a  simple form of model personalization, is a remarkably strong baseline under silo-specific sample-level DP.
  Core to its effectiveness is its ability to (roughly) interpolate on the model personalization spectrum between local training and FedAvg with minimal privacy overhead.
  \item We provide a theoretical analysis of \textsf{MR-MTL} under mean estimation and characterize how \textsf{MR-MTL} navigates the tension between privacy and cross-silo data heterogeneity.
  \item Finally, we examine the complications of deploying an optimal \textsf{MR-MTL} instance that stem from the privacy cost of \textit{hyperparameter tuning}.
  Our reasoning also applies to other personalization methods whose advantage over local training and/or FedAvg hinges on selecting the best hyperparameter(s).
  This raises important questions around the practicality of leveraging personalization to balance the emerging tradeoffs under silo-specific sample-level DP.

\end{itemize}

\section{Preliminaries}

\textbf{Federated Learning (FL)}~\cite{mcmahan2017communication,li2020federated,kairouz2021advances} is a distributed learning paradigm with an emphasis on data protection: in every training round, each client (participant) downloads the current global model from a central server, trains it with the local dataset, and uploads the model changes (instead of the data) back to the server, which then aggregates the changes into a new global model.
A basic instantiation of FL is FedAvg~\cite{mcmahan2017communication}, where clients are stateless and the server performs a simple (weighted) average.
\textit{Cross-device} FL refers to settings with many clients each with limited data, bandwidth, availability, etc. (e.g.\ mobile devices). In contrast, \textit{cross-silo} FL typically involves less clients (e.g.\ banks, schools, hospitals) but each with more resources. Two distinguishing characteristics of cross-silo FL relevant to our work are that (1) silos may have sufficient data to fit a reasonable local model \textit{without} FL, and (2) each data point in a silo tends to map to a data subject (a person) requiring privacy protection.

\textbf{Differential Privacy (DP).}\enspace
Despite its ability to mitigate systemic privacy risks, FL by itself does {not} provide formal privacy guarantees for participants' data~\cite{kairouz2021advances,yin2021see,wen2022fishing}, and differential privacy is often used in conjunction with FL to ensure that an algorithm does not leak the privacy of its inputs.
\begin{definition}[Differential Privacy~\cite{dwork2006calibrating, dwork2014algorithmic}]
  A randomized algorithm $M: \mathcal X^n \to \mathcal Y$, where $\mathcal X^n$ is the set of datasets with $n$ samples and $\mathcal Y$ is the set of outputs, is $(\eps, \delta)$-DP if for any subset $S \subseteq \mathcal Y$ and any neighboring $x, x'$ differing in only one sample (by replacement), we have
  \begin{align}
    \Pr[M(x) \in S] \le \exp(\eps) \cdot \Pr[M(x') \in S] + \delta.
  \end{align}
\end{definition}
\vspace{-0.5em}
To apply DP to a dataset query, one commonly used method is the Gaussian mechanism~\cite{dwork2014algorithmic}, which involves bounding the contribution ($\ell^2$-norm) of each sample in the dataset followed by adding Gaussian noise proportional to that bound onto the aggregate.
To apply DP in FL, one needs to define the ``dataset'' to protect; typically, as in \textit{client-level} DP, this is the set of FL participants and thus the model updates from each participant in every round should be bounded and noised.
In learning settings, we need to repeatedly query a dataset and the privacy guarantee composes. We use DP-SGD~\cite{song2013stochastic,bassily2014private,abadi2016deep} for ensuring sample-level DP for model training, and we use R\'enyi DP~\cite{mironov2017renyi} and zCDP~\cite{bun2016concentrated} for tight privacy composition.
In certain FL algorithms, clients also perform additional work such as cluster selection~\cite{mansour2020three,ghosh2020efficient} that incurs privacy overhead with respect to its local dataset that must be accounted for independently from DP-SGD. See \cref{supp-sec:background-extra} for additional background.

\textbf{Personalized FL.}\enspace
Model personalization is a key technique for improving utility under data heterogeneity across silos.\footnote{Note that ``personalization'' refers to customizing models for each \textit{client} in FL rather than a specific person.}
Past work has examined the roles of
local adaptation~\cite{wang2019federated,yu2020salvaging, cheng2021fine},
multi-task learning~\cite{smith2017federated, sattler2020clustered}, clustering~\cite{ghosh2020efficient, cho2021personalized, mansour2020three, sattler2020clustered}, public data~\cite{zhao2018federated, mansour2020three}, meta learning~\cite{jiang2019improving,li2019differentially, fallah2020personalized}, or other forms of
model mixtures~\cite{liang2020think,li2021ditto,mansour2020three,hanzely2020federated,deng2020adaptive,agarwal2020federated}.
Notably, many methods leverage extra computation to some extent (e.g.\ extra iterations~\cite{liang2020think,li2021ditto, cheng2021fine} or cluster selection~\cite{ghosh2020efficient,mansour2020three}), which will result in privacy overhead under silo-specific sample-level DP as discussed in the following section.
Of particular interest is the family of mean-regularized multi-task learning (\textsf{MR-MTL}) methods~\cite{evgeniou2004regularized, t2020personalized, hanzely2020federated, hanzely2020lower} (see \cref{alg:mr-mtl} for a typical instantiation). We find that \textsf{MR-MTL}, while extremely simple, is a strong baseline for private cross-silo FL.

\vspace{-0.4em}
\section{Revisiting the Privacy Model for Cross-Silo Federated Learning}
\label{sec:privacy-model}

\begin{figure}[t]
  \centering
  \includegraphics[width=\linewidth]{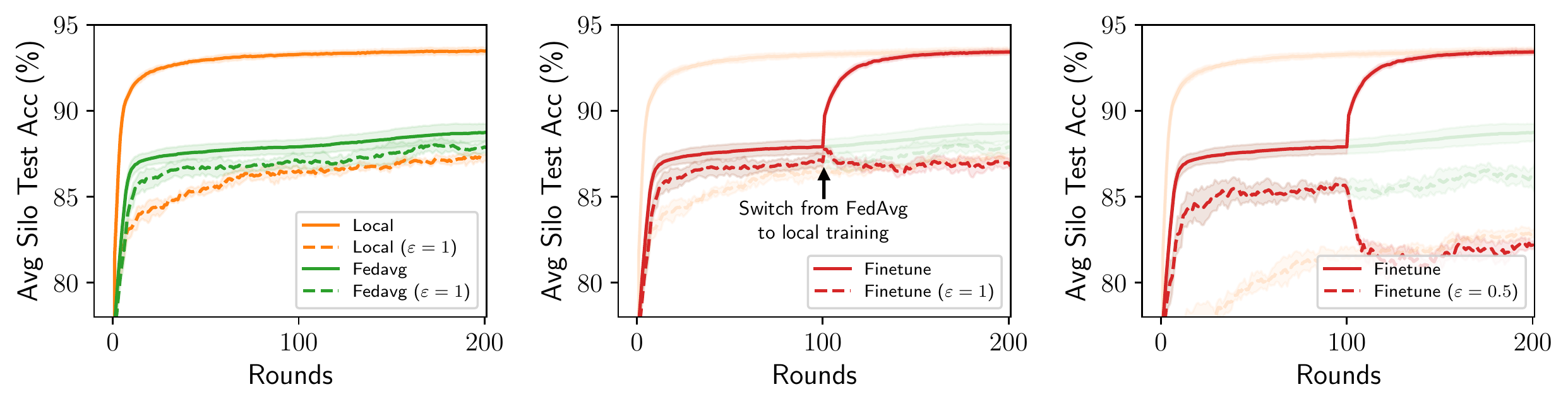}
  \vspace{-2em}
  \caption{
    \textbf{Two notable phenomena under \textit{silo-specific sample-level DP}}: (1) FedAvg can serve to cancel out per-silo DP noise and thus outperform local training even when the latter works better without privacy ({left});
    (2) Local finetuning~\cite{yu2020salvaging,cheng2021fine} (FedAvg followed by local training) may not improve utility as expected, as the effect of noise reduction is removed when finetuning begins ({mid} \& {right}).
    Results report mean test acc~$\pm$~std on the Vehicle dataset over 5 seeds. For simplicity, all silos budgets for the same labeled $\eps$ with $\delta = 10^{-7}$.
    Transparent curves refer to local/FedAvg runs with the same $\eps$ labeled for finetuning (compare {left} \& {mid}).
    }
  \label{fig:finetune-gap}
  \vspace{-1em}
\end{figure}

To date, the prevalent privacy model for federated learning has been to protect the participating clients, i.e.\ client-level DP.
For cross-silo FL, however, several factors render client-level DP less appropriate. First, cross-silo FL often involves a small number of clients and it can be utility-wise more costly to attain the same privacy targets. For example, privacy amplification via sampling~\cite{abadi2016deep,mironov2019r} may not apply on the client level since all silos typically participate in every round.
Second, many existing methods focus on enforcing client-level DP in a non-local model and thus defines a shared privacy target for all participants, but in real-world cross-silo settings, participants under different jurisdictions (e.g.\ states) may have varying privacy requirements and thus opt for different privacy-utility tradeoffs.
Third, while silo-level protection implies sample-level protection, it may be too stringent in practice as silos often have large local datasets.
These unique properties for private cross-silo learning motivate us to consider \textit{silo-specific sample-level} DP as an alternative privacy model  (\cref{fig:dp-setting}):
\begin{definition}[Silo-specific sample-level DP] \label{def:sssl-dp}
  A cross-silo FL algorithm with $K$ clients (silos) satisfy $\{(\eps_k, \delta_k)\}_{k \in [K]}$-``\textit{silo-specific sample-level DP}'' if the local (personalized) model $M_k$ of every silo $k \in [K]$ satisfies $(\eps_k, \delta_k)$-DP w.r.t.\ the silo's local dataset of training examples.
\end{definition}
\vspace{-0.5em}
\textbf{Characteristics of silo-specific sample-level privacy.}\enspace
Importantly, silo-specific sample-level DP is defined over the \textit{disjoint} datasets of the individual silos, rather than the combined dataset of all silos.\footnote{
  A record in such a combined dataset is at most ($\max_i \eps_i$, $\max_i \delta_i$)-DP~\cite{mcsherry2009privacy,yu2019differentially,proj-fedavg}.
  Moreover, if multiple records (either within a silo or across silos) map to the same person,
  then it is more intricate to protect the person rather than their records.
  Here we focus on the case where each entity has at most one record across the combined dataset (e.g.\ students attending exactly one school).
  See \cref{supp-sec:non-bijective} for discussions.
}
To instantiate this setup in FL, each silo can simply run DP-SGD~\cite{song2013stochastic,bassily2014private,abadi2016deep} with a noise scale calibrated to gradually spend its privacy budget over $T$ training rounds, and return the noisy model update at each round.
This privacy notion has several important implications on the dynamics of FL:
\begin{enumerate}[leftmargin=*,itemsep=2pt,parsep=0pt,topsep=0pt,partopsep=0pt]
  \item
  \textbf{Silos incur privacy costs with queries to their data, but \textit{not} with participation in FL.}
  This follows from DP's robustness to post-processing: the silos' model updates in each round already satisfy their own sample-level DP targets, and participation by itself does not involve extra dataset queries (e.g.\ DP-SGD steps).
  In contrast, local training without communication can be kept noise-free under client-level DP, but participation in FL requires privatization.
  Two immediate consequences of the above are that (1) local training and FedAvg now have \textit{identical} privacy costs, and (2) \textit{local finetuning} for model personalization may no longer work as expected (\cref{fig:finetune-gap}).
  \item
  \textbf{Less reliance on a trusted server.}
  As a corollary of the above, all model updates of silo $k$ satisfy (at least) $(\eps_k, \delta_k)$-DP against external adversaries, including all other silos and the orchestrating server~\cite{dwork2014algorithmic,zheng2021federated}.
  In contrast, client-level DP under a non-local model necessitates some trust on the server, even for distributed DP methods (e.g.~\cite{erlingsson2019amplification, cheu2019distributed, kairouz2021distributed,agarwal2021skellam, cheu2021differential, chen2022poisson}).

  \item
  \textbf{Tradeoff emerges between costs from privacy and heterogeneity.}
  As privacy-perserving noises are added independently on each silo, they are reflected in silos' model updates and can thus be mitigated when the model updates are aggregated (e.g.\ via FedAvg), leading to a smaller utility drop due to DP for the shared model.
  On the other hand, federation also means that
  the shared model may suffer from \textit{client heterogeneity} (non-iid data across silos).
  This intuition is observed in \cref{fig:finetune-gap}: while local training may outperform FedAvg without privacy (as a result of heterogeneity), the opposite can be true when privacy is added (as a result of noise variance reduction).
\end{enumerate}

The first and last in the above are of particular interest because they suggest that \textit{model personalization} can play a key and distinct role in our privacy setting.
Specifically, local training (no FL participation) and FedAvg (full FL participation) can be viewed as two ends of a \textit{personalization spectrum} with identical privacy costs; if local training minimizes the effect of data heterogeneity but enjoys no DP noise reduction, and contrarily for FedAvg, it is then natural to ask whether there exist personalization methods that lie in between and achieve better utility, and, if so, what methods would work best.

\textbf{Related privacy settings.}\enspace
Past work on differentially private FL has concentrated on client-level DP and cross-device FL (e.g.\ \cite{geyer2017differentially,brendan2018learning,hu2021private,girgis2021shuffled,kairouz2021practical,kairouz2021distributed,aldaghri2021feo2}), and the application of DP in cross-silo FL, particularly where each silo defines its {own DP targets} for records of its {own dataset}, is relatively underexplored.
Privacy notions closest to ours first appeared in~\cite{truex2019hybrid,li2019differentially, heikkila2020differentially, zheng2021federated, lowy2021private, kanani2021private, proj-fedavg}.
In \cite{truex2019hybrid}, each client adds its own one-shot noise onto its outgoing update, but in learning scenarios this provides client-level protection.
The works of \cite{li2019differentially,lowy2021private,zheng2021federated,heikkila2020differentially,proj-fedavg} study analogous privacy notions, though they respectively focus on boosting utility~\cite{li2019differentially}, analyzing statistical rates~\cite{lowy2021private}, adapting FL to $f$-DP~\cite{zheng2021federated,gaussian-dp}, applying security primitives~\cite{heikkila2020differentially}, and learning a better global model; the aspects of heterogeneity, DP noise reduction, the personalization spectrum, and their interplay (e.g.\ \cref{fig:finetune-gap,fig:mrmtl-bump}) were unexplored.
The work of \cite{kanani2021private} also considers a similar privacy notion, but the authors study a disparate trust assumption where DP noise is \textit{not} added to local training/finetuning such that the final personalized models lack privacy guarantees.
We note that the trust model most suitable for private cross-silo FL may be application-specific; in this work, we focus on the setting where the outputs of the FL procedure (the personalized models) must remain differentially private.

\vspace{-0.5em}
\section{Baselines for Private Cross-Silo Federated Learning}
\label{sec:baselines}

With the characteristics from \cref{sec:privacy-model} in mind, we now explore various methods on cross-silo benchmarks.
We defer additional details as well as results on more settings and datasets to the appendix.

\textbf{Datasets.}
We consider four cross-silo datasets that span regression/classification and convex/non-convex tasks: Vehicle~\cite{duarte2004vehicle}, School~\cite{goldstein1991multilevel}, Google Glass (GLEAM)~\cite{rahman2015unintrusive}, and CIFAR-10~\cite{krizhevsky2009learning}. The first three datasets have real-world cross-silo characteristics: Vehicle contains measurements of road segments for classifying the type of passing vehicles, School contains student attributes for predicting exam scores, and GLEAM contains motion tracking data to classify wearers' activities. CIFAR-10 has heterogeneous client splits following~\cite{t2020personalized,shamsian2021personalized}. See \cref{supp-sec:dataset-models} for more details and datasets.

\textbf{Benchmark methods.}
We consider several representative methods in the personalized FL literature beyond local training and FedAvg~\cite{mcmahan2017communication}:
local finetuning~\cite{wang2019federated,yu2020salvaging,cheng2021fine} (a simple but strong baseline for model personalization),
Ditto~\cite{li2021ditto} (state-of-the-art personalization method),
Mocha~\cite{smith2017federated} (personalization with task relationship learning),
IFCA/HypCluster~\cite{ghosh2020efficient,mansour2020three} (state-of-the-art hard clustering method for client models), and the mean-regularized multi-task learning (\mrmtl) methods~\cite{evgeniou2004regularized, t2020personalized, hanzely2020federated, hanzely2020lower} (which we analyze in \cref{sec:method}).
For fair comparison under silo-specific sample-level DP, we align all benchmark methods on the total privacy budget by first restricting the total number of iterations over the local datasets and then account for any privacy overheads (in the form of necessary extra steps~\cite{li2021ditto} or cluster selection for IFCA/HypCluster~\cite{ghosh2020efficient,mansour2020three}).
Importantly, many other personalization methods can either be reduced to one of the above under convex settings (e.g.\ \cite{jiang2019improving,liang2020think}) or are unsuitable due to large privacy overheads (e.g.\ large factor of extra steps for \cite{fallah2020personalized}).

\begin{figure}[t]
  \centering
  \includegraphics[width=\linewidth]{./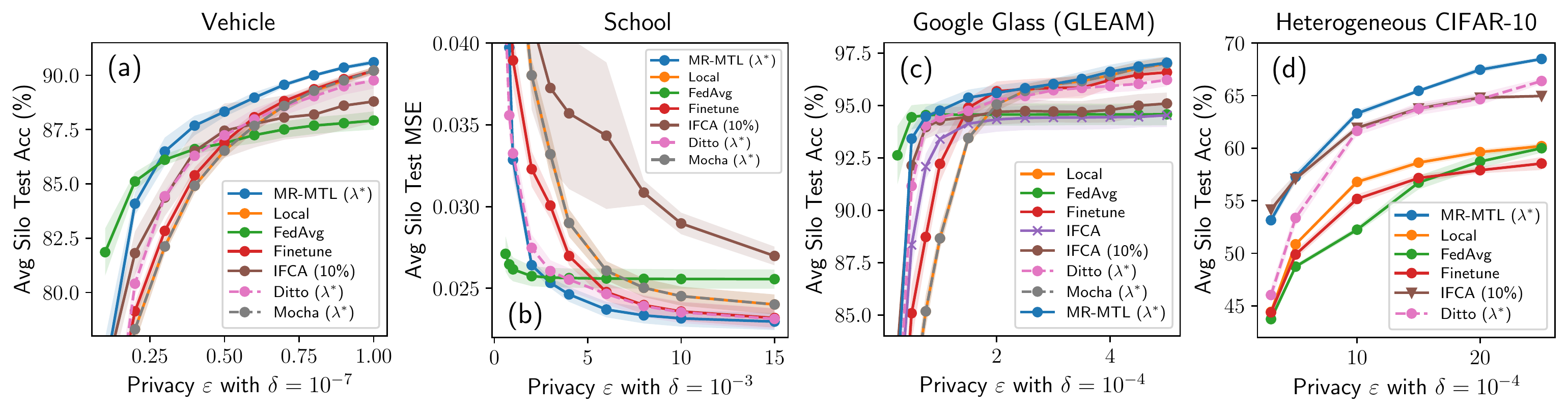}
  \vspace{-1.75em}
  \captionof{figure}{
    \textbf{Privacy-utility tradeoffs} (privacy budgets $\eps$ vs.\ test metrics, mean~$\pm$~std w/ 5 seeds) for various personalization methods on {\textbf{Vehicle}}, {\textbf{School}}, {\textbf{GLEAM}}, and {\textbf{Heterogeneous CIFAR-10}} datasets respectively.\
    For simplicity, every silo targets for the same $(\eps, \delta)$ under {silo-specific sample-level DP}.
    $\lambda^*$ denotes a tuned regularization strength where applicable.
    ``Local'' denotes local training (clients train and keep their own models).
    ``IFCA (10\%)'' denotes forming clusters for only first 10\% of training rounds due to privacy overhead (\cref{fig:privacy-overhead}).
  }
  \vspace{-1.5em}   %
  \label{fig:pareto-merged}
\end{figure}

\begin{wrapfigure}[15]{Tr}[0pt]{0.3\textwidth}
  \centering
  \includegraphics[width=1\linewidth]{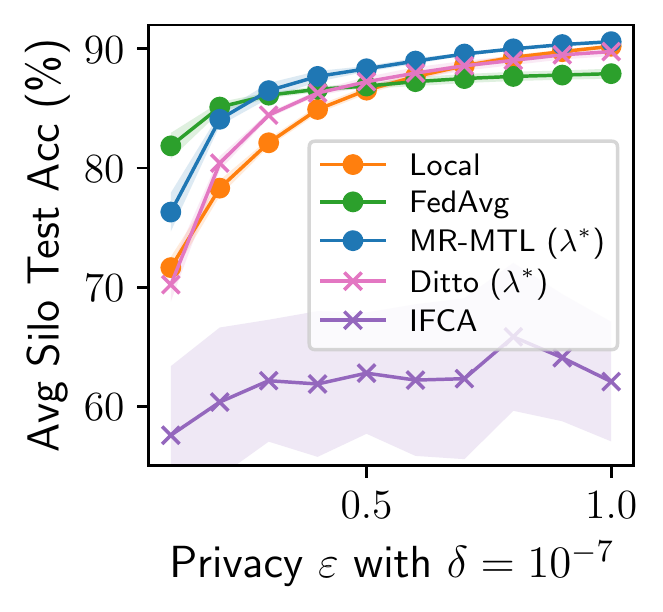}
  \vspace{-1.8em}
  \caption{
    The privacy overhead of cluster selection (IFCA) and extra iterations (Ditto) compared to local, FedAvg, and \mrmtl under silo-specific sample-level DP.
  }
  \label{fig:privacy-overhead}
\end{wrapfigure}

\textbf{Training setup.}\enspace
For all methods, we use minibatch DP-SGD in each silo to satisfy silo-specific sample-level privacy; while certain methods may have more efficient solvers (e.g.\ dual form for~\cite{smith2017federated}), we want compatibility with DP-SGD as well as privacy amplification via sampling on the example level for tight accounting.
For all experiments, silos train for 1 local epoch in every round (except for~\cite{li2021ditto} which runs $\ge$ 2 epochs).
Hyperparameter tuning is done via grid search for all methods.
Importantly, when comparing the benchmark methods, we do not account for the privacy cost of hyperparameter tuning in order to focus on their inherent privacy-utility tradeoff; we revisit this issue in \cref{sec:discussion}.
For simplicity, we use the same privacy budget for all silos, i.e.\ $(\eps_i, \delta_i) = (\eps_j, \delta_j)$ for all $i, j \in [K]$; note that this is not a restrictive assumption since the effect of having varying DP noise scales from different budgets can be attained by varying local dataset sizes.

\textbf{Results.}\enspace
\cref{fig:pareto-merged} shows the privacy-utility tradeoffs across four datasets.
We observe that \mrmtl consistently outperforms a suite of baseline methods, and that it performs at least as good as local training and FedAvg (endpoints of the personalization spectrum), except at high-privacy regimes (possibly different for each dataset).
In particular, there exists a range of $\eps$ values where \textsf{MR-MTL} can give significantly better utility over local training and FedAvg \textit{under the {same} privacy budgets} ($\eps \approx 0.5, 6, 1.5$ for \cref{fig:pareto-merged} (a, b, c) respectively); this is our key regime of interest.

\textbf{Effects of silo-specific sample-level privacy.}\enspace
In \cref{fig:finetune-gap} we saw that local finetuning may not improve utility as expected, motivating us to reconsider the roles of federation and personalization (\cref{sec:method} below). In \cref{fig:privacy-overhead}, we consider the implication of silo-specific sample-level DP from the effects of privacy overhead due to additional dataset queries: (1) If IFCA~\cite{ghosh2020efficient,mansour2020three} performs cluster selection at every round (default behavior), then the extra privacy cost can be prohibitive; (2) Despite its similarity to \mrmtl, Ditto~\cite{li2021ditto}'s privacy overhead makes it less competitive (see also~\cref{fig:mrmtl-bump}).

\vspace{-0.4em}
\section{On the Effectiveness of Mean-Regularized MTL for Private Cross-Silo FL}
\label{sec:method}

Following the observations in \cref{sec:baselines}, we now examine the desirable properties of a good algorithm under silo-specific sample-level privacy and understand why \mrmtl may be an attractive candidate.

\textbf{Federation as noise reduction.}\enspace
A key message from \cref{sec:privacy-model} and \cref{fig:finetune-gap} is that the utility cost from DP can be significantly smaller for FedAvg compared to local training even when the latter spends an identical privacy budget.
This implies that FedAvg may have inherent benefits for DP noise reduction, despite the noise are added in the \textit{gradient} space rather than the parameter space (as in client-level DP).
Consider a simple setting of DP gradient descent: the update rule
$w_k^{(t+1)} = w_k^{(t)} - \frac{\eta}{n_k} \left( z^{(t)} + \sum_{i=1}^{n_k} g_{k, i}^{(t)} \right)$
for silo $k$ recursively expands to $w_k^{(T)} = w_k^{(0)} - \frac{\eta}{n_k} \sum_{t=0}^{T-1} z^{(t)} -  \frac{\eta}{n_k} \sum_{t=0}^{T-1} \sum_{i=1}^{n_k} g_{k, i}^{(t)}$ over $T$ steps,
where
$g_{k, i}^{(t)}$ is the clipped gradient of the $i$-th local example at step $t$ (with norm bound $c$) out of a total of $n_k$ examples, and ${z^{(t)} \sim \mathcal N(0, \sigma^2 \mathbf{I})}$ is the Gaussian noise added to the gradient sum at step $t$ that targets for an overall privacy budget of $(\eps_k, \delta_k)$ over $T$ steps with $\sigma^2 = O\left( c^2 T \ln (1 / \delta_k) / \eps_k^2 \right)$~\cite{abadi2016deep}.
The cumulative noise term
\begin{align}
  Z^{(T)} \triangleq -\frac{\eta}{n_k} \sum_{t} z^{(t)} \sim \mathcal N\left(0, \enspace O\left( \frac{ \eta^2  c^2  T^{2}  \ln (1 / \delta_k) }{ n_k^2 \eps_k^2 }  \right) \cdot \mathbf{I} \right)
\end{align}
indeed implies that each silo's model update has an independent Gaussian random walk component~\cite{rognvaldsson1994langevin, an1996effects, welling2011bayesian} whose variance can be reduced by averaging with other silos' updates, as in FedAvg.\footnote{
    The work of~\cite{kairouz2021practical} examines the benefits of adding negatively correlated (instead of independent) noises $z_t$ across time steps. While this is a potential orthogonal extension to our use of local DP-SGD \textit{within} each silo, it may not be directly applicable to our main focus of reducing noise variance \textit{across} silos, since for each silo $k$ to satisfy its own $(\eps_k, \delta_k)$ requirement, it must add noise independent to other silos.
}
A similar reasoning applies to SGD cases since the additive DP noises are i.i.d.\ across the minibatches.

\textbf{Model personalization for privacy-heterogeneity cost tradeoff.}\enspace
A major downside of FedAvg is that it may underperform simple local training due to data heterogeneity (e.g.\ \cite{yu2020salvaging} and \cref{fig:finetune-gap}), particularly given that clients in cross-silo FL often have sufficient data to fit reasonable local models.
This suggests an emerging role for model personalization on top of its benefits in terms of utility~\cite{wang2019federated}, robustness~\cite{yu2020salvaging}, or fairness~\cite{li2021ditto} under heterogeneity:
our privacy model allows local training and FedAvg to be viewed as two endpoints of a \textit{personalization spectrum} that respectively mitigate the utility costs of heterogeneity and privacy noise with identical privacy budgets (recall \cref{sec:privacy-model}); this means that personalization methods could be viewed as interpolating between these endpoints and that various personalization methods essentially do so in different ways.
However, our empirical observations motivate the following key properties of a good personalization algorithm:
\begin{enumerate}[leftmargin=*,itemsep=2pt,parsep=0pt,topsep=0pt,partopsep=0pt]
  \item \textbf{Noise reduction}: The effect of noise reduction is present throughout training so that the utility costs from DP can be consistently mitigated. Local finetuning is a counter-example (\cref{fig:finetune-gap}).
  \item \textbf{Minimal privacy overhead}: There are little to no additional local dataset queries to prevent extra noise for DP-SGD under a fixed privacy budget. In effect, such privacy overhead can shift the utility tradeoff curve downwards, and Ditto~\cite{li2021ditto} may be viewed as a counter-example (\cref{fig:privacy-overhead}).
  \item \textbf{Smooth interpolation along the personalization spectrum}: The interpolation between local training and FedAvg should be fine-grained (if not continuous) such that an optimal tradeoff should be attainable. Clustering~\cite{ghosh2020efficient,mansour2020three} may be viewed as a counter-example when there are no clear heterogeneity strucutre across clients.
\end{enumerate}
These properties are rather restrictive and they render many promising algorithms less attractive.
For example, model mixture~\cite{liang2020think, bietti2022personalization, deng2020adaptive} and local adaptation~\cite{yu2020salvaging,cheng2021fine} methods can incur linear overhead in dataset iterations, and so can multi-task learning~\cite{li2021ditto, smith2017federated, sattler2020clustered} methods that benefit from additional training. Clustering methods~\cite{ghosh2020efficient, cho2021personalized, mansour2020three, sattler2020clustered} can also incur overhead with cluster selection~\cite{ghosh2020efficient,mansour2020three}, distillation~\cite{cho2021personalized}, or training restarts~\cite{sattler2020clustered}, and they discretize the personalization spectrum in a way that depends on external parameters (e.g., the number of clients, clusters, or top-down partitions).

\textbf{The case for mean-regularization.}\enspace
These considerations point to {mean-regularized multi-task learning} (\textsf{MR-MTL}) as one of the simplest yet particularly suitable forms of personalization.
\textsf{MR-MTL} has manifested in various forms in the literature~\cite{evgeniou2004regularized,zhang2015deep,t2020personalized,hanzely2020federated,cheng2021fine,hu2021private} with the key idea that a personalized model $w_k$ for each silo $k$ should be close to the mean of all personalized models $\bar w$ via a regularization penalty $\nicefrac\lambda2 \| w_k - \bar w \|_2^2$ (see \cref{alg:mr-mtl} for a typical instantiation).
The hyperparameter $\lambda$ serves as a smooth knob between local training and FedAvg, with $\lambda = 0$ recovering local training and a larger $\lambda$ forces the personalized models $w_k$ to be closer to each other (``federate more'').
However, it is an imperfect knob as $\lambda \to \infty$ may \textit{not} recover FedAvg under a typical optimization setup as the regularization term may dominate the gradient step $w_k^{(t+1)} = w_k^{(t)} - \eta \left( g_t + \lambda \left(w_k^{(t)} - \bar w^{(t)} \right) \right)$ where $g_t$ is the noisy clipped gradient, and \mrmtl may thus underperform FedAvg in high-privacy regimes that necessitate a large $\lam$ to mitigate DP noise (\cref{fig:pareto-merged}).

\textsf{MR-MTL} has the attractive properties that:
(1) noise reduction is achieved throughout training via a soft constraint that personalized models are close to an averaged model;
(2) for fixed $\lambda$ it has \textit{zero} additional privacy cost compared to local training/FedAvg as it does not involve extra dataset queries; and
(3) $\lambda$ provides a smooth interpolation along the personalization spectrum.
Moreover, compared to other regularization-based MTL methods, it adds only one hyperparameter $\lambda$ (cf.\ \cite{jalali2010dirty,zhou2011clustered,gong2012robust}); this has important practical implications as will be discussed in \cref{sec:discussion}.
It also has fast convergence~\cite{zhang2021survey} and easily extends to deep learning with good empirical performance in the primal~\cite{t2020personalized,hu2021private} (cf.\ \cite{barzilai2015convex,smith2017federated,liu2017distributed}). It is also sufficently extensible to handle structured heterogeneity (discussed below).
We argue through the following empirical analyses that these properties make \textsf{MR-MTL} a strong baseline under silo-specific sample-level DP.

\begin{wrapfigure}[17]{r}[0pt]{0.47\textwidth}
  \centering
  \vspace{-1.5em}
  \includegraphics[width=1.05\linewidth]{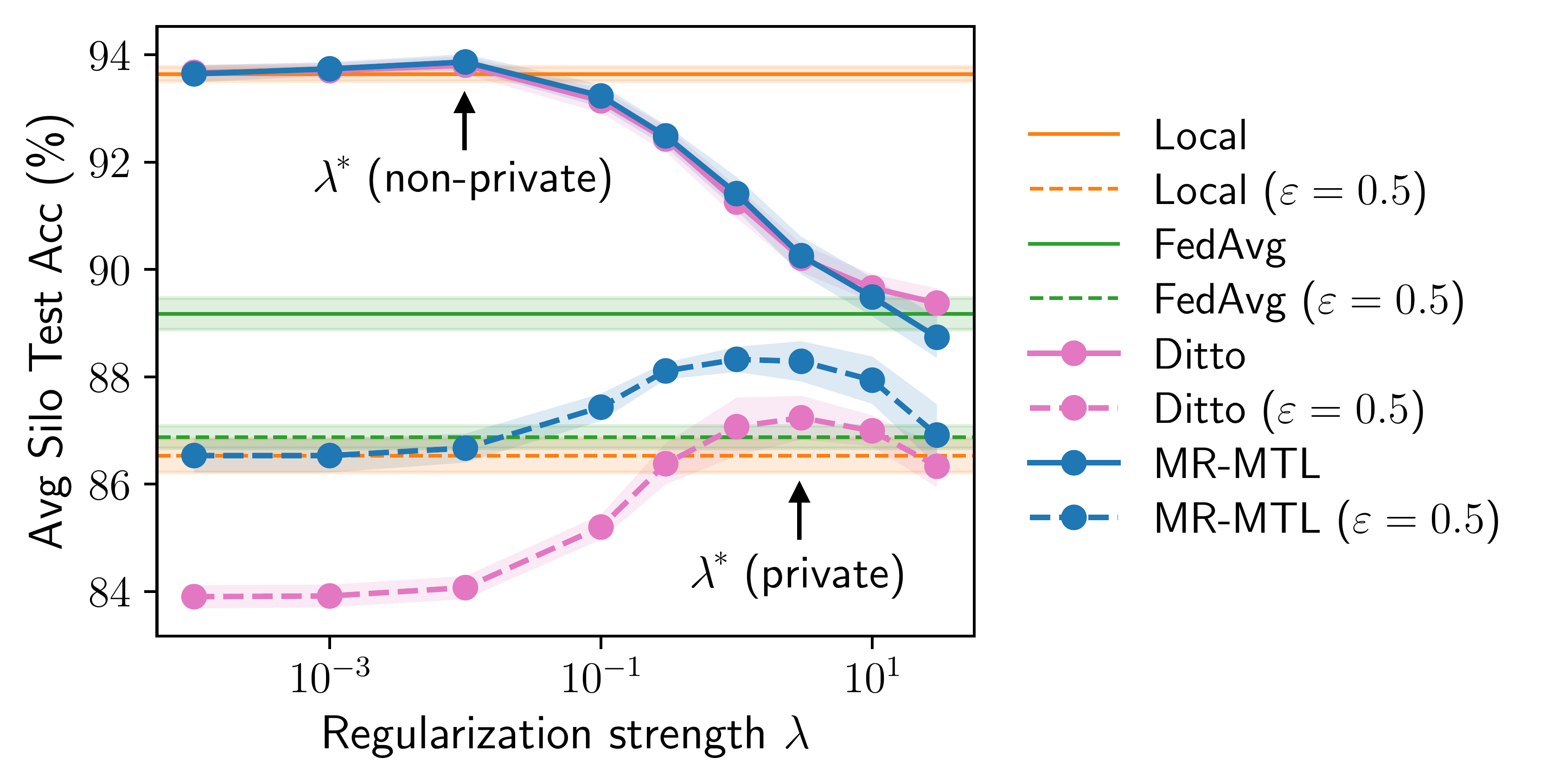}
  \vspace{-1.8em}
  \caption{
    \textbf{Test acc $\pm$ std of \mrmtl with varying $\lam$} (corresponding to $\eps$ = 0.5 in \cref{fig:pareto-merged} (a)).
    Optimal points ($\lambda^*$) exist where it outperforms both ends of the spectrum under the \textit{same} privacy.
    Ditto~\cite{li2021ditto} gives a similar interpolation but has strictly worse
    privacy-utility tradeoffs due to its privacy overhead.
    See \cref{supp-sec:bump-extension} for extensions of this figure to other privacy settings and datasets.
  }
  \label{fig:mrmtl-bump}
\end{wrapfigure}

\textbf{Navigating the emerging privacy-heterogeneity cost tradeoff.}
In \cref{fig:mrmtl-bump} we study the effect of the regularization strength $\lam$ on the model utility directly.
There are several notable observations:
(1) In both private and non-private settings, $\lam$ serves to roughly interpolate between local training and FedAvg.
(2) The utility at the best $\lam^*$ may outperform both endpoints. This is significant for the private setting since \mrmtl achieves an \textit{identical} privacy guarantee as the endpoints.
(3) Moreover, the \textit{advantage} of \mrmtl over the best of the endpoints are also larger under privacy.
(4) The value of $\lam^*$ also increases under privacy, indicating that the personalized silo models are closer to each other (i.e. silos are encouraged to ``federate more'') for noise reduction.
We will characterize these behaviors in \cref{sec:analysis}.
We also consider Ditto~\cite{li2021ditto} in \cref{fig:mrmtl-bump}, a state-of-the-art personalization method that resembles \mrmtl and exhibits similar behaviors, to illustrate the effect of privacy overhead from its extra local training iterations.

\begin{figure}[t]
  \centering
  \includegraphics[width=\linewidth]{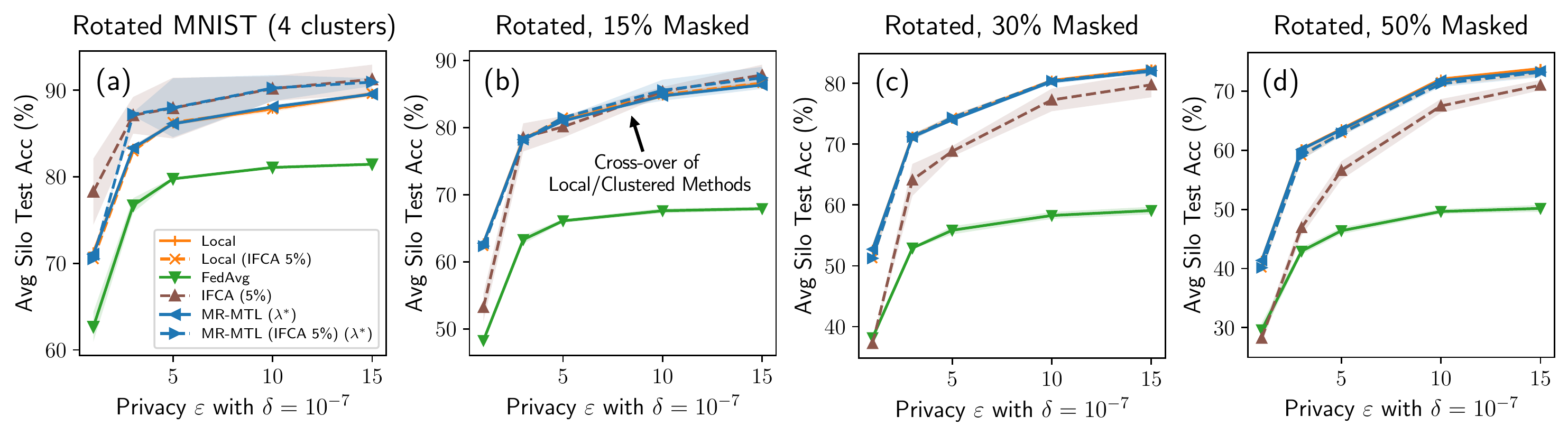}
  \vspace{-1.5em}
  \caption{
    \textbf{Test acc $\pm$ std on Rotated \& Masked MNIST}.
    (IFCA 5\%) denotes warm-starting the method by running IFCA~\cite{ghosh2020efficient,mansour2020three} for first 5\% of rounds followed by running the method within the cluster structures.
  }
  \vspace{-1.5em}
  \label{fig:rot-mnist}
\end{figure}

\textbf{\mrmtl under structured heterogeneity.}\enspace
We further study (1) the extensibility of \mrmtl as a strong baseline method to handle clustering structures of silo data distributions and (2) its flexibility to handle varying heterogeneity levels, by manually introducing two layers of heterogeneity to the MNIST dataset~\cite{lecun1998gradient}.
The first layer is 4-way rotations: train/test images are evenly split into 4 groups of 10 silos, with each group applying $\{0^{\circ}, 90^{\circ}, 180^{\circ}, 270^{\circ}\}$ of rotation to their images. The second layer is \textit{silo-specific} masking: each silo generates and applies its unique random mask of $2\times2$ white patches to its images, with varying masking probability. Together, the 1st layer creates 4 well-defined silo clusters, and the 2nd layer (gradually) adds \textit{intra-cluster} heterogeneity.
Importantly, our goal is not to contrive a utility advantage of \mrmtl (in fact, the added heterogeneity is disadvantageous to mean-regularization), but to examine its extensibility and flexibility as a strong baseline method to match the best methods by construction.
Under the 1st layer of heterogeneity, clustered FL methods~\cite{ghosh2020efficient,mansour2020three} should be optimal if the correct clusters are formed since there is no intra-cluster heterogeneity; with increasing silo-specific heterogeneity in the 2nd layer, local training should be increasingly more attractive. See \cref{supp-sec:dataset-models} for more details on the setup and examples of images.

We propose a simple heuristic to precondition or ``warm-start'' \mrmtl with a small number of training rounds by running private clustering (with IFCA~\cite{ghosh2020efficient,mansour2020three}) followed by mean-regularized training \textit{within each formed cluster} (see \cref{supp-sec:algorithms} for details). We find that this simple heuristic, with convergence properties carried forward from its components~\cite{ghosh2020efficient,evgeniou2004regularized,zhang2021survey}, enables \mrmtl to excel at all levels of heterogeneity: in \cref{fig:rot-mnist}~(a), the preconditioning allows \mrmtl to match IFCA (optimal by construction) while local training (full personalization) does not benefit from the same preconditioning; in \cref{fig:rot-mnist}~(b, c, d), \mrmtl remains optimal across different levels of silo-specific heterogeneity (the 2nd layer) while the gains from warm-start gradually drop.
We argue that extensibility and flexibility are good properties that make \mrmtl a strong baseline, as heterogeneity in practical settings is likely less adversarial than what we presented.

\section{Analysis}
\label{sec:analysis}

\newcommand{\sigmadp}{\sigma_{\mathrm{DP}}}
\newcommand{\sigmaloc}{\sigma_{\mathrm{loc}}}

In this section we provide a theoretical analysis of \mrmtl under mean estimation as a simplified proxy for (single-round) FL using a Bayesian framework extending on~\cite{li2021ditto}.
We provide expressions for the Bayes optimal estimator $\mrmtl~(\lam^*)$ and describe how \mrmtl behaves with varying $\lam$ in relation to the personalization spectrum to characterize our observations from \cref{fig:mrmtl-bump}. Proofs and extensions are deferred to \cref{supp-sec:proofs}.

\textbf{Setup.}\enspace
We start with a total of $K$ silos where the $k$-th silo holds $n$ training samples $X_k \triangleq \{x_{k, i} \in \mathbb R\}_{i \in [n]}$, each normally distributed around a hidden center $w_k$ with variance $\sigma^2$; i.e.\ $x_{k, i} = w_k + z_{k, i}$ with $z_{k, i} \sim \mathcal N(0, \sigma^2)$.
To quantify heterogeneity, the silo centers $\{w_k\}_{k \in [K]}$ are also normally distributed around some unknown fixed meta-center $\theta$ with variance $\tau^2$; i.e.\ $w_k = \theta + z$ with $z \sim \mathcal N(0, \tau^2)$.
A large $\tau$ means that the silo centers are distant from each other and thus their local objectives are heterogeneous, and contrarily for a small $\tau$.
Our goal is for each silo $k$ to compute a sample-level private estimate of $w_k$ that minimizes the \textit{generalization} error (i.e.\ on unseen points from the same local distribution).
Each silo targets $(\eps, \delta)$ sample-level DP and runs the Gaussian mechanism with noise scale $\sigmadp= c\sqrt{2 \ln (1.25 / \delta)} / \eps$ and clipping bound $c$.\footnote{
  For simplicity, we start with the same $n$, $\sigma$, $\sigmadp$ for all silos and extend to silo-specific values in \cref{supp-sec:proofs}.
}
Under this setting, the \mrmtl objective for the $k$-th silo is
\begin{align} \label{eq:mrmtl-objective}
  h_k(w) = \tilde F_k(w) + \frac{\lambda}{2} \| w - \bar w \|_2^2.
\end{align}
Here, $\tilde F_k(w) \triangleq \frac{1}{2} (w - \frac{1}{n} (\xi_k + \sum_{i=1}^n x_{k, i} \cdot \min(1, c/ \| x_{k, i} \|_2 ) ) )^2$ is the local objective to privately estimate the mean of the local data points with privacy noise $\xi_k \sim \mathcal N(0, \sigmadp^2)$. Since the data are (sub-)Gaussian, we assume one can choose $c$ such that no clipping error is introduced w.h.p., so $\hat w_k \triangleq \argmin \tilde F_k(w) = \frac{1}{n} \left(\xi_k + \sum_i x_{k, i}\right)$ is the best local estimator. $\bar w = \frac{1}{K} \sum_{k} \hat w_k $ is the average estimator across silos, which is the same as the FedAvg estimator under mean estimation.
We also consider the \textit{external} average local estimators for silo $k$, defined as $\hat w_{\setminus k} \triangleq \frac{1}{K-1} \sum_{j \neq k} \hat w_j$.
The following lemma gives the best \textsf{MR-MTL} estimator $\hat w_k(\lambda)$ as a function of $\lam$.
\begin{lemma} \label{lem:wk-lambda}
  Let $\lambda \ge 0$ and ${\alpha = \frac{K+\lambda}{(1+\lambda) K} \in (1/K, 1]}$.
  The minimizer of $h_k(w)$ is given by
  \begin{align}
    \hat w_k(\lambda) = \alpha \cdot \hat w_k + (1 - \alpha) \cdot \hat w_{\setminus k}.
  \end{align}
\end{lemma}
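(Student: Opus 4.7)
The plan is to treat this as a direct first-order condition calculation on a one-dimensional strongly convex quadratic. First I would use the setup's assumption that $c$ can be chosen large enough to avoid clipping error w.h.p., so that $\tilde F_k(w) = \tfrac{1}{2}(w - \hat w_k)^2$, with $\hat w_k = \tfrac{1}{n}(\xi_k + \sum_i x_{k,i})$ as stated. Since $\bar w = \tfrac{1}{K}\sum_j \hat w_j$ is a constant (it does not depend on the optimization variable $w$ in $h_k$), the objective becomes the sum of two 1D quadratics:
\begin{equation*}
h_k(w) = \tfrac{1}{2}(w - \hat w_k)^2 + \tfrac{\lambda}{2}(w - \bar w)^2,
\end{equation*}
which is strongly convex (coefficient $1 + \lambda > 0$) and therefore has a unique minimizer.

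Next, I would write down the first-order optimality condition $h_k'(w) = (w - \hat w_k) + \lambda(w - \bar w) = 0$, which rearranges to
\begin{equation*}
\hat w_k(\lambda) = \frac{\hat w_k + \lambda \bar w}{1+\lambda}.
\end{equation*}
To express this in terms of $\hat w_k$ and the external average $\hat w_{\setminus k} = \tfrac{1}{K-1}\sum_{j\neq k}\hat w_j$, I would use the identity $\bar w = \tfrac{1}{K}\hat w_k + \tfrac{K-1}{K}\hat w_{\setminus k}$ and substitute to obtain
\begin{equation*}
\hat w_k(\lambda) = \left(\tfrac{1}{1+\lambda} + \tfrac{\lambda}{(1+\lambda)K}\right)\hat w_k + \tfrac{\lambda(K-1)}{(1+\lambda)K}\hat w_{\setminus k}.
\end{equation*}

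Finally, I would simplify the coefficient of $\hat w_k$ to $\tfrac{K+\lambda}{(1+\lambda)K} = \alpha$ and verify directly that the coefficient of $\hat w_{\setminus k}$ equals $1 - \alpha$, and note the sanity checks $\alpha = 1$ at $\lambda = 0$ (recovering local training) and $\alpha \to 1/K$ as $\lambda \to \infty$ (recovering the FedAvg/uniform estimator $\tfrac{1}{K}\hat w_k + \tfrac{K-1}{K}\hat w_{\setminus k} = \bar w$), matching the claimed range $\alpha \in (1/K, 1]$. There is no real obstacle here: the proof is essentially bookkeeping of a quadratic optimum, with the only point worth flagging being the justification that $\bar w$ is treated as fixed (either by the decoupled setup or, equivalently, by observing that in the coupled MR-MTL formulation the mean of personalized minimizers coincides with the mean of local estimators at the fixed point, since averaging $\hat w_k(\lambda) = (\hat w_k + \lambda \bar w_{\mathrm{opt}})/(1+\lambda)$ over $k$ yields $\bar w_{\mathrm{opt}} = \bar w$).
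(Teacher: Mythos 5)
Your proof is correct and follows essentially the same route as the paper's: both take the first-order condition of the strongly convex quadratic $h_k$ with $\bar w$ treated as fixed, solve for $\hat w_k(\lambda) = \frac{1}{1+\lambda}\hat w_k + \frac{\lambda}{1+\lambda}\bar w$, and rewrite $\bar w = \frac{1}{K}\hat w_k + \frac{K-1}{K}\hat w_{\setminus k}$ to obtain the coefficients $\alpha$ and $1-\alpha$. Your added remark justifying why $\bar w$ may be held fixed (via the fixed-point/decoupling observation) is a nice touch that the paper handles only implicitly by defining $\bar w$ as the average of the local estimators.
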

\vspace{-0.5em}
Note that the best $\lambda$ is always 0 for \textit{training} error (i.e.\ estimating the empirical mean of the local data $\{x_{k, i}\}$); our hope is that with some $\lambda > 0$, $\hat w_k(\lambda)$ yields a better \textit{generalization} error.

We now present the main takeaways.
At a high level, the basis of our analysis relies on expressing the true center $w_k$ in terms of $\hat w_k$ and $\hat w_{\setminus k}$ conditioned on the local datasets $\smash{\{ X_k \}_{k \in [K]}}$.
Let
\begin{align}
  \sigmaloc^2 \triangleq \frac{\sigma^2}{n} + \frac{\sigmadp^2}{ n^2}
\end{align}
denote the ``local variance'' of $\hat w_k$ around $w_k$ due to both data sampling and privacy noise.

\textbf{Behavior of \mrmtl at optimal $\lam^*$.}\enspace
We first derive the following lemma using Lemma 11 of~\cite{mahdavifar2017global}.
\begin{lemma} \label{lem:wk-as-hatwk-hatwkminus}
  Given $\hat w_k$, $\hat w_{\setminus k}$, and $\smash{\{ X_k \}_{k \in [K]}}$, we can express $w_k = \mu_k + \zeta_k$, where $ \zeta_k \sim \mathcal N(0, \sigma_w^2)$,
  \begin{align}
    \sigma_w^2 &\triangleq \left( \frac{1}{\sigmaloc^2}+\frac{K-1}{K \tau^{2}+ \sigmaloc^2}  \right)^{-1}  \text{ and} \quad
    \mu_k \triangleq \sigma_{w}^{2} \left( \frac{1}{\sigmaloc^2 } \cdot \hat{w}_{k}+\frac{K-1 }{K \tau^{2} + \sigmaloc^2} \cdot \hat{w}_{\setminus k} \right).
  \end{align}
\end{lemma}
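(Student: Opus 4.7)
The claim is a conjugate-Gaussian posterior calculation in a hierarchical model. The strategy is to treat $\hat w_k$ and $\hat w_{\setminus k}$ as two conditionally independent Gaussian ``observations'' of $w_k$ --- one direct, one indirect through the unknown meta-center $\theta$ --- and to apply standard Bayesian updating; Lemma 11 of~\cite{mahdavifar2017global} packages the resulting conditional mean and variance. I would first unpack the noise structure to confirm that for every silo $j$, $\hat w_j \mid w_j \sim \mathcal{N}(w_j, \sigmaloc^2)$: the sample mean $\bar X_j = w_j + \mathcal{N}(0, \sigma^2/n)$ contributes $\sigma^2/n$, and the Gaussian DP noise $\xi_j/n$ adds an independent $\sigmadp^2/n^2$, giving $\sigmaloc^2$ as stated. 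Conditioning on $\{X_k\}_{k \in [K]}$ enters the argument only through this aggregated per-silo variance.

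I would then turn $\hat w_{\setminus k}$ into an effective prior on $w_k$. Since $w_j = \theta + \mathcal{N}(0,\tau^2)$ i.i.d.\ for $j \neq k$, the marginal laws are $\hat w_j \mid \theta \sim \mathcal{N}(\theta, \tau^2 + \sigmaloc^2)$ and hence $\hat w_{\setminus k} \mid \theta \sim \mathcal{N}\bigl(\theta,\, (\tau^2 + \sigmaloc^2)/(K-1)\bigr)$. Placing an improper flat prior on the unknown $\theta$ and applying Bayes yields $\theta \mid \hat w_{\setminus k} \sim \mathcal{N}\bigl(\hat w_{\setminus k},\, (\tau^2 + \sigmaloc^2)/(K-1)\bigr)$, and convolving with $w_k = \theta + \mathcal{N}(0,\tau^2)$ gives $w_k \mid \hat w_{\setminus k} \sim \mathcal{N}\bigl(\hat w_{\setminus k},\, (K\tau^2 + \sigmaloc^2)/(K-1)\bigr)$.

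Combining this with the direct observation $\hat w_k \mid w_k \sim \mathcal{N}(w_k, \sigmaloc^2)$ via Gaussian conjugacy (using that $\hat w_k$ and $\hat w_{\setminus k}$ are conditionally independent given $w_k$) causes precisions to add: $1/\sigma_w^2 = 1/\sigmaloc^2 + (K-1)/(K\tau^2 + \sigmaloc^2)$, and the posterior mean is the precision-weighted average $\mu_k = \sigma_w^2\bigl(\hat w_k/\sigmaloc^2 + (K-1)\hat w_{\setminus k}/(K\tau^2 + \sigmaloc^2)\bigr)$, matching the lemma. Writing $w_k = \mu_k + \zeta_k$ with $\zeta_k \sim \mathcal{N}(0, \sigma_w^2)$ finishes the argument.

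The main subtlety --- and the part that warrants care in the write-up --- is the treatment of the unknown meta-center $\theta$. I would either use the improper-flat-prior route above or derive the result by computing the joint Gaussian covariance of $(w_k, \hat w_k, \hat w_{\setminus k})$ directly and reading off the conditional law, which is essentially what Lemma 11 of~\cite{mahdavifar2017global} does. Both routes yield identical formulas, and everything else reduces to routine Gaussian bookkeeping.
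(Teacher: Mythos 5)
Your proposal is correct and takes essentially the same route as the paper's proof: both reduce $\hat w_{\setminus k}$ to a single Gaussian observation of $w_k$ with variance $(K\tau^2 + \sigma_{\mathrm{loc}}^2)/(K-1)$ (your flat-prior-plus-convolution step for $\theta$ is the paper's ``by symmetry'' step in different clothing, and is precisely the non-informative-prior setup of Lemma 11 of the cited reference), and both then combine it with $\hat w_k \mid w_k \sim \mathcal N(w_k, \sigma_{\mathrm{loc}}^2)$ by inverse-variance weighting. The conditional-independence point you note --- that conditioning on $\{X_k\}_{k\in[K]}$ is what lets the two estimators be treated as independent observations of $w_k$ --- is exactly the Markov-blanket observation the paper emphasizes at the end of its proof.
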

\vspace{-0.5em}

\cref{lem:wk-as-hatwk-hatwkminus} expresses the unobserved true silo centers $w_k$ in terms of the (private) empirical estimators $\hat w_k$ and $\hat w_{\setminus k}$. This expression requires conditioning on the datasets $X_k$ as they form the Markov blankets of $\hat w_k$.
Combining \cref{lem:wk-lambda} and \cref{lem:wk-as-hatwk-hatwkminus} gives the optimal $\lam$.
\begin{theorem}[Optimal \mrmtl estimate] \label{thm:optimal-lambda}
  The best $\lam^*$ for the generalization error is given by
  \begin{align}
    \lambda^{*}= \argmin_{\lambda} \mathbb{E} \left[ \left(w_{k}-\hat{w}_{k}(\lambda)\right)^{2} \mid \hat{w}_{k}, \hat{w}_{\setminus k}, \{ X_k \}_{k \in [K]} \right]
    = \frac{1}{n \tau^2} \left( \sigma^2 + \frac{\sigmadp^2}{n} \right).
  \end{align}
\end{theorem}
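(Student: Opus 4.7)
The plan is to combine Lemmas~\ref{lem:wk-lambda} and~\ref{lem:wk-as-hatwk-hatwkminus} to reduce the minimization to a one-dimensional quadratic in the interpolation weight $\alpha$, and then invert the $\lambda \mapsto \alpha$ map to read off $\lambda^*$ in closed form.

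First, I would use Lemma~\ref{lem:wk-as-hatwk-hatwkminus} to rewrite $w_k = \mu_k + \zeta_k$ inside the conditional expectation. Because $\hat w_k(\lambda)$ and $\mu_k$ are both deterministic functions of the conditioning variables $(\hat w_k, \hat w_{\setminus k}, \{X_k\}_{k\in[K]})$, and $\zeta_k$ is zero-mean Gaussian with variance $\sigma_w^2$ independent of them, a standard bias--variance decomposition yields
\[
\mathbb{E}\!\left[(w_k - \hat w_k(\lambda))^2 \mid \hat w_k, \hat w_{\setminus k}, \{X_k\}_{k\in[K]}\right]
= \sigma_w^2 + \bigl(\mu_k - \hat w_k(\lambda)\bigr)^2,
\]
so that minimizing over $\lambda$ reduces to making $\hat w_k(\lambda)$ match $\mu_k$ as closely as possible.

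Second, I would observe that $\mu_k$ is itself a convex combination of $\hat w_k$ and $\hat w_{\setminus k}$. Setting $a \triangleq \sigma_w^2/\sigmaloc^2$ and using the identity $\sigma_w^{-2} = \sigmaloc^{-2} + (K-1)/(K\tau^2 + \sigmaloc^2)$ from Lemma~\ref{lem:wk-as-hatwk-hatwkminus}, the two weights defining $\mu_k$ sum to one, giving $\mu_k = a\,\hat w_k + (1-a)\,\hat w_{\setminus k}$. Combined with Lemma~\ref{lem:wk-lambda}'s expression $\hat w_k(\lambda) = \alpha\,\hat w_k + (1-\alpha)\,\hat w_{\setminus k}$ with $\alpha = \alpha(\lambda) = (K+\lambda)/((1+\lambda)K)$, the squared bias collapses to $(a-\alpha)^2(\hat w_k - \hat w_{\setminus k})^2$. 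This is minimized over $\lambda \ge 0$ precisely when $\alpha(\lambda) = a$, provided that value of $\alpha$ lies in the image $(1/K, 1]$ of the map, which it does since $a \in (1/K, 1]$ whenever $\tau, \sigmaloc > 0$.

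Third, I would solve $\alpha(\lambda) = a$ for $\lambda$. Writing $a$ as the single fraction $(K\tau^2 + \sigmaloc^2)/(K(\tau^2+\sigmaloc^2))$ and cross-multiplying with $(K+\lambda)/((1+\lambda)K)$, a common factor of $K-1$ cancels and the equation reduces to $\lambda\,\tau^2 = \sigmaloc^2$, i.e.\ $\lambda^* = \sigmaloc^2/\tau^2$. Substituting the definition $\sigmaloc^2 = \sigma^2/n + \sigmadp^2/n^2$ gives the claimed $\lambda^* = (\sigma^2 + \sigmadp^2/n)/(n\tau^2)$.

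The main obstacle is the algebraic bookkeeping in step two: verifying that the two weights appearing in $\mu_k$ add to $1$ (so that $\mu_k$ lies on the same affine line in $(\hat w_k, \hat w_{\setminus k})$ that $\hat w_k(\lambda)$ traces out as $\lambda$ varies) and that $a \in (1/K, 1]$ so the optimum is attained at an admissible $\lambda \ge 0$. Once those identifications are in place, both the minimization and the inversion $\alpha^{-1}(a)$ are routine one-line calculations.
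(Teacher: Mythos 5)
Your proposal is correct and follows essentially the same route as the paper's proof: both reduce the problem to matching the interpolation coefficient $\alpha(\lambda) = \frac{K+\lambda}{(1+\lambda)K}$ of $\hat w_k(\lambda)$ to the coefficient $\sigma_w^2/\sigmaloc^2$ of $\hat w_k$ in the conditional mean $\mu_k$ from \cref{lem:wk-as-hatwk-hatwkminus}, and then solve for $\lambda^*$. The only difference is that you explicitly justify the steps the paper leaves implicit—the bias--variance decomposition showing the MSE equals $\sigma_w^2 + (\mu_k - \hat w_k(\lambda))^2$, the check that the weights in $\mu_k$ sum to one so both estimators lie on the same affine line, and the check that the required $\alpha$ lies in the attainable range $(1/K, 1]$—which strengthens rather than changes the argument.
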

\vspace{-0.5em}
\cref{thm:optimal-lambda} suggests that there indeed exists an optimal point $\hat w(\lambda^*)$ on the personalization spectrum.
Moreover, $\lambda^*$ grows smoothly with stronger privacy ($\sigmadp^2 \to \infty$) to encourage silos to ``federate more'' with others. This was empirically observed in \cref{fig:mrmtl-bump}.
We now characterize the utility of $\hat w(\lambda^*)$.
\begin{corollary}[Optimal error with $\hat w(\lambda^*)$] \label{lem:optimal-error}
  The MSE of the optimal estimator $\hat w(\lambda^*)$ is given by
  \begin{align}
    \mathcal E^* \triangleq \mathbb{E} \left[ (w_{k}-\hat{w}_{k}(\lambda^*))^{2} \mid \hat{w}_{k}, \hat{w}_{\setminus k}, \{ X_k \}_{k \in [K]} \right] = \sigma_w^2 = \frac{\sigmaloc^2 (\sigmaloc^2 + K \tau^2) }{K(\sigmaloc^2 + \tau^2)}.
  \end{align}
  \vspace{-0.5em}
\end{corollary}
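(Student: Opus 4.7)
The plan is to exploit the Gaussian posterior structure already worked out in \cref{lem:wk-as-hatwk-hatwkminus}. That lemma identifies the conditional distribution of $w_k$ given $\hat w_k$, $\hat w_{\setminus k}$, and $\{X_k\}_{k \in [K]}$ as $\mathcal N(\mu_k, \sigma_w^2)$. Under squared-error loss the Bayes-optimal estimator of $w_k$ from these quantities is exactly the posterior mean $\mu_k$, and its conditional MSE is the posterior variance $\sigma_w^2$. Since \cref{thm:optimal-lambda} asserts that $\lam^*$ minimizes precisely this conditional MSE over the family $\{\hat w_k(\lam)\}$, it suffices to check that $\hat w_k(\lam^*) = \mu_k$; the first equality of the corollary then follows immediately from $\mathbb E[\zeta_k^2] = \sigma_w^2$.

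The coincidence $\hat w_k(\lam^*) = \mu_k$ reduces to matching two convex combinations of $\hat w_k$ and $\hat w_{\setminus k}$. From \cref{lem:wk-lambda}, the weight on $\hat w_k$ is $\alpha = (K+\lam^*) / ((1+\lam^*)K)$. Substituting $\lam^* = \sigmaloc^2/\tau^2$ from \cref{thm:optimal-lambda} (which uses $\sigmaloc^2 = \sigma^2/n + \sigmadp^2/n^2$) and clearing denominators by $\tau^2$ gives $\alpha = (K\tau^2 + \sigmaloc^2) / (K(\tau^2 + \sigmaloc^2))$. From the definition of $\mu_k$ in \cref{lem:wk-as-hatwk-hatwkminus}, the corresponding weight is $\sigma_w^2 / \sigmaloc^2$; combining the two terms in $1/\sigma_w^2$ over the common denominator $\sigmaloc^2 (K\tau^2 + \sigmaloc^2)$ yields the numerator $K(\tau^2 + \sigmaloc^2)$, so $\sigma_w^2 / \sigmaloc^2 = (K\tau^2 + \sigmaloc^2) / (K(\tau^2 + \sigmaloc^2))$, matching $\alpha$. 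The weights on $\hat w_{\setminus k}$ then agree automatically since both combinations sum to one.

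The remaining task is a one-line simplification: the same common-denominator computation used above delivers $\sigma_w^2 = \sigmaloc^2 (\sigmaloc^2 + K\tau^2) / (K (\sigmaloc^2 + \tau^2))$, which is exactly the claimed closed form.

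There is essentially no genuine obstacle here; the whole argument is a short corollary of combining the Gaussian conjugate posterior from \cref{lem:wk-as-hatwk-hatwkminus} with the expression for $\lam^*$ from \cref{thm:optimal-lambda}. The only place that demands care is the algebraic verification that the optimizer identified by the variational problem matches the posterior mean, so I would cross-check the definition of $\sigmaloc^2$ is used consistently throughout and double-check that $\alpha \in (1/K, 1]$ (guaranteed by $\lam^* \ge 0$) so that the convex-combination interpretation is valid.
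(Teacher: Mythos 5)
Your proposal is correct and takes essentially the same approach as the paper: the paper's proof of \cref{thm:optimal-lambda} performs the same coefficient matching to show that $\hat w_k(\lambda^*)$ coincides with the posterior mean $\mu_k$ from \cref{lem:wk-as-hatwk-hatwkminus} (i.e., it is the conditional MMSE estimator), so the error is the posterior variance $\sigma_w^2$, which simplifies algebraically to the stated closed form. Your coefficient-matching algebra and the simplification of $\sigma_w^2$ both check out, so there is nothing to add.
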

Note also that $\hat w(\lambda^*)$ is the MMSE estimator of $w_k$.
Using \cref{lem:optimal-error}, we can compare $\hat w_k(\lambda^*)$ against the endpoints of the personalization spectrum (local training and FedAvg) with the following propositions.
\begin{proposition}[Optimal error gap to local training] \label{thm:optimal-local-error-gap}
  Let $\mathcal E_{\mathrm{loc}} \triangleq \mathbb{E} \left[ (w_{k}-\hat{w}_{k} )^{2} \mid  X_k \right] = \sigmaloc^2$ be the error of the local estimate. Then, compared to the optimal estimator $\hat w(\lambda^*)$ (\cref{lem:optimal-error}), the local estimator incurs an additional error of
  \begin{align}
    { \Delta_\mathrm{loc} \triangleq \mathcal E_{\mathrm{loc}} - \mathcal E^* = \left( 1 - \frac{1}{K} \right)  \cdot \frac{\sigmaloc^4}{ \sigmaloc^2 + \tau^2} }.
  \end{align}
  \vspace{-0.25em}
\end{proposition}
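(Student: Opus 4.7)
The plan is to recognize that this proposition is essentially a direct algebraic consequence of Corollary~5.4, so the main work is a careful subtraction; the only substantive step is verifying the claimed identity $\mathcal{E}_{\mathrm{loc}} = \sigmaloc^2$ before doing the arithmetic.

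First I would establish $\mathcal{E}_{\mathrm{loc}} = \sigmaloc^2$. Unpacking the local estimator, $\hat w_k = \bar x_k + \xi_k / n$ where $\bar x_k \triangleq \tfrac{1}{n}\sum_i x_{k,i}$, and under the model $x_{k,i} = w_k + z_{k,i}$ with $z_{k,i}\sim\mathcal N(0,\sigma^2)$ together with a flat (improper) prior on the meta-center $\theta$, the posterior of $w_k$ given $X_k$ is $\mathcal N(\bar x_k, \sigma^2/n)$. Since $\xi_k \sim \mathcal N(0, \sigmadp^2)$ is independent of $X_k$ and of the posterior uncertainty in $w_k$, I would write
\begin{align*}
  w_k - \hat w_k = (w_k - \bar x_k) - \xi_k/n,
\end{align*}
and conclude $\mathbb E[(w_k - \hat w_k)^2 \mid X_k] = \sigma^2/n + \sigmadp^2/n^2 = \sigmaloc^2$ by independence of the two Gaussian summands. (Equivalently, one can verify the same identity by conditioning on $w_k$ as well and noting that the two randomness sources are the i.i.d.\ data noise and the DP noise.)

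Next I would invoke Corollary~5.4 to substitute $\mathcal E^{*} = \sigma_w^2 = \sigmaloc^2(\sigmaloc^2 + K\tau^2)/[K(\sigmaloc^2+\tau^2)]$ and compute
\begin{align*}
  \Delta_{\mathrm{loc}}
  = \sigmaloc^2 - \frac{\sigmaloc^2(\sigmaloc^2 + K\tau^2)}{K(\sigmaloc^2+\tau^2)}
  = \sigmaloc^2 \cdot \frac{K(\sigmaloc^2+\tau^2) - (\sigmaloc^2 + K\tau^2)}{K(\sigmaloc^2+\tau^2)}.
\end{align*}
The numerator telescopes as $K\sigmaloc^2 + K\tau^2 - \sigmaloc^2 - K\tau^2 = (K-1)\sigmaloc^2$, so the expression collapses to $(1-1/K)\cdot \sigmaloc^4/(\sigmaloc^2+\tau^2)$, which is the claimed identity.

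The main obstacle is not technical depth but bookkeeping: the result is essentially one line of algebra conditional on Corollary~5.4, and the only subtle point is justifying the conditional-variance identity $\mathcal E_{\mathrm{loc}} = \sigmaloc^2$ in a way that is consistent with the conditioning conventions used for $\mathcal E^{*}$ (namely, treating $w_k$ as having its posterior distribution given the observed datasets). Once that is pinned down, the proposition drops out immediately, and as a sanity check the gap vanishes in the expected limits: $\Delta_{\mathrm{loc}} \to 0$ as $\tau\to\infty$ (extreme heterogeneity makes local optimal) or as $K\to 1$ (no other silos to average with), and grows with $\sigmaloc^2$ (stronger privacy noise makes federation more valuable).
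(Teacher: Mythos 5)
Your proposal is correct and follows essentially the same route as the paper's proof: cite \cref{lem:optimal-error} for $\mathcal E^* = \sigma_w^2$, use the fact that $\hat w_k$ deviates from $w_k$ with variance $\sigmaloc^2$ (which the paper establishes as \cref{eq:wk-hat-sim-wk} in the proof of \cref{lem:wk-as-hatwk-hatwkminus} and simply cites as ``from earlier''), and subtract. Your explicit decomposition $w_k - \hat w_k = (w_k - \bar x_k) - \xi_k/n$ and the closing algebra are just a more detailed spelling-out of the same argument.
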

\begin{proposition}[Optimal error gap to FedAvg] \label{thm:optimal-global-error-gap}
  Let $\mathcal E_{\mathrm{fed}} \triangleq \mathbb{E} \left[ (w_k - \bar w)^{2} \mid \{X_k\}_{k \in [K]} \right]$ be the error under FedAvg.
  Then, compared to the optimal estimator $\hat w(\lambda^*)$ (\cref{lem:optimal-error}), the FedAvg estimator incurs an additional error of
  \begin{align}
    \Delta_\mathrm{fed} \triangleq \mathcal E_{\mathrm{fed}} - \mathcal E^* = \left( 1 - \frac{1}{K} \right)  \cdot \frac{ \tau^4}{\sigmaloc^2 + \tau^2}.
  \end{align}
  \vspace{-0.5em}
\end{proposition}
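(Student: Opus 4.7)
My plan is to compute $\mathcal{E}_{\mathrm{fed}}$ by a direct two-term decomposition of $\bar w - w_k$ into independent heterogeneity and aggregated noise pieces, and then subtract $\mathcal{E}^{*} = \sigma_w^{2}$ from \cref{lem:optimal-error} and simplify. The approach mirrors \cref{thm:optimal-local-error-gap} in spirit but with the roles of $\sigmaloc$ and $\tau$ swapped.

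First I would write each local estimator as $\hat w_j = w_j + \eta_j$, where $\eta_j \triangleq \frac{1}{n}\sum_i z_{j,i} + \xi_j/n \sim \mathcal N(0, \sigmaloc^{2})$ is i.i.d.\ across silos and independent of every true center (being composed only of within-silo sampling noise and DP noise). Introducing the unobserved average $\bar w_{\bullet} \triangleq \frac{1}{K}\sum_j w_j$, I then obtain the clean split $\bar w - w_k = (\bar w_{\bullet} - w_k) + \frac{1}{K}\sum_{j=1}^{K}\eta_j$, where the two summands are independent: the first depends only on $\{z_j \triangleq w_j - \theta\}_{j \in [K]}$, and the second only on the $\eta_j$'s.

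Next I would compute each variance separately. Using $w_j = \theta + z_j$ with $z_j \sim \mathcal N(0, \tau^{2})$ i.i.d., the heterogeneity term expands as $\bar w_{\bullet} - w_k = -\tfrac{K-1}{K}\, z_k + \tfrac{1}{K}\sum_{j \neq k} z_j$, which has variance $(K-1)^{2}\tau^{2}/K^{2} + (K-1)\tau^{2}/K^{2} = (1 - 1/K)\tau^{2}$; the noise term contributes variance $\sigmaloc^{2}/K$ by i.i.d.\ summation. Adding yields $\mathcal{E}_{\mathrm{fed}} = (1 - 1/K)\tau^{2} + \sigmaloc^{2}/K$. I would then subtract $\mathcal{E}^{*} = \sigmaloc^{2}(\sigmaloc^{2} + K\tau^{2})/(K(\sigmaloc^{2} + \tau^{2}))$, place both over the common denominator $K(\sigmaloc^{2} + \tau^{2})$, and observe that the $\sigmaloc^{4}$ and mixed $\sigmaloc^{2}\tau^{2}$ contributions cancel, leaving a numerator of exactly $(K-1)\tau^{4}$ and hence $\Delta_{\mathrm{fed}} = (1 - 1/K)\, \tau^{4} / (\sigmaloc^{2} + \tau^{2})$.

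There is no real obstacle here, just one subtlety to flag: $\bar w_{\bullet} - w_k$ is \emph{not} an average of $K$ i.i.d.\ Gaussians because $w_k$ enters $\bar w_{\bullet}$ itself with weight $1/K$, which is precisely why its variance is $(1 - 1/K)\tau^{2}$ rather than $\tau^{2}/K$. I would also include a brief remark, paralleling \cref{thm:optimal-local-error-gap}, that the conditioning on $\{X_k\}_{k \in [K]}$ in the statement is interpreted via averaging over data realizations so that the unconditional MSE computed above matches the claimed formula through the law of total variance (equivalently, one can read $\bar w$ as $\hat w_k(\lambda \to \infty)$ corresponding to $\alpha = 1/K$ in \cref{lem:wk-lambda} and evaluate the MSE-in-$\alpha$ formula at that endpoint).
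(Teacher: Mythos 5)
Your proposal is correct and takes essentially the same route as the paper's proof: the paper likewise computes $\mathcal E_{\mathrm{fed}} = \frac{\sigma_{\mathrm{loc}}^2 + (K-1)\tau^2}{K}$ by viewing $\bar w$ as a noisy observation of $w_k$ (identical to your $(1-\frac{1}{K})\tau^2 + \sigma_{\mathrm{loc}}^2/K$), then subtracts $\mathcal E^* = \sigma_w^2$ from \cref{lem:optimal-error} and rearranges. The only difference is that you spell out the heterogeneity-plus-noise variance decomposition and the conditioning caveat, both of which the paper leaves implicit.
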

Together,
\cref{thm:optimal-local-error-gap,thm:optimal-global-error-gap}
suggest that the effects of stronger privacy ($\sigmadp^2, \sigmaloc^2 \to \infty$) on how \mrmtl compares against the personalization endpoints are mixed, with the benefit of $\mrmtl$ \textit{increasing} against local training and \textit{diminishing} against FedAvg.
They also suggest that \mrmtl has an optimal utility advantage over both the endpoints when $\sigmaloc^2 \approx \tau^2$ and local training performs on par with FedAvg, and
the utility ``bump'' under privacy observed in \cref{fig:mrmtl-bump} can be viewed as a result of this balance.
It is worth noting that since the data variance $\sigma^2$ and heterogeneity $\tau^2$ are often fixed in practice, the freedom for silos to vary their privacy targets ($\eps$ and $\sigmadp^2$) makes the utility advantage of \mrmtl more flexible compared to non-private settings.

\textbf{Behavior of $\mrmtl$ as a function of $\lambda$.}\enspace
The above captures how $\mrmtl$ behaves at its optimum, but in \cref{fig:mrmtl-bump} we also observed that $\mrmtl$ has the desirable property that the utility cost from DP \textit{shrinks smoothly} with larger $\lambda$ (\cref{sec:method}).
\cref{lem:general-lambda-error} and \cref{thm:dp-error-gap} below provides a characterization.
\begin{lemma}[Error of $\hat w_k(\lambda)$] \label{lem:general-lambda-error}
  Let $\mathcal E(\lam) \triangleq \mathbb E\left[ (w_k - \hat w_k(\lambda))^2 \mid \hat{w}_{k}, \hat{w}_{\setminus k}, \{ X_k \}_{k \in [K]} \right]$ be the error of $\mrmtl$ as a function of $\lambda$. Then,
  \ifnum\neurips=1
  $\mathcal E(\lam)
  = \left( 1 - \frac{1}{K} \right) \frac{\sigmaloc^2 + \lambda^2 \tau^2}{(\lambda + 1)^2} + \frac{\sigmaloc^2}{K}$.
  \else
  \begin{align}
    \mathcal E(\lam) = \left( 1 - \frac{1}{K} \right) \frac{\sigmaloc^2 + \lambda^2 \tau^2}{(\lambda + 1)^2} + \frac{\sigmaloc^2}{K}.
  \end{align}
  \fi
  \vspace{-0.5em}
\end{lemma}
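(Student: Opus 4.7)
The plan is to substitute the explicit form of $\hat w_k(\lambda)$ from \cref{lem:wk-lambda} and decompose $w_k - \hat w_k(\lambda)$ into a sum of mutually independent zero-mean Gaussians whose variances add up to the claimed formula. First, I would note that $\hat w_k(\lambda) = \alpha \hat w_k + (1-\alpha)\hat w_{\setminus k}$ with $\alpha = (K+\lambda)/(K(1+\lambda))$ and $1-\alpha = \lambda(K-1)/(K(1+\lambda))$. Each private local estimator admits the decomposition $\hat w_j = w_j + \epsilon_j$, where $\epsilon_j = \frac{1}{n}\sum_i (x_{j,i} - w_j) + \xi_j/n$ bundles the within-silo sampling noise and the DP noise; by construction $\epsilon_j \sim \mathcal N(0, \sigmaloc^2)$, and the $\{\epsilon_j\}_j$ are mutually independent and independent of the centers $\{w_j\}_j$.

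Next I would collect terms to rewrite
\[
  w_k - \hat w_k(\lambda) \;=\; (1-\alpha)\Bigl(w_k - \tfrac{1}{K-1}{\textstyle\sum}_{j\neq k} w_j\Bigr) \;-\; \alpha\,\epsilon_k \;-\; \tfrac{1-\alpha}{K-1}{\textstyle\sum}_{j\neq k}\epsilon_j,
\]
which is a sum of three uncorrelated zero-mean Gaussians, since the first term lives in the $w$-space while the latter two live in disjoint pieces of the $\epsilon$-space. The unknown common center $\theta$ cancels out of the first group thanks to the difference structure, so its variance is $\tau^2 + \tau^2/(K-1) = K\tau^2/(K-1)$.

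Summing the three variance contributions gives $\mathcal E(\lambda) = (1-\alpha)^2 K\tau^2/(K-1) + \alpha^2 \sigmaloc^2 + (1-\alpha)^2 \sigmaloc^2/(K-1)$. At this point the remaining work is purely algebraic: substitute the expressions for $\alpha$ and $1-\alpha$, factor out $1/[K^2(1+\lambda)^2]$, and use the identity $(K+\lambda)^2 + \lambda^2(K-1) = K[(1+\lambda)^2 + (K-1)]$ to collapse the $\sigmaloc^2$ coefficients. This leaves $\sigmaloc^2/K + (1-1/K)(\sigmaloc^2 + \lambda^2\tau^2)/(1+\lambda)^2$ as claimed, and a quick sanity check at $\lambda = \lambda^* = \sigmaloc^2/\tau^2$ recovers the $\sigma_w^2$ from \cref{lem:optimal-error}.

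One brief remark on notation: the conditional form in the statement is consistent with the deterministic right-hand side because $\mathcal E(\lambda)$ can equivalently be read as a marginal MSE, which the above decomposition computes directly. Alternatively one could apply \cref{lem:wk-as-hatwk-hatwkminus} to write the conditional MSE as $\sigma_w^2 + (\mu_k - \hat w_k(\lambda))^2$ and then take a further expectation over $\hat w_k - \hat w_{\setminus k}$ (whose variance under the generative model is $K(\tau^2 + \sigmaloc^2)/(K-1)$); both routes produce the same expression. I expect the main obstacle to be not any single hard step but rather tracking the independence structure carefully so that the cross-terms vanish, after which the argument reduces to bookkeeping algebra.
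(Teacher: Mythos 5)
Your proposal is correct and follows essentially the same route as the paper's proof: both substitute the interpolation $\hat w_k(\lambda) = \alpha \hat w_k + (1-\alpha)\hat w_{\setminus k}$ from \cref{lem:wk-lambda} and sum the variances of independent zero-mean Gaussian error components, reaching the paper's intermediate expression $\alpha^2 \sigmaloc^2 + (1-\alpha)^2 \frac{K\tau^2 + \sigmaloc^2}{K-1}$ before the same closing algebra. Your only (cosmetic) difference is splitting the error into three independent pieces---center heterogeneity, own noise $\epsilon_k$, and external noise $\{\epsilon_j\}_{j\neq k}$---where the paper bundles the first and third into the single variance $\frac{K\tau^2 + \sigmaloc^2}{K-1}$ of $\hat w_{\setminus k}$ viewed as a noisy observation of $w_k$, as derived in the proof of \cref{lem:wk-as-hatwk-hatwkminus}.
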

Using \cref{lem:general-lambda-error} we can now characterize how $\lam$ affects the utility cost from DP (recall from \cref{fig:finetune-gap,fig:mrmtl-bump} that federation helps with noise reduction).
As a side note, \cref{lem:general-lambda-error} also suggests that $\mrmtl$'s utility as a function of $\lambda$ would have a quasi-concave shape, as was empirically observed in \cref{fig:mrmtl-bump}. This could potentially help make heuristic or automated search over $\lambda$ easier.
\begin{theorem}[Private utility gap] \label{thm:dp-error-gap}
  Let $\hat w_k(\lambda)$ and $\hat w_k^\mathrm{DP}(\lambda)$ be the non-private and private estimate of $w_k$ with $\sigmaloc^2 \gets \sigma^2 / n$ and $\sigmaloc^2 \gets \sigma^2 / n + \sigmadp^2 / n^2$, respectively.
  Let $\mathcal E(\lam)$ and $\mathcal E^\mathrm{DP}(\lam)$ be the error of $\hat w_k(\lambda)$ and $\hat w_k^\mathrm{DP}(\lambda)$ respectively as in \cref{lem:general-lambda-error}.
  Let $\Delta_\mathrm{DP}(\lambda) \triangleq \mathcal E^\mathrm{DP}(\lambda) - \mathcal E(\lam)$ be the utility cost due to privacy as a function of $\lam$. Then,
  \ifnum\neurips=1
  ${\Delta_\mathrm{DP}(\lambda) = \left( 1 - \frac{1}{K} \right) \frac{1}{(\lambda + 1)^2}  \frac{\sigmadp^2}{n^2} + \frac{\sigmadp^2}{Kn^2}}$.
  \else
  \begin{align}
    {\Delta_\mathrm{DP}(\lambda) = \left( 1 - \frac{1}{K} \right) \frac{1}{(\lambda + 1)^2}  \frac{\sigmadp^2}{n^2} + \frac{\sigmadp^2}{Kn^2}}.
  \end{align}
  \fi
  \vspace{-0.7em}
\end{theorem}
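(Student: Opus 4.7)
The plan is to obtain $\Delta_\mathrm{DP}(\lambda)$ directly by plugging the two specified values of $\sigmaloc^2$ into the closed-form expression for $\mathcal E(\lambda)$ provided by \cref{lem:general-lambda-error} and then subtracting. Since \cref{lem:general-lambda-error} already gives $\mathcal E(\lambda)$ as an affine function of $\sigmaloc^2$ (with $\tau^2, \lambda, K$ held fixed), this reduces \cref{thm:dp-error-gap} to a single-line calculation, and all the real work is already encapsulated in the Bayesian derivation behind \cref{lem:wk-as-hatwk-hatwkminus} and \cref{lem:general-lambda-error}.

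\textbf{Key steps.} First, I would rewrite \cref{lem:general-lambda-error} so that the dependence on $\sigmaloc^2$ is explicit, namely
\begin{align}
\mathcal E(\lambda) \;=\; \underbrace{\left(1-\tfrac{1}{K}\right)\frac{\lambda^2\tau^2}{(\lambda+1)^2}}_{\text{independent of }\sigmaloc^2} \;+\; \sigmaloc^2 \cdot \underbrace{\left[\left(1-\tfrac{1}{K}\right)\frac{1}{(\lambda+1)^2} + \frac{1}{K}\right]}_{\text{coefficient } C(\lambda,K)}.
\end{align}
Next, I would specialize this identity twice: once with the non-private local variance $\sigmaloc^2 = \sigma^2/n$ (giving $\mathcal E(\lambda)$), and once with the private local variance $\sigmaloc^2 = \sigma^2/n + \sigmadp^2/n^2$ (giving $\mathcal E^\mathrm{DP}(\lambda)$). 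Here I would briefly justify that swapping $\sigmaloc^2$ is legitimate: the setup preceding \cref{lem:general-lambda-error} treats $\sigmaloc^2$ as the variance of $\hat w_k$ around $w_k$ due to both sampling and DP noise, and in the non-private regime the Gaussian mechanism noise $\xi_k$ vanishes so only the $\sigma^2/n$ term remains.

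Subtracting the two specializations, the $\tau^2$ term and the $\sigma^2/n$ contributions cancel, leaving
\begin{align}
\Delta_\mathrm{DP}(\lambda) \;=\; \frac{\sigmadp^2}{n^2} \cdot C(\lambda,K) \;=\; \left(1-\frac{1}{K}\right)\frac{1}{(\lambda+1)^2}\frac{\sigmadp^2}{n^2} + \frac{\sigmadp^2}{K n^2},
\end{align}
which is exactly the claimed expression. I would also note briefly the two sanity-check limits that motivate the decomposition: as $\lambda \to 0$ (local training) both terms contribute and $\Delta_\mathrm{DP} \to \sigmadp^2/n^2$, matching the fully local noise variance; as $\lambda \to \infty$ (FedAvg-like regime) the first term vanishes and $\Delta_\mathrm{DP} \to \sigmadp^2/(Kn^2)$, capturing the $1/K$ variance reduction from averaging independent per-silo DP noises, consistent with the ``federation as noise reduction'' narrative of \cref{sec:method}.

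\textbf{Anticipated obstacle.} There is no genuine technical obstacle here beyond the careful bookkeeping of which terms in \cref{lem:general-lambda-error} depend on $\sigmaloc^2$; the entire content of the theorem is contained in \cref{lem:general-lambda-error}, and this result is essentially a corollary. The only subtle point to spell out is the justification that $\mathcal E(\lambda)$ and $\mathcal E^\mathrm{DP}(\lambda)$ differ \emph{only} through $\sigmaloc^2$ (same $\tau^2$, same $K$, same conditioning structure), so that taking the difference yields a clean expression linear in $\sigmadp^2/n^2$ rather than introducing cross terms.
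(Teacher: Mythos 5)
Your proposal is correct and matches the paper's own proof, which likewise obtains $\Delta_\mathrm{DP}(\lambda)$ directly from \cref{lem:general-lambda-error} by subtracting the common terms between the private ($\sigmaloc^2 = \sigma^2/n + \sigmadp^2/n^2$) and non-private ($\sigmaloc^2 = \sigma^2/n$) instantiations. Your explicit factoring of $\mathcal E(\lambda)$ as an affine function of $\sigmaloc^2$ and the limiting sanity checks at $\lambda \to 0$ and $\lambda \to \infty$ are just a slightly more spelled-out version of the same one-line argument.
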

\cref{thm:dp-error-gap} suggests that with a larger $\lam$, the utility cost from privacy can be smoothly mitigated by up to a factor of $K$, matching the empirical observation in \cref{fig:mrmtl-bump}.

\vspace{-0.4em}
\section{Discussions}
\label{sec:discussion}
\vspace{-0.4em}

\begin{figure}[t]
  \centering
  \includegraphics[width=\linewidth]{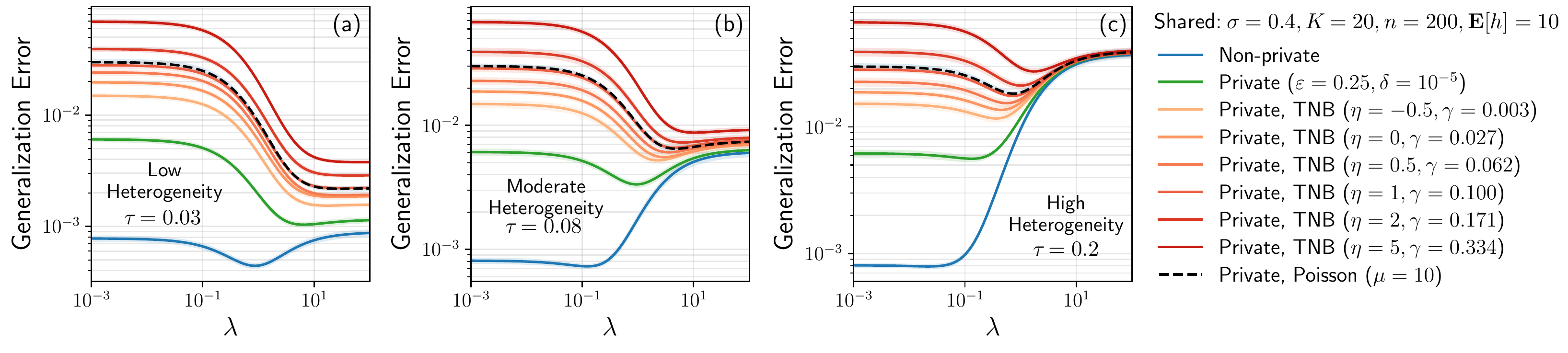}
  \vspace{-1.75em}
  \captionof{figure}{
    \textbf{Privacy costs of tuning $\lam$ on mean estimation} (setup follows \cref{sec:analysis}). Labels ``\textbf{Private}'' and ``\textbf{Non-Private}'' denote the errors of varying $\lam$ with and without silo-specific sample-level DP (the privacy cost of tuning $\lam$ is \textit{not} included). ``\textbf{Private, TNB/Poisson}''~\cite{papernot2022hyperparameter} denotes the same errors but accounts for the privacy cost of trying on average $\mathbf E[h] = 10$ values of $\lam$, with $h$ sampled from the truncated negative binomial distribution with parameters $\eta,\gamma$ / the Poisson distribution with parameter $\mu$ to arrive at the same $\mathbf E[h]$.
    To interpret, observe that the lowest points of ``\textbf{Private, TNB/Poisson}'' may be still higher than one of the endpoints of ``\textbf{Private}''.
  }
  \vspace{-1em}
  \label{fig:hparam-cost}
\end{figure}

In previous sections, we empirically and theoretically studied the benefits of the best personalization hyperparameter $\lam^*$ for $\mrmtl$, but it remains open as to how such $\lam^*$ may be obtained.
In this section, we take an honest look at the complications of deploying $\mrmtl$ through the lens of the \textit{privacy cost} of finding $\lam^*$.
There are in general several approaches: (1) a non-adaptive search (e.g.\ grid/random search~\cite{bergstra2012random}); (2) an adaptive search (e.g.\ grad student descent); or (3) an online estimation during training (e.g.\ \cite{van2018three,andrew2021differentially,pichapati2019adaclip}).
Here, we focus on approach (1) since it is generic to all personalization methods and is a setting for which we have the best privacy accounting tools~\cite{liu2019private,papernot2022hyperparameter} to our knowledge.
We defer technical details and further discussions to \cref{supp-sec:hparam-tuning}.
Note that while we focus on \mrmtl, our reasoning in principle extends to all personalization methods whose advantage depends on having the best hyperparameter(s).

Recall that for a typical tuning procedure, a baseline algorithm $M$ is executed $h$ times with different hyperparameters and the best result is recorded.
The work of~\cite{liu2019private,papernot2022hyperparameter} shows that, with a constant $h$, there exists $M$ that satisfies $(\eps, \delta = 0)$-DP where the output of tuning is not $(\tilde\eps, 0)$-DP for any $\tilde\eps < h\eps$, with analogous negative results for R\'enyi DP (thus also for $\delta > 0$). This implies that naive tuning (as done in practice) can incur a prohibitive privacy overhead and obliterates the utility advantage of \mrmtl ($\lam^*$) over local training/FedAvg.
Instead, by making $h$ \textit{random}, we can make $\tilde\eps$ \textit{constant} w.r.t.\ $h$ or at most $\tilde\eps \le O( \log \mathbf E[h])$~\cite{liu2019private,papernot2022hyperparameter}. However, using the simplified setting of mean estimation (\cref{sec:analysis}),
we find that even with this improved randomized protocol,
there exist scenarios (\cref{fig:hparam-cost}) where the realistic cost of trying a moderate $\mathbf E[h] = 10$ values of $\lam$ may significantly diminish, or even \textit{outweigh}, the utility advantage of $\lam^*$ over local training and FedAvg---that is, we might be better off by \textit{not privately tuning $\lam$ at all}.

The above has several important implications.
On the negative side, it suggests that the true efficacy of \mrmtl can be smaller in practice.
Moreover, it raises the broader open question of whether the emerging privacy-heterogeneity cost tradeoff is best balanced by model personalization, as many existing methods including \mrmtl inherently require at least one hyperparameter to specify ``how much to personalize'' for general utility improvements over local training and FedAvg.
Alternatively, the hyperparameter(s) may be estimated \textit{during} training (approach (3) in the first paragraph), though such procedures may not be general and/or scalable and may need to be tailored to the specific personalization method.
On the positive side, it is unclear whether the choice of $\lam$ can meaningfully leak privacy in practice. \mrmtl may also be viewed favorably as a strong baseline since it only needs one hyperparameter to attain its benefits, while other existing methods that require more tuning will incur even larger privacy costs from hyperparameter tuning.

\section{Concluding Remarks}

In this work, we revisit the application of differential privacy in cross-silo FL. We examine silo-specific sample-level DP as a more appropriate privacy notion for cross-silo FL, and we point out several meaningful ways in which it differs from client-level DP commonly studied under the cross-device setting, particularly when analyzing tensions between privacy, utility, and heterogeneity.
We explore and establish baselines under this privacy setting and identify desirable properties for a personalization method for balancing an emerging tradeoff between utility costs from privacy and heterogeneity.
We then analyze a simple, promising method (\mrmtl) and discuss key open questions for the area at large.
Some future directions include (1) extending the privacy model to cases where data subjects have multiple records across silos, (2) extending our theoretical characterization to deep learning cases or performing a large-scale empirical study, and (3) developing auto-tuning algorithms for personalization hyperparameters with minimal privacy overhead.

\textbf{Acknowledgements.}
We thank Sebastian Caldas, Tian Li, Yash Savani, Amrith Setlur, and Peter Kairouz for helpful discussions and feedback and Thomas Steinke for guidance on implementing privacy accounting for hyperparameter tuning~\cite{papernot2022hyperparameter}.
This work was supported in part by the NSF Grants IIS1838017 and IIS2145670, a Meta Faculty Award, an Apple Faculty Award, the Intel Private AI Center, and the CONIX Research Center. ZSW was supported in part by the NSF Award CNS2120667.
Any opinions, findings, and conclusions or recommendations expressed herein are those of the author(s) and do not necessarily reflect the NSF or any other funding agency.

\clearpage

{\small
\bibliographystyle{plain}
\bibliography{main.bib}
}

\clearpage

\appendix

\renewcommand{\contentsname}{Appendix}

\etocdepthtag.toc{mtappendix}
\etocsettagdepth{mtchapter}{none}
\etocsettagdepth{mtappendix}{subsection}

{\hypersetup{linkcolor=darkbluecolor}
\tableofcontents
}
\clearpage

\setcounter{figure}{0}
\renewcommand\thefigure{A\arabic{figure}}
\setcounter{algorithm}{0}
\renewcommand\thealgorithm{A\arabic{algorithm}}
\setcounter{table}{0}
\renewcommand\thetable{A\arabic{table}}

\section{Additional Background}
\label{supp-sec:background-extra}

\paragraph{R\'enyi Differential Privacy (RDP).}

In this work, we make use of a relaxation of different privacy known as R\'enyi Differential Privacy~\cite{mironov2017renyi} for tight privacy accounting.

\begin{definition}[R\'enyi Differential Privacy (RDP)~\cite{mironov2017renyi}]
    A randomized algorithm $ M: \mathcal X^n \to \mathcal Y$ is $(\alpha, \eps)$-RDP with order $\alpha > 1$ if for any adjacent datasets $x, x' \in \mathcal X^n$,
    \begin{align}
      D_{\alpha}(M(x) \| M(x')) \le \eps,
    \end{align}
    where $D_\alpha(P \| Q)$ is the R\'enyi divergence\footnote{The R\'enyi divergence at $\alpha = 1$ is defined as $D_{1}(P \| Q) \triangleq \underset{x \sim P}{\mathbb{E}}\left[\log \left(\frac{P(x)}{Q(x)}\right)\right]=\lim _{\alpha \rightarrow 1} D_{\alpha}(P \| Q)$, which is also the KL divergence.} between distributions $P$ and $Q$:
    \begin{align}
        D_\alpha(P \| Q) \triangleq \frac{1}{\alpha-1} \log \underset{x \sim P}{\mathbb{E}} \left[\left(\frac{P(x)}{Q(x)}\right)^{\alpha-1}\right].
      \end{align}
\end{definition}
Under R\'enyi DP, the privacy composition is simple: if every step of an algorithm satisfies $(\alpha, \eps)$-RDP, then over $T$ steps the algorithm satisfies $(\alpha, T\eps)$-RDP.
The following lemma from~\cite{bun2016concentrated, canonne2020discrete} provides a conversion from RDP to standard $(\eps, \delta)$-DP guarantees.
\begin{lemma}[Conversion from R\'enyi DP to approximate DP~\cite{bun2016concentrated, canonne2020discrete}]
  If a mechanism $M$ satisfies $(\alpha, \eps(\alpha))$-RDP, then for any $\delta > 0$, it also satisfies $(\eps(\delta), \delta)$-DP where
  \begin{align}
    \eps(\delta) = \inf _{\alpha>1} \eps(\alpha) +\frac{1}{\alpha-1} \log \left(\frac{1}{\alpha \delta} \right) +\log \left(1- \frac{1}{\alpha} \right).
  \end{align}
\end{lemma}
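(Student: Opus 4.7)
The plan is to prove the conversion via a tail bound on the privacy loss random variable (PLRV). For neighboring datasets $x, x'$, let $P$ and $Q$ denote the output distributions of $M(x)$ and $M(x')$, and let $L(Y) \triangleq \log(P(Y)/Q(Y))$ be the PLRV under $Y \sim P$. The $(\alpha, \eps(\alpha))$-RDP hypothesis rewrites as an MGF bound: $\mathbb{E}_{Y \sim P}[\exp((\alpha-1) L(Y))] \le \exp((\alpha-1)\eps(\alpha))$, since this expectation is exactly $\exp((\alpha-1) D_\alpha(P \| Q))$ by the definition of R\'enyi divergence. The goal is then to convert this moment control into a high-probability statement $\Pr_{Y \sim P}[L(Y) > \eps(\delta)] \le \delta$, at which point the standard reduction from PLRV tail to $(\eps, \delta)$-DP closes the argument: for any measurable $S$, decompose $P(S) = P(S \cap \{L \le \eps(\delta)\}) + P(S \cap \{L > \eps(\delta)\}) \le e^{\eps(\delta)} Q(S) + \delta$.

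Next I would obtain the tail bound. A naive Markov applied to $\exp((\alpha-1) L(Y))$ gives $\Pr[L > t] \le \exp(-(\alpha-1)(t - \eps(\alpha)))$, which recovers only the looser $\eps(\delta) = \eps(\alpha) + \log(1/\delta)/(\alpha-1)$ and misses the refinement in the statement. To recover the tighter expression containing the $\frac{1}{\alpha-1}\log\frac{1}{\alpha \delta}$ and $\log(1 - 1/\alpha)$ terms, I would use the sharper argument of Canonne--Kamath--Steinke: condition on the event $E_t = \{L > t\}$ and bound $\Pr[E_t]$ via H\"older's inequality applied to $\mathbf 1_{E_t}$ and $\exp((\alpha-1)L)$, yielding $\Pr[E_t] \le (\mathbb{E}[\exp((\alpha-1) L)])^{1/(\alpha-1)} \cdot \text{(correction factor)}$, where the correction factor arises from the complementary term and is what introduces the $(1 - 1/\alpha)$ and $1/\alpha$ factors after optimizing. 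Equivalently, one splits $P(S) = \int_S (P/Q) \, dQ$ into a part bounded directly by $e^t Q(S)$ and a tail part bounded by $\mathbb{E}[(P/Q - e^t)_+]$, and controls the latter using the $\alpha$-th moment of $P/Q$ via H\"older. Carefully tracking constants yields $\Pr_{Y \sim P}[L(Y) > \eps(\alpha) + \tfrac{1}{\alpha-1}\log\tfrac{1}{\alpha\delta} + \log(1 - 1/\alpha)] \le \delta$.

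Finally, since the above holds for every $\alpha > 1$ for which $(\alpha, \eps(\alpha))$-RDP holds, I would take the infimum over $\alpha > 1$ to obtain $\eps(\delta)$ as stated, and repeat the entire argument with the roles of $P$ and $Q$ swapped to obtain the symmetric bound (which holds automatically since RDP is symmetric in the neighboring datasets). Combining with the PLRV-to-DP reduction above completes the proof.

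The main obstacle I expect is the book-keeping in the tight application of H\"older (or equivalently, the optimized Markov) that produces the exact constants $\log(1/(\alpha\delta))$ and $\log(1 - 1/\alpha)$ rather than the looser $\log(1/\delta)$; a direct Markov only gives the latter, so the sharpened step requires splitting the integral at the correct threshold and taking care that the residual term, which comes from integrating $(P/Q - e^t)_+$ rather than $\mathbf 1\{P/Q > e^t\}$, is precisely what produces the $(1 - 1/\alpha)$ factor after the H\"older bound is matched.
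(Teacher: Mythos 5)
The paper itself does not prove this lemma: it is quoted in the appendix as imported background, with the proof living in the cited sources (the conversion with these exact constants is due to Canonne--Kamath--Steinke~\cite{canonne2020discrete}), so your proposal should be judged against that argument, and your \emph{second} route is indeed a correct reconstruction of it. Namely, for adjacent inputs with $P = M(x)$, $Q = M(x')$ and any measurable $S$, split $P(S) = \int_S \min\left(\frac{P}{Q}, e^{\eps'}\right) dQ + \int_S \left(\frac{P}{Q} - e^{\eps'}\right)_+ dQ \le e^{\eps'} Q(S) + \mathbb{E}_Q\left[\left(\frac{P}{Q} - e^{\eps'}\right)_+\right]$, and bound the hockey-stick term by the $\alpha$-th moment $\mathbb{E}_Q[(P/Q)^\alpha] \le e^{(\alpha-1)\eps(\alpha)}$. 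One mechanical correction: the step that produces the exact constants is \emph{not} H\"older but the optimized pointwise inequality $(x-a)_+ \le \frac{a^{1-\alpha}}{\alpha-1}\left(1-\frac{1}{\alpha}\right)^{\alpha} x^{\alpha}$ for all $x \ge 0$ (maximize $(x-a)x^{-\alpha}$ at $x = \frac{\alpha a}{\alpha - 1}$); integrating with $a = e^{\eps'}$, setting the bound equal to $\delta$, and solving gives exactly $\eps' = \eps(\alpha) + \frac{1}{\alpha-1}\log\frac{1}{\alpha\delta} + \log\left(1 - \frac{1}{\alpha}\right)$, after which the infimum over $\alpha$ and the quantification over both ordered adjacent pairs go through as you say. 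H\"older applied to $(P/Q)\,\mathbf{1}\{P/Q > a\}$ instead leaves a residual factor $Q(E)^{1-1/\alpha}$ and leads to a Balle--Barthe--Gaboardi--Hsu--Sato-type conversion with \emph{different} constants, so that branch of your hedge should be dropped.

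The genuine error is the tail-bound framing, and your word ``equivalently'' hides it: the intermediate claim $\Pr_{Y \sim P}\left[L(Y) > \eps(\alpha) + \frac{1}{\alpha-1}\log\frac{1}{\alpha\delta} + \log\left(1-\frac{1}{\alpha}\right)\right] \le \delta$ is \emph{false} in general. From the MGF constraint alone, the best tail bound is precisely your ``naive Markov'' $\Pr_P[L > t] \le e^{-(\alpha-1)(t - \eps(\alpha))}$, and it is essentially tight: take $\alpha = 2$ and two-point distributions with $P(1) = p$, $Q(1) = p e^{-t}$, so that $e^{D_2(P \| Q)} = p e^t + \frac{(1-p)^2}{1 - p e^{-t}} < p e^t + 1$. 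The stated threshold at $\alpha = 2$ is $t_0 = \eps + \log\frac{1}{4\delta}$, and $\delta e^{t_0} = e^{\eps}/4 < e^{\eps} - 1$ whenever $e^{\eps} > 4/3$; hence one can pick $p$ slightly above $\delta$ and $t$ slightly above $t_0$ while keeping $D_2(P\|Q) \le \eps$, yet $\Pr_P[L > t_0] \ge p > \delta$. The structural point is that the hockey-stick $\mathbb{E}_Q[(P/Q - e^{\eps'})_+]$ is strictly smaller than the tail $\Pr_P[L > \eps'] = \mathbb{E}_Q[(P/Q)\,\mathbf{1}\{P/Q > e^{\eps'}\}]$ because it integrates only the overshoot, and only the former can be driven down to $\delta$ at the stated $\eps'$. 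Fortunately $(\eps', \delta)$-DP requires only the former, so your proof is rescued by deleting the tail-bound detour entirely and running the split-integral decomposition directly; the rest of your sketch (the MGF identity for the privacy loss, symmetry via quantifying over ordered adjacent pairs, and the infimum over $\alpha$) is sound.
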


\textbf{Zero-Concentrated Differential Privacy (zCDP).}\enspace
A closely related privacy notion is zero-concentrated DP (zCDP~\cite{bun2016concentrated}), where $\rho$-zCDP is equivalent to satisfying $(\alpha, \rho \alpha)$-R\'enyi DP simultaneously for all orders $\alpha$. Thus, algorithms that satisfy zCDP guarantees are compatible with standard RDP accounting routines implemented in open-source libraries (e.g.\ TensorFlow Privacy~\cite{mcmahan2018general}).
In our work, we make use of zCDP and a related result for the Exponential Mechanism~\cite{DPorg-exponential-mechanism-bounded-range} for tight privacy composition when implementing private cluster selection for IFCA~\cite{ghosh2020efficient,mansour2020three} (see below and \cref{supp-sec:benchmark-methods}).

\textbf{Exponential Mechanism for Private Selection.}\enspace
The Exponential Mechanism (EM) is a standard algorithm for making private selection from a set of candidates based on their scores~\cite{mcsherry2007mechanism}. Specifically, there is a dataset $x \in \mathcal X^n$ requiring DP protection, and a scoring function $s: \mathcal X^n \times [G] \to \mathbb R$ that evaluates a set of candidates $g \in [G]$. We want to pick the candidate with the highest score (i.e.\ $\argmax_{g \in [G]} s(x, g)$) subject to $(\eps, 0)$-DP for neighboring datasets $x, x'$. The mechanism $M$ is defined by setting the probability of choosing any $g \in [G]$ as
\begin{align}
  \Pr[M(x) = g] = \frac{\exp\left(\frac{\eps}{2\Delta} \cdot s(x, g) \right)}{ \sum_{g' \in [G]} \exp \left( \frac{\eps}{2\Delta} \cdot s(x, g') \right) },
\end{align}
where $\Delta$ is the sensitivity of the scoring function. EM also satisfies $\frac{1}{8} \eps^2$-zCDP~\cite{DPorg-exponential-mechanism-bounded-range} and thus $(\alpha, \frac{\alpha}{8} \eps^2)$-RDP for all $\alpha$. A variant of EM is the Permute-and-Flip mechanism~\cite{mckenna2020permute}.

The Exponential Mechanism can be implemented as ``Report Noisy Max'' with Gumbel noise: we can add independent noises drawn from the Gumbel distribution with scale $\frac{2\Delta}{\eps}$ to the candidate scores $s(x, g)$ for all $g \in [G]$ and simply report the max noisy score. If the score function is a loss metric (where we want the minimum instead of the maximum), we can similarly implement ``Report Noisy Min'' by subtracting the Gumbel noises from the scores and report the minimum. The latter is used in our implementation for private cluster selection, where clients select clusters with lowest loss or error rate (i.e.\ $1 - \text{accuracy}$); see \cref{supp-sec:benchmark-methods} for more details.

\textbf{Privacy Budgeting.}\enspace
A typical accounting workflow, as used in our experiments, thus involves (1) composing the RDP guarantees of all private operations in the algorithm and (2) trying a list of $\alpha$ values that give the lowest $\eps$ for a target $\delta$ when converting back to $(\eps, \delta)$-DP that captures the overall privacy cost.
For SGD training, we also use existing results on privacy amplification via subsampling~\cite{mironov2019r,abadi2016deep}: if a gradient step is $(\eps, \delta)$ w.r.t.\ the dataset \textit{without} amplification and the gradient is computed with a minibatch (assumed to be a random sample) of batch size $b = q / n$ where $q$ is the sampling ratio and $n$ is the size of the dataset, then the privacy of the gradient step is amplified to $(O(q\eps), \delta)$-DP.
In a silo-specific sample-level DP setup, the size of the dataset $n_k$ at each silo thus determines the extent of the amplification, and thus even if silos target for the same $(\eps, \delta)$ sample-level DP, they may end up adding different amounts of noise when running DP-SGD (mentioned in the \textbf{Training Setup} paragraph of \cref{sec:baselines}). For this reason our experiments primarily focus on having the same privacy target for all silos.

\textbf{Heterogeneous Differential Privacy.}\enspace
A related privacy notion is \textit{heterogeneous DP}~\cite{alaggan2016heterogeneous, jorgensen2015conservative}, where each item within a dataset to be protected by DP may opt for a different $(\eps, \delta)$ target. Our setting primarily focuses on different $(\eps, \delta)$ values for disjoint datasets, and all items within a specific dataset share the same DP target.

See also \cref{supp-sec:hparam-tuning} for additional background relating to Section~\cref{sec:discussion} (private hyperparameter tuning).

\section{Additional Discussions}
\label{supp-sec:non-bijective}

\subsection{Limitations}

We discuss below the limitations of this work in addition to Section~\cref{sec:discussion}.

\textbf{When multiple records map to the same entity.}
In this paper we studied the application of silo-specific sample-level differential privacy in cross-silo federated learning. While this is an important initial step towards a more suitable privacy model for cross-silo FL (in contrast to the commonly studied client-level DP model), we assume that each entity that requires privacy protection has at most one record (training example) across silos (e.g.\ a single patient has one medical record at a hospital).

There are two characteristic cases where this assumption does not hold for all items in a silo:
\begin{itemize}[leftmargin=*]
  \item \textbf{Multiple records within a silo map to the same entity.} One example would be a student re-enrolling at the same school for multiple degree programs, thus creating multiple student records at the same silo. In such cases, the silo curator may need to carefully apply group privacy or other methods for ensuring entity-level privacy~\cite{levy2021learning} to protect the entity rather than its records.
  \item \textbf{Multiple records across silos map to the same entity.} One example would be a person having multiple credit cards at different banks.
  This case is more intricate as it is harder to precisely account for the DP guarantee for this entity without knowing (1) the silos in which this entity has appeared and (2) the specific privacy targets for each of those silos. In this case, the silos may cooperate to run \textit{private set intersection} (e.g. as considered in~\cite{lu2020multi}) to privately identify this scenario, but this would by itself come at a privacy cost.
\end{itemize}

These cases are interesting avenues for future research on private cross-silo learning.\footnote{The case of having individual records corresponding to multiple entities at once (e.g.\ one record for all family members) is slightly less interesting since sample-level privacy would protect all of the entities.}

\textbf{Extending the analysis to deep learning cases.}
In Section~\cref{sec:analysis} we use federated mean estimation as a simplified setting for analyzing the behavior of \mrmtl under silo-specific sample-level privacy.
While the analysis provides adequate insights into the empirical phenomena in \cref{fig:mrmtl-bump}, it is a simple model that does not consider the dynamic aspects of the learning settings, including (1) the Gaussian random walk component of the model updates due to DP noise applied over many training rounds, (2) the effect of communication frequency on the effect of noise reduction, (3) the concept of ``client drifts'' (as considered in~\cite{karimireddy2020scaffold}) as a result of heterogeneity and how it interfaces with the DP noises, and (4) how overparameterization may affect all of the above.

\paragraph{Caveats of cross-silo learning with very large local datasets.}
In contrast to cross-device federated learning, cross-silo federated learning is typically characterized by having a limited number of clients, each with a large local dataset.
The term ``large'' is relative because it describes the sufficiency of the datasets for fitting good local models of a \textit{specific class}; for example, 500 examples are likely sufficient to fit linear regression of 10 parameters, but very likely insufficient to learn a transformer~\cite{vaswani2017attention}.
In this sense, many FL problems in practice -- such as large commercial banks running regression on tabular data -- will in fact have local training to be the \textit{optimal} strategy, as long as there are sufficient local data and the data from other silos are not of the same local distribution.
In these cases, one should expect \mrmtl to opt for $\lam^* \approx 0$ as federated learning is not needed at all, and thus its advantages under privacy will also be minimal.

\subsection{Potential Negative Societal Impact}

Our work studies the empirical behaviors that arise when applying an alternative model of differential privacy to cross-silo federated learning, and we provide a strong baseline method (\mrmtl) that fares well in this setting. In this sense, our work sheds light on and facilitates the development of a previously underexplored area of differentially private federated learning.
However, because \mrmtl requires selecting a good regularization stength $\lam$, one potential negative impact is that users may excessively tune $\lam$ on a private dataset and inadvertently leak privacy via the choice of $\lam$ (perhaps qualitatively rather than quantitatively); for example, if a silo chose a large $\lam$ for better performance, then in principle its data would look somewhat more similar to the ``average'' of the silo datasets. Moreover, our privacy model requires that silos add their own independent noises for their own DP targets, and this requirement may not be followed correctly (either deliberately or inadvertently) to provide vacuous DP guarantees for people's data.

\section{Additional Experimental Details}
\label{supp-sec:exp-details}

\subsection{Datasets and Models}
\label{supp-sec:dataset-models}

\cref{supp-table:datasets} summarizes the datasets, tasks, and models considered in our experiments. In the following, we provide details on each.

\setlength{\tabcolsep}{3pt}
\begin{table}[h!]
\parbox{.95\linewidth}{
\centering
{
\scalebox{0.9}{
\begin{tabular}{@{}l  ccc @{\hspace{6mm}}  cc @{\hspace{6mm}} c @{}}
\toprule[\heavyrulewidth]
    \multirow{2}{*}{\textbf{Dataset}} & \multirow{2}{*}{\textbf{Task}} & \textbf{\# Clients} & \textbf{Input} & \multirow{2}{*}{\textbf{Min $n_k$}} & \multirow{2}{*}{\textbf{Max $n_k$}}  & \multirow{2}{*}{\textbf{Learner}} \\
     & & \textbf{(Silos)} & \textbf{Dim} &  &  &  \\
    \midrule
    Vehicle & Classification & 23 &  100 & 872 & 1933 & SVM \\
    School & Regression & 139  & 28  &  15  &  175    &  Linear  \\
    Google Glass (GLEAM) & Classification & 38  & 180  & 699  & 776  & SVM  \\
    Heterogenous CIFAR-10 & Classification & 30  & $32\times 32 \times 3$  & 1515  & 1839  &  ConvNet  \\
    Rotated \& Masked MNIST & Classification & 40  & $28\times 28 \times 1$ & 1500  & 1500  &  ConvNet  \\
    Subsampled ADNI & Regression & 9 & $32\times 32 \times 1$  & 45  & 2685  &  ConvNet  \\
    \bottomrule
\end{tabular}}
\vspace{0.5em}
\caption{
  Summary of datasets, tasks, and models for our empirical studies. $n_k$ denotes the number of training examples on client $k$.
  }
\label{supp-table:datasets}
}
}
\end{table}

\textbf{Vehicle~\cite{duarte2004vehicle}.}\enspace
The Vehicle Sensor dataset is a binary classification dataset containing $K=23$ data silos.
Each silo (sensor) has acoustic and seismic measurements for a road segment, with each data point being a 100-dimensional feature vector describing the measurements when a vehicle passes through the road segment.
The goal is to predict between two predetermined types of vehicles.
We use a train/test split of 75\%/25\% following previous work~\cite{smith2017federated}, yielding $872 \le n_k^\text{train} \le 1933$ training examples on each client.
We use simple linear SVMs for classification following~\cite{smith2017federated,li2021ditto}.
It is a suitable dataset for cross-silo FL because the number of silos $K$ is small while each silo has sufficient data to fit a good local model, as opposed to cross-device datasets such as FEMNIST~\cite{caldas2018leaf} where $K$ is large but each silo has little data to learn a useful model. Moreover, we can use tight privacy budgets due to reasonably large local datasets (in terms of sufficiency for fitting a good local model) and SVMs (which are relatively noise-tolerant since decision boundaries only depend on support vectors). The dataset is accessible from the original authors.\footnote{\url{https://web.archive.org/web/20110515133717/http://www.ece.wisc.edu:80/~sensit/}}

\textbf{School~\cite{goldstein1991multilevel,argyriou2008convex,zhou2011malsar}.}\enspace
The School dataset originated from the now-defunct Inner London Education Authority.\footnote{\url{https://en.wikipedia.org/wiki/Inner_London_Education_Authority}}
It is a regression dataset for predicting the exam scores of 15,362 students distributed across 139 secondary schools.
Each school has records for between 22 and 251 students, and each student is described by a 28-dimensional feature vector capturing attributions such as the school ranking, student birth year, and whether the school provided free meals.
We perform 80\%/20\% train/test split in \cref{fig:pareto-merged} (with $15 \le n_k^\text{train} \le 175$ training examples in each silo), and additionally consider 50\%/50\% and 20\%/80\% train/test split in \cref{supp-fig:pareto-school-t200}.
We use simple linear regression models following previous work (e.g.\ \cite{argyriou2008convex,zhou2011malsar,gong2012robust}) to predict student scores.
Like the Vehicle dataset, the School dataset is a natural cross-silo FL dataset with a limited number of clients $K$, each with roughly sufficient data to fit a reasonable local model. The dataset is available from~\cite{zhou2011malsar}.

\textbf{Google Glass Eating and Motion (GLEAM)~\cite{rahman2015unintrusive}.}\enspace
We also benchmark on the GLEAM dataset, a real-world head motion tracking dataset for binary classification. The motion data is collected with Google Glass, and the task is to classify the activity of the wearer (eating or not). There are in total $K=38$ silos (wearers) and 27800 data points, with each silo containing $699 \le n_k \le 776$ data points. Each data point is a 180-dimensional feature vector capturing head movement of the wearers.
Linear models yield reasonable utility on GLEAM and thus we use linear SVMs following previous work~\cite{smith2017federated}. Like Vehicle and School, this is a suitable dataset for cross-silo FL given a small $K$ and relatively large local datasets.

\textbf{Heterogeneous CIFAR-10 Dataset.}\enspace
We additionally evaluate on CIFAR-10~\cite{krizhevsky2009learning}, with heterogeneous client data split following previous work~\cite{t2020personalized,shamsian2021personalized} (based on the code provided by~\cite{shamsian2021personalized}). The dataset has $K=30$ clients (silos) in total, and data heterogeneity is generated with each client having a random number of samples from 5 randomly chosen classes out of the 10 classes. Each client has $1515 \le n_k \le 1839$ training examples. We use a simple convolutional network with the following layers: [Conv $3\times 3$ with 32 channels, ReLU, MaxPool $2\times 2$ with stride 2, Conv $3\times 3$ with 64 channels, ReLU, MaxPool $2\times 2$ stride 2, Linear]. No padding is used for convolutional layers.

\textbf{Rotated \& Masked MNIST.}\enspace
We adapt the original MNIST dataset~\cite{lecun1998gradient} to study the effect of structured heterogeneity on \mrmtl. For the 60000/10000 train/test images, we first perform a shuffle and then evenly separate them into $K = 40$ clients (silos), each with 1500/250 train/test images with roughly uniform distribution on the labels. We then randomly separate the silos into 4 groups of 10, and apply rotations of $\{0^{\circ}, 90^{\circ}, 180^{\circ}, 270^{\circ}\}$ to each group respectively; silos within the same group have the same rotations applied to the images, thus forming 4 natural silo clusters.
To add \textit{intra-cluster} heterogeneity, we then apply \textit{silo-specific} random masks of $2\times 2$ white patches; that is, all images in the same silo has the same mask, and no two silos have the same mask with very high probability. The random masks are akin to those considered in~\cite{he2021masked}.
The white patches of the random mask do not overlap, and the mask ratio is the probability of a patch being applied (so the specified percentage of masked area is an expectation).
Examples of generated images are shown in \cref{supp-fig:mnist-examples}.
These image transformations introduce two types of heterogeneity identified by~\cite{kairouz2021advances}: ``covariate shift'' (skew of feature distributions) and ``concept drift'' (same label, different features).
The model architecture is the same as the one used for heterogeneous CIFAR-10.

\begin{figure}
  \centering
  \includegraphics[width=\linewidth]{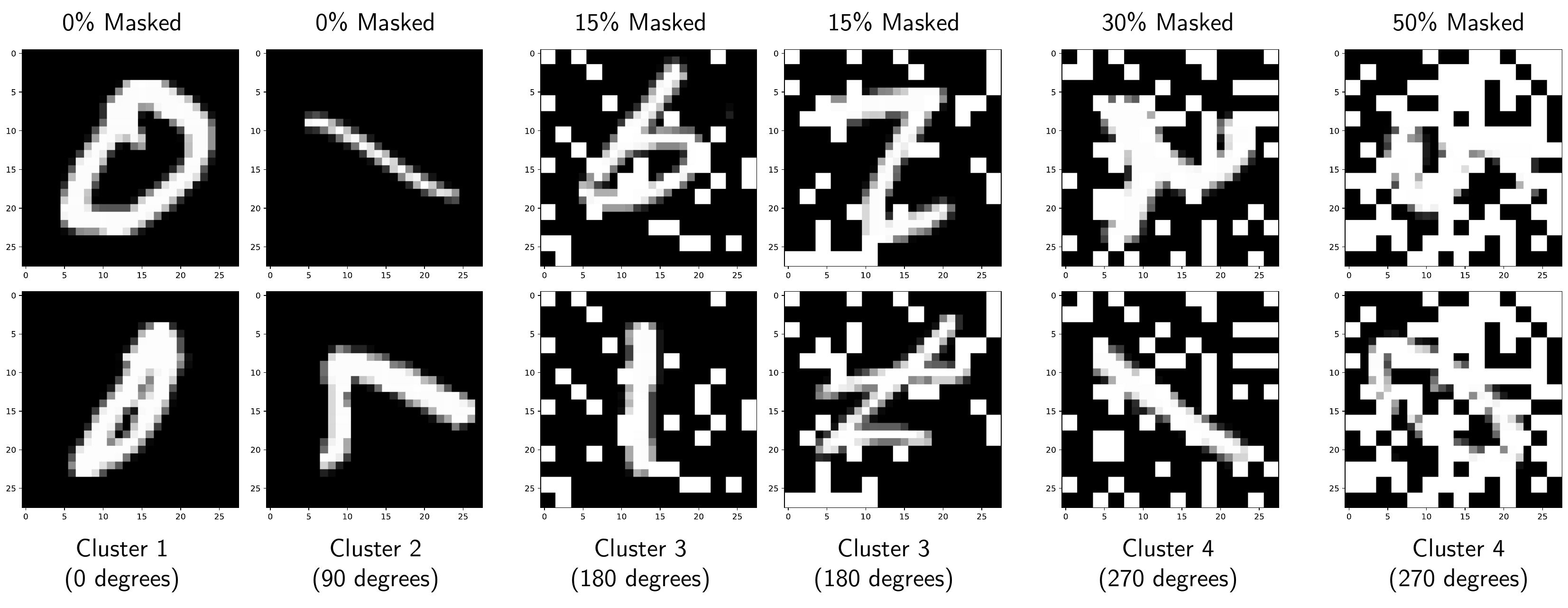}
  \caption{
    \textbf{Example images of the Rotated \& Masked MNIST dataset}.
    Each column corresponds to two images from the same (random) client from the specified cluster (thus they have the same client-specific random mask). Labels for each column from left to right: (0, 0), (1, 7), (9, 1), (2, 7), (2, 1), (5, 0).
  }
  \label{supp-fig:mnist-examples}
  \vspace{-0.5em}
\end{figure}

\textbf{Subsampled Alzheimer's Disease Neuroimaging Initiative (ADNI) Dataset~\cite{adni-dataset}.}\enspace
We additionally benchmark on the ADNI dataset, which is a real-world dataset containing brain PET scans of Alzheimer's disease patients, patients with mild cognitive impairment, and healthy people taken from multiple institutions~\cite{qu2021experimental}.
It is a regression dataset for predicting the SUVR value (a scalar ranged roughly between 0.8 and 2) from the PET scan images of a brain.
We simplified the full dataset for faster training by subsampling the axial slices generated for each brain PET scan (96 slices for scan), turning them into a gray scale image, downsampling them to size $32\times 32$, and randomly splitting them into a 75\%/25\% train/test sets.
There are in total $K=9$ silos containing a total of 11040 images; each silo corresponds to a different equipment that took the PET scans and contains $45 \le n_k^\text{train} \le 2685$ training examples.
See Fig.~4 of \cite{reith2020application} and Fig. 4 of~\cite{qu2021experimental} for sample images.
The model architecture is a simple convolutional network with the following layers: [Conv $5\times 5$ with 32 channels, ReLU, MaxPool $2\times 2$ with stride 2, Conv $5\times 5$ with 64 channels, ReLU, MaxPool $2\times 2$ stride 2, Linear]. No padding is used for convolutional layers.

\subsection{License/Usage Information for Datasets}

\textbf{Vehicle.}\enspace The Vehicle dataset was made publicly available by the original authors as a research dataset~\cite{duarte2004vehicle} and license information was unavailable. It has been subsequently used in many work (e.g.~\cite{smith2017federated}).

\textbf{School.}\enspace The original entity that collected the School dataset~\cite{goldstein1991multilevel} is defunct and license information was unavailable. The dataset has been made publicly available~\cite{zhou2011malsar} and used extensively in previous work (e.g.~\cite{zhou2011malsar,zhao2020efficient}).

\textbf{Google Glass (GLEAM).}\enspace The GLEAM dataset was made publicly available by the original authors and can be used for any non-commercial purposes. See this URL\footnote{\url{http://www.healthailab.org/data.html}} for license and usage information.

\textbf{Heterogeneous CIFAR-10.}\enspace The original CIFAR-10 dataset is available under the MIT license.

\textbf{Rotated \& Masked MNIST.}\enspace The original MNIST dataset is available under the CC BY-SA 3.0 license.

\textbf{Subsampled ADNI.}\enspace
As per the Data Use Agreement of the ADNI dataset:\footnote{\url{https://adni.loni.usc.edu/wp-content/uploads/how_to_apply/ADNI_Data_Use_Agreement.pdf}}
\begin{quote}
  Data used in preparation of this manuscript were obtained from the Alzheimer's Disease Neuroimaging Initiative (ADNI) database (adni.loni.usc.edu).
  As such, the investigators within the ADNI contributed to the design and implementation of ADNI and/or provided data but did not participate in analysis or writing of this manuscript. A complete listing of ADNI investigators can be found at this URL.\footnote{\url{https://adni.loni.usc.edu/wp-content/uploads/how_to_apply/ADNI_Acknowledgement_List.pdf}}
\end{quote}

The data sharing and publication policy of the ADNI dataset can be found at this URL.\footnote{\url{https://adni.loni.usc.edu/wp-content/uploads/how_to_apply/ADNI_DSP_Policy.pdf}}
Access to the dataset must be approved by ADNI.

\subsection{Benchmark Methods and Implementation}
\label{supp-sec:benchmark-methods}

We provide more details on our benchmark personalization methods below.

\textbf{Local finetuning}~\cite{wang2019federated,yu2020salvaging,cheng2021fine} is one of the simplest but most effective personalization methods: once clients obtain a shared model via federated training (e.g.\ FedAvg), they can personalize it with additional training steps over their local dataset. This simple strategy has been shown to work very well empirically~\cite{wang2019federated,yu2020salvaging,cheng2021fine}, with the work of~\cite{cheng2021fine} providing theoretical support that it can asymptotically achieve comparable performance to other more sophisticated methods. Moreover, in contrast to other local adaptation methods like distillation~\cite{yu2020salvaging}, local finetuning's privacy footprint under our privacy model can be easily controlled (by limiting the number of finetuning and total training steps) without qualitatives change in its behavior.
In our experiments, local finetuning is implemented as FedAvg followed by local training, each taking 50\% of the total number of training rounds to ensure an identical privacy budget as other baseline methods.

\textbf{Ditto}~\cite{li2021ditto} is the current state-of-the-art method for personalization with provable benefits of robustness and fairness. It is closely related to $\mrmtl$ because it similarly trains personalized models while $\ell^2$-regularizing them towards a global model, but it differs from \mrmtl in that its global model can be obtained by a standalone solver. In particular, when no privacy is added, Ditto's modularity allows it to perfectly interpolate between local and FedAvg training with its regularization strength $\lambda$.
In our experiments, we implement Ditto with the FedAvg solver and use a minimal number of iterations over the local datasets to avoid excessive privacy overhead.

\textbf{Mocha}~\cite{smith2017federated} is a multi-task learning framework tailored for federated settings. During training, it simultaneously learns the personalized models as well as a client-relationship matrix which can model both positive and negative client relationships (in contrast, clustering methods only focus on positive relationships).
One disadvantage of Mocha is that it applies to convex problems only. In particular, the original paper uses a dual formulation for efficient training, but for fair comparison and compatibility with privacy primitives (especially DP-SGD~\cite{song2013stochastic,bassily2014private,abadi2016deep}), we implemented Mocha in its primal form and trained it with SGD in our experiments. Similarly, we align Mocha to other baselines in terms of the number of training steps to prevent privacy overhead.

\textbf{IFCA}~\cite{ghosh2020efficient} (and the conceptually similar \textbf{HypCluster}~\cite{mansour2020three}) is a simple clustering framework proposed as an extension to FedAvg. In every round of IFCA training, the server sends $k$ models (cluster centroids) to all clients; each client locally evaluates them over its local training data and selects the one with the lowest loss. Each client then locally trains on the selected model and returns updates only for this model, along with its index. Clients that selected the same model indices can be viewed as belonging to the same cluster.
We use IFCA as the representative clustering method due to its performance, simplicity, and practicality.

The privacy overhead of IFCA comes in the form of \textit{private cluster selection}: when each client evaluates the incoming models (cluster centroids) on the local datasets, this procedure must be privatized as the selection itself may leak information about the dataset. Private selection can be implemented via the exponential mechanism~\cite{mcsherry2007mechanism} with sharp accounting via a bounded range analysis~\cite{DPorg-exponential-mechanism-bounded-range} (discussed in~\cref{supp-sec:background-extra}), but one must decide (e.g.\ as a privacy budget $\eps_\mathrm{select}$ for the same $\delta$) how to share the selection cost with DP-SGD under a fixed total privacy budget $\eps_\mathrm{total}$.

\textbf{Mitigating strategies for private cluster selection.}
In our experiments,
we observed that if private selection is implemented naively, it can incur a prohibitive privacy overhead and destroys the final utility (e.g.~\cref{fig:privacy-overhead}).
There are two important reasons: (1) unlike DP-SGD, no privacy amplification applies to private selection, and (2) the sensitivity of the training loss (which is used to select cluster centroids) is unbounded in general and must be clipped to a reasonable value (e.g. $\le 1$).
We propose two mitigation strategies:
\begin{enumerate}[leftmargin=*]
  \item \textbf{Use accuracy instead of loss.} For the cluster selection metric (i.e.\ the score function $s(x, g)$ where $x$ is the local dataset and $g$ is a particular cluster centroid), we use the error rate ($1 - \text{accuracy}$) instead of the loss (which is used by the original authors~\cite{ghosh2020efficient,mansour2020three}).
  The rationale is that accuracy is a low-sensitivity function, particularly in cross-silo settings: one can show that, by enumerating the cases where the differing example between the neighboring datasets $x$ and $x'$ are correctly/incorrectly classified under addition/removal/replacement notions of DP, the sensitivity $\Delta_\mathrm{acc}$ of $s$ is bounded as
  \begin{align}
    \Delta_\mathrm{acc} &= \max_{g} \max_{x, x'} \left| s(x, g) - s(x, g') \right|
                        \le \frac{1}{n-1}
  \end{align}
  where $n$ is the size of the local dataset.
  Since $n$ can be large in cross-silo settings, the sensitivity can be orders of magnitude smaller than that of the loss function.
  With small sensitivity, we heuristically set the per-round selection privacy budget to a very small $\eps_\text{select} = 0.03 \cdot \eps_\text{total}$.
  Despite this, however, the cost of private selection can still grow quickly over the entire training process and considerably eat into DP-SGD's privacy budget.
  \item \textbf{Truncate the number of cluster selection rounds.} Our second strategy is to simply run less rounds of cluster selection (e.g.\ to 10\% of total number of training rounds, as in \cref{fig:pareto-merged}).
  This follows from the empirical analysis of~\cite{ghosh2020efficient} as well as our own experimental observation that clusters tend to converge quickly, though in some cases, clusters may not fully converge within 10\% of training rounds.
\end{enumerate}

Despite these strategies, however, the private selection cost can still lead to a steep utility hit.
Note also that the new hyperparameter $\eps_\text{select}$ can be tuned; for a fixed total budget $\eps_\text{total}$, a small $\eps_\text{select}$ means the budget for DP-SGD $\eps_\text{train}$ is not affected by much, but the selected clusters would be very noisy and inaccurate; a large $\eps_\text{select}$ leads to less noisy clusters (and thus smaller intra-cluster heterogeneity), but DP-SGD will correspondingly use larger noise and hurt optimization.

\subsection{Training Settings}

\textbf{Optimizers.}\enspace
For simplicity of hyperparameter tuning and experimental controls, we use minibatch DP-SGD for client local training without local or server momentum for all experiments (in fact, FedAvgM~\cite{hsu2019measuring} and FedAdam~\cite{reddi2021adaptive} were not found to be helpful on Vehicle and School). While there are more efficient solvers for the Vehicle and School datasets since we are dealing with convex problems, we want compatibility with DP-SGD~\cite{song2013stochastic,bassily2014private,abadi2016deep} as well as tight privacy accounting with privacy amplification via subsampling (i.e.\ via minibatch training).

\textbf{Hyperparameters.}\enspace
For all datasets and all methods, we set silos to train for 1 local epoch in every round (except Ditto~\cite{li2021ditto} which takes 2 local epochs).
For Vehicle, GLEAM, School, Heterogeneous CIFAR-10, Rotated \& Masked MNIST, and subsampled ADNI respectively, the local batch size across all silos are fixed with $B = 64, 64, 32, 100, 100, 64$, and the clipping norm for per-example gradients are heuristically set to $c = 6, 6, 1, 8, 1, 0.5$.
Vehicle uses $T=400$ rounds for most experiments (except \cref{fig:finetune-gap} which trains for $T=200$ rounds); School, Google Glass, Heterogeneous CIFAR-10, and Rotated \& Masked MNIST use $T=200$; and ADNI uses $T=500$.

For multi-task learning methods ($\mrmtl$, Ditto~\cite{li2021ditto}, Mocha~\cite{smith2017federated}), we sweep the regularization strength across a grid of $\lambda \in [0.0001,0.001,0.003,0.01,0.03,0.1,0.3,1,3,10]$ to find the best $\lambda^*$ wherever applicable (e.g.\ \cref{fig:pareto-merged,fig:mrmtl-bump,supp-fig:pareto-school-t200}). To compensate for the change in the gradient magnitude, we also sweep different client learning rates across a grid of $\eta \in [0.001,0.003,0.01,0.03,0.1,0.3]$; for fair comparison, the same grid of $\eta$ is swept for methods that do not involve $\lambda$ (e.g.\ IFCA, local finetuning).\footnote{As discussed in Section~\cref{sec:discussion}, releasing the results from repeated experiments (possibly with different hyperparameters) may technically compromise the privacy of the datasets~\cite{liu2019private,papernot2022hyperparameter}. In our case, we are primarily interested in understanding the behaviors and tradeoffs that emerge under silo-specific sample-level DP, and thus for experimental control and ease of comparison we do not account for the privacy costs from hyperparameter tuning and repetitions. We also use only public datasets in our experiments.}
For \cref{fig:finetune-gap} and \cref{supp-fig:finetune-sweep}, the learning rate is fixed to $\eta = 0.01$. For all datasets, the chosen privacy parameter $\delta$ satisfy $\delta < n_k^{-1.1}$ where $n_k$ is the local training dataset size.

\textbf{Evaluation Protocol.}\enspace
For all datasets, we evaluate methods by the average test metric (accuracy or MSE) across the silos, weighted by their respective test sample counts. Weighted averaging allows the final test metric to reflect a method's performance over the individual test samples of the combined dataset across silos, thus fairer and more aligned (compared to uniform averaging of silo test metrics) with our privacy model where each test sample represents an entity requiring protection.

\subsection{Hardware}

Experiments for Vehicle, School, and Google Glass (GLEAM) are run on commodity CPUs and experiments for Heterogeneous CIFAR-10, Rotated \& Masked MNIST and ADNI are run on four NVIDIA RTX A6000 GPUs.

\subsection{Code}
\label{supp-sec:code}

Our experiments are implemented in Python with NumPy~\cite{harris2020array}, JAX~\cite{jax2018github} and Haiku~\cite{haiku2020github}. For private training, JAX is used to vectorize DP-SGD over per-example gradients~\cite{subramani2021enabling}.
Code is available at \url{https://github.com/kenziyuliu/private-cross-silo-fl}.

\section{Additional Algorithmic Details}
\label{supp-sec:algorithms}

\subsection{Mean-Regularized Multi-Task Learning (\mrmtl)}

\cref{alg:mr-mtl} describes the canonical instantiation of \mrmtl~\cite{t2020personalized}. Its key ingredient is the mean-regularization (Line 6 of \cref{alg:mr-mtl}) that forces the local personalized models $w_k$ to be close to their mean $\bar w$. Silo-specific sample-level privacy is added by privatizing the local gradients as in DP-SGD~\cite{song2013stochastic,bassily2014private,abadi2016deep}.

\textbf{Privacy of \mrmtl.}\enspace
Since the iterates of $\bar w^{(t)}$ are already differentially private (as they are the average of the private iterates $w_k^{(t)}$), the additional regularization term $\frac{\lambda}{2} \| w_k^{(t)} - \bar w_k^{(t-1)} \|_2^2$ (and hence \mrmtl) does not incur privacy overhead compared to local and FedAvg training.

\textbf{Weighted vs Unweighted Model Updates.}\enspace
It is customary to weigh the model updates from each client (silo) by its training example counts, as in the original FedAvg implementation~\cite{mcmahan2017communication}. However, note that under silo-specific sample-level privacy with \textit{addition/removal} notions of DP, the example counts on each silo may itself leak sensitive information (e.g.\ when a silo only has one record). This is less of an issue with the \textit{replacement} notion of DP, since neighboring datasets would have the same example counts. In our experiments we use weighted model updates and thus implicitly assume the replacement notion of DP. Nevertheless, the resulting privacy guarantees are constant factors apart and empirically we did not observe significant changes in performance when using unweighted aggregation.

    \begin{algorithm}[t]
      \caption{Mean-Regularized Multi-Task Learning}
      \label{alg:mr-mtl}
      \begin{algorithmic}[1]
        \State {\bfseries Input:} Initial client models $\{ w_k^{(0)} \}_{k\in [K]}$, and mean model $\bar w^{(0)}$.
        \For{training round $t = 1, ..., T$}
          \State Server sends $\bar w^{({t-1})}$ to every client.
          \For{client $k = 1, ..., K$ {\bfseries in parallel}}
            \State Set model iterate: $w_k^{(t)} \gets w_k^{(t-1)}$.
            \State \Longunderstack[l]{For every batch $(x, y)$, client updates $w_k^{(t)}$ using SGD or DP-SGD with batch loss\\ $\ell(w_k^{(t)}, x, y)$ and gradient}
            \[
              \nabla_{w_k^{(t)}} \left[ \ell\left(w_k^{(t)}, x, y \right) + \frac{\lambda}{2} \left\| w_k^{(t)} - \bar w^{(t-1)} \right\|_2^2 \right].
            \]  \label{alg:line:mrmtl-gradient}
            \State Return model update $\Delta_k^{(t)} = w_k^{(t)} - w_k^{(t - 1)}$.
          \EndFor
          \State Server updates $\bar w^{({t})} = \bar w^{({t-1})} + \frac{1}{K} \sum_{k=1}^{K} \Delta_k^{(t)}$ (may weigh $\Delta_k^{(t)}$ by client example counts).
        \EndFor
        \State {\bfseries Output:} Personalized models $w_k^{(T)}$ for all $i \in [K]$.
      \end{algorithmic}
    \end{algorithm}

\subsection{IFCA Preconditioning / Warm-Starting}
\label{supp-sec:ifca-precond}

In Section~\cref{sec:method}, we considered an extension to \mrmtl where training is ``warm-started'' by a small number of rounds of clustering (via IFCA~\cite{ghosh2020efficient,mansour2020three}), followed by \mrmtl training \textit{within} each formed cluster. Pseudocode for this procedure is shown in \cref{alg:ifca-precond}.

\textbf{Privacy of IFCA Preconditioning.}\enspace
Observe that as with \mrmtl, the gradient steps satisfy silo-specific sample-level DP regardless or whether the steps are made on the cluster models or the personalized models.
IFCA preconditioning introduces privacy overhead in the form of private cluster selection (Line 6 of \cref{alg:ifca-precond}; discussed in \cref{supp-sec:background-extra,supp-sec:benchmark-methods}), which splits the total privacy budget with DP-SGD. As a result the noise scale for DP-SGD would be increased and can be numerically determined.

\begin{algorithm}[t]
  \caption{IFCA-Preconditioned \mrmtl}
  \label{alg:ifca-precond}
  \begin{algorithmic}[1]
    \State {\bfseries Input:} Initial client models $\{ w_k^{(0)} \}_{k\in [K]}$, number of clusters $G$, initial cluster models $\{ \bar w^{(0)}_g \}_{g\in [G]}$, total number of rounds $T$, and the number of initial clustering rounds $T_\text{cluster}$.
    \State {\textcolor{OliveGreen}{\texttt{\# IFCA preconditioning rounds}}}
    \For{IFCA training round $t = 1, ..., T_\text{cluster}$}
      \State Server sends cluster models $\{ \bar w^{(t-1)}_g \}_{g\in [G]}$ to every client.
      \For{client $k = 1, ..., K$ {\bfseries in parallel}}
        \State \Longunderstack[l]{\textbf{Use Exponential Mechanism (\cref{supp-sec:background-extra}) to select best cluster} $\bar w^{(t-1)}_{g^*(k)}$ from\\$\{ \bar w^{(t-1)}_g \}$ with loss/error rate function $s$ and local dataset $(X_k, Y_k)$.}
        \label{alg:line:cluster-selection}
        \State Set model iterate: $w_k^{(t)} \gets \bar w^{(t-1)}_{g^*(k)}$.
        \State \Longunderstack[l]{For every batch $(x, y)$, client updates $w_k^{(t)}$ using SGD or DP-SGD with batch loss\\ $\ell(w_k^{(t)}, x, y)$ and gradient $\nabla_{w_k^{(t)}} \ell\left(w_k^{(t)}, x, y \right)$.}
        \State Return model update $\Delta_{k}^{(t)} = w_k^{(t)} - \bar w^{(t-1)}_{g^*(k)}$ and selected cluster index $g^*(k)$.
      \EndFor
      \For{each cluster $g \in [G]$}
        \State \Longunderstack[l]{Server applies (weighted) model updates to cluster $g$ with the associated client indices:}
        \begin{align}
          \bar w^{({t})}_g = \bar w^{({t-1})}_g + \frac{1}{|\mathcal K_g|} \sum_{k \in \mathcal K_g} \Delta_{k}^{(t)}, \quad \text{where } \mathcal K_g \equiv \{ k \in [K] \mid g^*(k) = g \}.
        \end{align}
      \EndFor
    \EndFor

    \State {\textcolor{OliveGreen}{\texttt{\# \mrmtl rounds with regularization towards frozen cluster centroids}}}
    \For{\mrmtl training round $t = T_\text{cluster} + 1, ..., T$}
      \State Server sends cluster models $\{ \bar w^{(t-1)}_g \}_{g\in [G]}$ to every client.
      \For{client $k = 1, ..., K$ {\bfseries in parallel}}
        \State \textbf{Client retrieves the last selected cluster centroid} $\bar w^{(t-1)}_{g^*(k)}$.
        \State Set model iterate: $w_k^{(t)} \gets w_k^{(t-1)}$ (\textbf{using the personalized model from last round}).
        \State \Longunderstack[l]{For every batch $(x, y)$, client updates $w_k^{(t)}$ using SGD or DP-SGD~\cite{song2013stochastic,bassily2014private,abadi2016deep} with batch loss\\ $\ell(w_k^{(t)}, x, y)$ and gradient}
        \[
          \nabla_{w_k^{(t)}} \left[ \ell\left(w_k^{(t)}, x, y \right) + \frac{\lambda}{2} \left\| w_k^{(t)} - \bar w^{(t-1)}_{g^*(k)} \right\|_2^2 \right].
        \]
        \State Return model update $\Delta_{k}^{(t)} = w_k^{(t)} - \bar w^{(t-1)}_{g^*(k)}$ and the cluster index $g^*$.
      \EndFor
      \For{each cluster $g \in [G]$}
        \State \Longunderstack[l]{Server applies (weighted) model updates to cluster $g$ with the associated client indices:}
        \begin{align}
          \bar w^{({t})}_g = \bar w^{({t-1})}_g + \frac{1}{|\mathcal K_g|} \sum_{k \in \mathcal K_g} \Delta_{k}^{(t)}, \quad \text{where } \mathcal K_g \equiv \{ k \in [K] \mid g^*(k) = g \}.
        \end{align}
      \EndFor
    \EndFor
    \State {\bfseries Output:} Personalized models $w_k^{(T)}$ for all $i \in [K]$.
  \end{algorithmic}
\end{algorithm}

\clearpage

\section{Additional Analysis Details}
\label{supp-sec:proofs}

In this section we provide additional details for the analysis presented in Section~\cref{sec:analysis}. We also extend the analysis to the case with varying $n$, $\sigma$, $\sigma_{\mathrm{DP}}$ for each silo in \cref{supp-sec:analysis-varying-silo-params}.

\subsection{Notations}

Unless otherwise specified, we used the following notations throughout the analysis:
\begin{itemize}[leftmargin=20pt]
  \item $K$ denotes the total number of silos (clients) with indices $k \in [K]$;
  \item $n$ denotes the number of examples on each silo;
  \item $w_k$ denotes the true center of silo $k$'s data distribution;
  \item $X_k \equiv \{ x_{k, i} \}_{i \in [n]}$ denotes the local dataset with $n$ data points;
  \item $\hat w_k$ denotes the best local estimator of $w_k$;
  \item $\bar w$ denotes the average of local estimators;
  \item $\hat w_{\setminus k} \triangleq \frac{1}{K-1} \sum_{j \neq k, j \in [K]} \hat w_j$ denotes the external average estimators from the perspective of $k$;
  \item $\sigma^2$ denotes the sampling variance of the local data $X_k$;
  \item $\tau^2$ denotes the sampling variance of the local data centers $w_k$ (hence a measure of data heterogeneity across silos); and
  \item $\sigmadp^2$ denotes the DP noise variance on each silo to satisfy silo-specific sample-level privacy.
\end{itemize}

\subsection{\mrmtl~Formulation}

The general formulation of the mean-regularized MTL objective may be expressed as
\begin{align} \label{supp:mrmtl-objective}
    \underset{w_k, k\in [K]}{\min}  \frac{1}{K} \sum_{k=1}^K h_k(w_k)  \quad \text{with} \quad h_k(w) \triangleq F_k(w) + \frac{\lambda}{2} \| w - \bar w \|_2^2,
\end{align}
where $F_k(\cdot)$ is the local objective for client $k$, $\bar w \triangleq \frac{1}{K} \sum_{k=1}^K w_k$ is the average model, and $\lambda \ge 0$ is the regularization strength.
A larger $\lambda$ enforces the models to be closer to each other, and $\lambda = 0$ reduces the problem to local training.
In particular, unlike Ditto~\cite{li2021ditto}, $\mrmtl$ may \textit{not} recover FedAvg~\cite{mcmahan2017communication} under SGD as $\lambda \to \infty$, since $\lambda$ essentially changes the ratio between the local objective gradients and the regularization gradients.
For the purposes of our analysis, we assume $F_k$ is strongly convex for all $k \in [K]$.

\subsection{Assumptions}

Our characterization of $\mrmtl$ on makes the following simplifying assumptions.
\begin{assumption} \label{assp:private-scalar-same-params}
  All clients (silos) have the same number of data points $n$, data sampling variance $\sigma^2$, and DP noise variance $\sigmadp^2$.
\end{assumption}
\begin{assumption}  \label{assp:private-scalar-no-clip}
  A sufficiently large clipping $c$ can be selected such that $\| x_{k,i}\|_2 \le c$ with high probability.
\end{assumption}

Note that \cref{assp:private-scalar-same-params} primarily serves to make results cleaner and easily interpretable (see \cref{supp-sec:analysis-varying-silo-params} for extensions). \cref{assp:private-scalar-no-clip} is mild since Gaussians have strong tail decay.

\subsection{Omitted Details and Proofs}

\paragraph{Lemma 11 of \cite{mahdavifar2017global}.}  Our analysis (particularly \cref{lem:wk-as-hatwk-hatwkminus}) makes use of the following lemma to determine the posterior of an unknown parameter given several independent Gaussian observations.
\begin{lemma}[Lemma 11 of \cite{mahdavifar2017global}] \label{lem:ivw-lemma}
  Let $\theta \in \mathbb R$ be a constant (non-informative prior).
  Let $\{\phi_k \triangleq \theta + z_k \}_{k\in [K]}$ with $z_k \sim \mathcal N(0, \sigma_k^2)$ be $m$ independent noisy observations of $\theta$ with variances $\{\sigma_k^2 \}_{k\in [K]}$.
  Then, conditioned on $\{\phi_k \}_{k\in [K]}$, with $\sigma_\theta^2 \triangleq (\sum_{k \in [K]} 1 / \sigma_k^2 )^{-1}$, we have
  \begin{align}
      \theta = \sigma_{\theta}^{2} \sum_{k \in[K]} \frac{\phi_{k}}{\sigma_{k}^{2}} + z, \text{ where } z\sim \mathcal N(0, \sigma_\theta^2 ).
  \end{align}
\end{lemma}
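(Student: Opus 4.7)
The plan is to derive the stated posterior of $\theta$ via standard conjugate Bayesian updating. Interpreting ``non-informative prior'' as an improper uniform density on $\mathbb{R}$, independence of the observations $\phi_k$ implies $p(\phi_1, \ldots, \phi_K \mid \theta) = \prod_{k \in [K]} \mathcal N(\phi_k; \theta, \sigma_k^2)$, so Bayes' rule gives
\begin{align}
  p(\theta \mid \phi_1, \ldots, \phi_K) \;\propto\; \prod_{k \in [K]} \exp\!\Bigl( -\tfrac{(\phi_k - \theta)^2}{2\sigma_k^2} \Bigr).
\end{align}

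The key computation is to complete the square in $\theta$ inside the exponent. Expanding term-by-term and collecting powers of $\theta$ yields
\begin{align}
  \sum_{k \in [K]} \frac{(\phi_k - \theta)^2}{\sigma_k^2} \;=\; \theta^2 \sum_{k \in [K]} \frac{1}{\sigma_k^2} \;-\; 2\theta \sum_{k \in [K]} \frac{\phi_k}{\sigma_k^2} \;+\; C,
\end{align}
where $C$ does not depend on $\theta$. Recognizing $\sum_k 1/\sigma_k^2 = 1/\sigma_\theta^2$, this rearranges to $\tfrac{1}{\sigma_\theta^2}\bigl(\theta - \sigma_\theta^2 \sum_k \phi_k/\sigma_k^2\bigr)^2 + C'$, which identifies the posterior as $\mathcal N\bigl(\sigma_\theta^2 \sum_k \phi_k/\sigma_k^2,\; \sigma_\theta^2\bigr)$. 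Writing $\theta$ as its posterior mean plus mean-zero Gaussian noise yields precisely the claimed decomposition $\theta = \sigma_\theta^2 \sum_k \phi_k/\sigma_k^2 + z$ with $z \sim \mathcal N(0, \sigma_\theta^2)$.

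The main subtlety I anticipate is the measure-theoretic status of the improper uniform prior, which is not a valid probability density. I would make the argument rigorous by first placing a proper Gaussian prior $\theta \sim \mathcal N(\mu_0, \sigma_0^2)$, redoing the square-completion with the extra $(\theta - \mu_0)^2 / \sigma_0^2$ contribution to obtain a proper Gaussian posterior with variance $\sigma_\mathrm{post}^2 = (1/\sigma_0^2 + \sum_k 1/\sigma_k^2)^{-1}$ and mean $\sigma_\mathrm{post}^2 \bigl(\mu_0/\sigma_0^2 + \sum_k \phi_k/\sigma_k^2\bigr)$, and then letting $\sigma_0 \to \infty$. The prior-dependent contributions to both the mean and variance vanish in this limit and exactly recover the IVW expression, giving a clean derivation of the form needed downstream in \cref{lem:wk-as-hatwk-hatwkminus}.
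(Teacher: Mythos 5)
Your derivation is correct, but note that the paper itself contains no proof of this statement: it is imported verbatim as Lemma~11 of \cite{mahdavifar2017global}, and the appendix around \cref{lem:ivw-lemma} only explains how the result is used (namely, to express $w_k$ in terms of $\hat w_k$ and $\hat w_{\setminus k}$ in \cref{lem:wk-as-hatwk-hatwkminus}). So there is no in-paper argument to compare against; what you have supplied is the missing self-contained derivation. Your route --- conjugate Gaussian updating under an improper uniform prior, completing the square in $\theta$ to identify the posterior as $\mathcal N\bigl(\sigma_\theta^2 \sum_{k} \phi_k/\sigma_k^2,\; \sigma_\theta^2\bigr)$ --- is the standard inverse-variance-weighting computation, and the algebra checks out: the quadratic coefficient $\sum_k 1/\sigma_k^2$ is the posterior precision and the cross term yields the posterior mean, which is exactly the claimed decomposition of $\theta$ into its conditional mean plus $\mathcal N(0,\sigma_\theta^2)$ noise. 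Your rigorization of the ``non-informative prior'' is also the right move: with a proper $\mathcal N(\mu_0,\sigma_0^2)$ prior the posterior is Gaussian with precision $1/\sigma_0^2 + \sum_k 1/\sigma_k^2$ and mean $\sigma_{\mathrm{post}}^2\bigl(\mu_0/\sigma_0^2 + \sum_k \phi_k/\sigma_k^2\bigr)$, and both prior-dependent contributions vanish as $\sigma_0 \to \infty$, recovering the stated formula; this is precisely the sense in which the lemma's phrase ``constant (non-informative prior)'' should be read. One cosmetic remark: the lemma statement says ``$m$ independent noisy observations'' where it means $K$; your proof correctly works with $K$ throughout, matching how the lemma is actually invoked downstream.
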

This lemma allows us to express the local true centers $w_k$ conditioned on the local empirical estimates $\hat w_k$, the external empirical estimates $\hat w_{\setminus k}$, and the local datasets $\{ X_k \}_{k \in [K]}$.

Below we present omitted proofs for the main results presented in Section~\cref{sec:analysis}. We restate the lemmas and theorems for convenience.

\textbf{\cref{lem:wk-lambda}}. The minimizer of the silo-specific objective $h_k(w)$ (see \cref{eq:mrmtl-objective}, \cref{supp:mrmtl-objective}) is
\begin{align}
  \hat w_k(\lambda) = \alpha \cdot \hat w_k + (1 - \alpha) \cdot \hat w_{\setminus k},
  \quad \text{where} \quad \alpha \triangleq \frac{K+\lambda}{(1+\lambda) K} \in (1/K, 1].
\end{align}
\begin{proof}[Proof of \cref{lem:wk-lambda}.]
  First note that $\bar w$ is independent of $\lambda$. Since we assume $F_k$ is convex, $h_k$ is also convex, and the proof follows from taking the derivative of $h_k$ and setting it to 0:
  \begin{align}
    \frac{\partial h_k(w)}{\partial w}
        &= \frac{1}{n} \sum_{i=1}^n (w - x_{k, i}) + \lambda (w - \bar w) = (w - \hat w_k) + \lambda (w - \bar w), \\
    \hat w_k(\lambda)
        &= \frac{1}{1 + \lambda} \hat w_k + \frac{\lambda}{1 + \lambda} \bar w = \frac{K+\lambda}{(1+\lambda) K} \cdot \hat{w}_{k} + \frac{\lambda (K-1)}{(1+\lambda) K} \hat{w}_{\setminus k}.
  \end{align}

  Note here that in the non-private case, the local estimator is given by the empirical average: $\hat w_k = \frac{1}{n} \sum_{i=1}^n x_{k, i}$. In the private case, under \cref{assp:private-scalar-no-clip}, the local estimator is $\hat w_k = \frac{1}{n} \left(\xi_k + \sum_{i = 1}^n x_{k, i} \right)$ where $\xi_k \sim \mathcal N(0, \sigmadp^2)$ is the one-shot privacy noise, and we can similarly arrive at the same $\hat w_k(\lambda)$ expression.
\end{proof}

\textbf{\cref{lem:wk-as-hatwk-hatwkminus}}. Let $\sigmaloc^2 \triangleq \sigma^2 / n + \sigmadp^2 / n^2$ denote the local variance on each silo as a result of data sampling and DP noise.
Then, given $\hat w_k$ and $\hat w_{\setminus k}$, $w_k$ can be expressed as
  \begin{align}
    w_k &= \mu_k + \zeta_k \\
    \text{where} \quad \zeta_k &\sim \mathcal N(0, \sigma_w^2), \\
    \text{with} \quad \sigma_w^2 &\triangleq \left( \frac{1}{\sigmaloc^2}+\frac{K-1}{K \tau^{2}+ \sigmaloc^2}  \right)^{-1} \\
    \text{and} \quad \mu_k &\triangleq \sigma_{w}^{2} \left( \frac{1}{\sigmaloc^2 } \cdot \hat{w}_{k}+\frac{K-1 }{K \tau^{2} + \sigmaloc^2} \cdot \hat{w}_{\setminus k} \right).
  \end{align}
\begin{proof}[Proof of \cref{lem:wk-as-hatwk-hatwkminus}.]
  When given $\theta$, we can express $w_k = \theta + z_k$ where $ z_k \sim \mathcal N(0, \tau^2)$, and by symmetry when given $w_k$,
    \begin{align} \label{eq:theta-sim-wk-tau}
        \theta = w_k + z_k \quad \text{where} \quad z_k \sim \mathcal N(0, \tau^2).
    \end{align}
    Note also that when given the local dataset $X_k$, $\hat w_k$ is a noisy observation of $w_k$ with Gaussian noise from data sampling and added privacy:
    \begin{align} \label{eq:wk-hat-sim-wk}
        \hat w_k = w_k + \hat z_k \quad \text{where} \quad \hat z_k \sim \mathcal N(0, \sigmaloc^2).
    \end{align}
    Moreover, when given the local datasets $\{X_j\}_{j\in [K], j\neq k}$, $\hat{w}_{\setminus k}$ can be viewed as a noisy observation of $\theta$ with Gaussian noise from the silo heterogeneity and the empirical mean of the local estimators:
    \begin{align} \label{eq:w-minusk-sim-theta}
        \hat{w}_{\setminus k} = \theta + \hat z_{\setminus k} \quad \text{where} \quad \hat z_{\setminus k} \sim \mathcal N\left(0, \frac{\tau^2 + \sigmaloc^2}{K - 1}\right).
    \end{align}
    Combining (\ref{eq:w-minusk-sim-theta}) with (\ref{eq:theta-sim-wk-tau}) we have, when given $w_k$ and the local datasets $\{X_j\}_{j\in [K], j\neq k}$,
    \begin{align} \label{eq:w-minusk-sim-wk}
        \hat{w}_{\setminus k} = w_k + z_{\setminus k} \quad \text{where} \quad z_{\setminus k} \sim \mathcal N\left(0, \tau^2 + \frac{\tau^2 + \sigmaloc^2}{K - 1}\right) \equiv \mathcal N\left(0, \frac{K \tau^2 + \sigmaloc^2}{K - 1}\right).
    \end{align}
    Invoking Lemma~\ref{lem:ivw-lemma} on \cref{eq:wk-hat-sim-wk} and \cref{eq:w-minusk-sim-wk} we have the desired $\mu_k$ and $\sigma_w^2$.

    A key observation for this derivation is that the local datasets $\{X_k\}_{k \in [K]}$ form the Markov blankets of $\hat w_k$ and $\hat w_{\setminus k}$ (as they are sampled \textit{after} $w_k$ are sampled, and the estimators $\hat w_k$ and $\hat w_{\setminus k}$ are computed based on the datasets). Thus, given $\{X_k\}_{k \in [K]}$, $\hat w_k$ and $\hat w_{\setminus k}$ can be viewed as independent observations of $w_k$ and \cref{lem:ivw-lemma} applies.
\end{proof}

\textbf{\cref{thm:optimal-lambda}} (Optimal $\mrmtl$ estimate).
Let $\lambda^*$ be the optimal $\lambda$ that minimizes the generalization error. Then,
\begin{align} \label{supp-eq:generalization-error-mmse}
  \lambda^{*}= \argmin_{\lambda} \mathbb{E} \left[ \left(w_{k}-\hat{w}_{k}(\lambda)\right)^{2} \mid \hat{w}_{k}, \hat{w}_{\setminus k}, \{ X_k \}_{k \in [K]} \right] = \frac{\sigmaloc^2}{\tau^2}
  = \frac{1}{n \tau^2} \left( \sigma^2 + \frac{\sigmadp^2}{n} \right).
\end{align}
\begin{proof}[Proof of \cref{thm:optimal-lambda}]
  The objective of finding $\lambda^*$ equates to finding an expression for $\lambda$ such that when $w_k$ is viewed as a noisy observation of $\hat w_k(\lambda)$ (which is an interpolation of $\hat w_k$ and $\hat w_{\setminus k}$ from \cref{lem:wk-lambda}), the coefficients for $\hat w_k$ and $\hat w_{\setminus k}$ in $\hat w_k(\lambda)$ matches those of $w_k$ (\cref{lem:wk-as-hatwk-hatwkminus}). That is, we want $\lambda^*$ such that
  \begin{align}
      \frac{K+\lambda^*}{(1+\lambda^*) K}  = \frac{\sigma_{w}^{2}}{\sigmaloc^2} = \frac{1}{\sigmaloc^2} \cdot \left( \frac{1}{\sigmaloc^2}+\frac{K-1}{K \tau^{2}+ \sigmaloc^2}  \right)^{-1} .
  \end{align}
  Simpifying gives $\lambda^* = {\sigmaloc^2} / {\tau^2}$. Note that $\hat w_k(\lambda^*)$ is the (conditional) MMSE estimator of $w_k$.
\end{proof}
Note that \cref{supp-eq:generalization-error-mmse} measures the \textit{generalization} error because $w_k$ is the (unobserved) true center of the local data distribution around which testing data points would be drawn.

\textbf{\cref{thm:optimal-local-error-gap}} (Optimal error gap to local training).
Denote the error of the local estimate as
\begin{align}
  \mathcal E_{\mathrm{loc}} \triangleq \mathbb{E} \left[ (w_{k}-\hat{w}_{k} )^{2} \mid  X_k \right],
\end{align}
and let $\Delta_\mathrm{loc} \triangleq \mathcal E_{\mathrm{loc}} - \mathcal E^*$ be its error gap to the optimal estimate. Then,
\begin{align}
  \Delta_\mathrm{loc} = \left( 1 - \frac{1}{K} \right)  \cdot \frac{\sigmaloc^4}{ \sigmaloc^2 + \tau^2}.
\end{align}
\textbf{\cref{thm:optimal-global-error-gap}} (Optimal error gap to FedAvg).
Denote the error of the global (FedAvg) estimate as
\begin{align}
  \mathcal E_{\mathrm{fed}} \triangleq \mathbb{E} \left[ (w_{k}- \bar{w} )^{2} \mid \{ X_k \}_{k \in [K]} \right],
\end{align}
and let $\Delta_\mathrm{fed} \triangleq \mathcal E_{\mathrm{fed}} - \mathcal E^*$ be its error gap to the optimal estimate. Then,
\begin{align}
  \Delta_\mathrm{fed} = \left( 1 - \frac{1}{K} \right)  \cdot \frac{\tau^4}{ \sigmaloc^2 + \tau^2}.
\end{align}
\begin{proof}[Proofs of \cref{thm:optimal-local-error-gap,thm:optimal-global-error-gap}]
  From \cref{lem:optimal-error} we know that the error of the optimal estimator $\hat w_k(\lambda^*)$ of $w_k$ is $\mathcal E^* = \sigma_w^2 = \frac{\sigmaloc^2 (\sigmaloc^2 + K \tau^2) }{K(\sigmaloc^2 + \tau^2)}$.
  For $\mathcal E_{\mathrm{loc}}$, we know from earlier that the local estimator $\hat w_k = \hat w_k(0)$ of $w_k$ has an MSE/variance of $\mathcal E_{\mathrm{loc}} = \sigmaloc^2$.
  For $\mathcal E_{\mathrm{fed}}$, we can view $\bar w$ as a noisy observation of $w_k$ with MSE/variance $\mathcal E_{\mathrm{fed}} = \frac{\sigmaloc^2 + (K - 1)\tau^2}{K}$.
  The results of \cref{thm:optimal-local-error-gap,thm:optimal-global-error-gap} thus follow from re-arranging terms of $\mathcal E_{\mathrm{loc}} - \mathcal E^*$ and $\mathcal E_{\mathrm{fed}} - \mathcal E^*$ respectively.
\end{proof}

\textbf{\cref{lem:general-lambda-error}} (Error of $\hat w_k(\lambda)$).
Let $\mathcal E_\lambda \triangleq \mathbb E\left[ (w_k - \hat w_k(\lambda))^2 \mid \hat{w}_{k}, \hat{w}_{\setminus k}, \{ X_k \}_{k \in [K]} \right] $ be the error of $\mrmtl$ as a function of $\lambda$. Then,
\begin{align}
  \mathcal E_\lambda
    &= \left( 1 - \frac{1}{K} \right) \frac{\sigmaloc^2 + \lambda^2 \tau^2}{(\lambda + 1)^2} + \frac{\sigmaloc^2}{K}.
\end{align}
\begin{proof}[Proof of \cref{lem:general-lambda-error}]
  The result follows from the variance of $\hat w_k(\lambda)$ when viewed as a noisy observation of $w_k$. Specifically, from \cref{lem:wk-lambda} we know that $\hat w_k(\lambda)$ is an interpolation (parameterized by $\alpha \triangleq \frac{K+ \lambda}{(1 +\lambda)K}$) between $\hat w_k$ and $\hat w_{\setminus k}$. We thus have (with some rearrangement),
  \begin{align}
    \mathcal E_\lambda = \alpha^2 \cdot \sigmaloc^2 + (1 - \alpha)^2 \cdot \frac{K \tau^2 + \sigmaloc^2}{K - 1} = \left( 1 - \frac{1}{K} \right) \frac{\sigmaloc^2 + \lambda^2 \tau^2}{(\lambda + 1)^2} + \frac{\sigmaloc^2}{K}.
  \end{align}
\end{proof}
The proof of \cref{thm:dp-error-gap} follows directly from \cref{lem:general-lambda-error} by subtracting the common terms between private and non-private $\mrmtl$ estimates.

\subsection{The Case of Heterogeneous Privacy Requirements}
\label{supp-sec:analysis-varying-silo-params}

\newcommand{\sigmadpk}{\sigma_{k, \mathrm{DP}}}
\newcommand{\sigmalock}{\tilde{\sigma}_{k}}

In the main analysis, we assumed for simplicity that each silo has the same values of local dataset size $n$, local data sampling variance $\sigma^2$, and DP noise variance $\sigmadp^2$. In this section, we consider the case where each silo $k$ has custom values for the above, denoted as $n_k$, $\sigma_k^2$, and $\sigma_{k, \mathrm{DP}}^2$, to arrive at a silo-specific ``local variance'' $\sigmalock^2$:
\begin{align}
  \sigmalock^2 \triangleq \frac{\sigma_k^2}{n} + \frac{\sigmadpk^2}{n^2}.
\end{align}

With a slight abuse of notation, let us use $C + \mathcal N(\mu, \sigma^2)$ to denote drawing an iid sample of Gaussian noise with mean $\mu$ and variance $\sigma^2$ and add it to some value $C$.
Then, following \cref{lem:wk-as-hatwk-hatwkminus} and the same set of conditions, we can express the local model $\hat w_k$ and the non-local model $\hat w_{\setminus k}$ as:
\begin{align}
  \hat w_{k}
    &= w_k + \mathcal N(0, \tilde{\sigma}_k^2),  \\
  \hat w_{\setminus k}
    &=  \theta + \mathcal N \left(0, \frac{\tau^2}{K-1} + \frac{\sum_{j \neq k} \tilde{\sigma}_j^2 }{(K - 1)^2} \right) \\
    &= w_k + \mathcal N \left(0, \tau^2 + \frac{\tau^2}{K-1} + \frac{\sum_{j \neq k} \tilde{\sigma}_j^2 }{(K - 1)^2} \right) \\
    &= w_k + \mathcal N \left(0, \frac{K \tau^2}{K-1} + \frac{\sum_{j \neq k} \tilde{\sigma}_j^2 }{(K - 1)^2} \right),
\end{align}
and the true local center $w_k$ can be expressed in terms of $\hat w_k$ and $\hat w_{\setminus k}$ with silo-specific local variances:
\begin{lemma} \label{lem:heter-scalar-wk-in-hat-wk-hat-wkminus}
  Given $\hat w_k$ and $\hat w_{\setminus k}$, $w_k$ can be expressed as
  \begin{align}
      w_k = \bar{\mu}_k + \mathcal N(0, \bar{\sigma}_k^2)
  \end{align}
  with
  \begin{align}
      \bar{\sigma}_k^2 &\triangleq \left( \frac{1}{\tilde{\sigma}_k^2}+\frac{(K-1)^2}{(K-1) K \tau^{2}+ \sum_{j \neq k} \tilde{\sigma}_j^2}  \right)^{-1}, \\
      \bar{\mu}_k &\triangleq \frac{\bar{\sigma}_k^2}{\tilde{\sigma}_k^2 } \cdot \hat{w}_{k}+ \frac{(K-1)^2 \bar{\sigma}_k^2 }{(K-1) K\tau^{2}+ \sum_{j \neq k} \tilde{\sigma}_j^2} \cdot \hat{w}_{\setminus k}.
  \end{align}
\end{lemma}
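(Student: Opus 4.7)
The plan is to follow exactly the template of the proof of \cref{lem:wk-as-hatwk-hatwkminus}, but carrying through silo-specific variances $\tilde\sigma_j^2$ instead of a common $\sigma_{\mathrm{loc}}^2$. The key observation is that, conditioned on $\{X_k\}_{k\in[K]}$, the quantities $\hat w_k$ and $\hat w_{\setminus k}$ become two \emph{independent} noisy observations of the unobserved true center $w_k$, so the inverse-variance-weighting lemma (\cref{lem:ivw-lemma}) gives the posterior mean and variance of $w_k$ directly.

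First, I would record the two conditional distributions that the excerpt already sets up just above the lemma. Conditioning on $X_k$, the Gaussian mechanism and (sub-)Gaussian data together yield
\begin{align}
\hat w_k = w_k + \mathcal N(0, \tilde\sigma_k^2), \quad \tilde\sigma_k^2 = \frac{\sigma_k^2}{n_k} + \frac{\sigma_{k,\mathrm{DP}}^2}{n_k^2}.
\end{align}
Second, to obtain the distribution of $\hat w_{\setminus k}$ given $w_k$ and the other silos' datasets, I would compose two independent Gaussian steps: (i) given the meta-center $\theta$, each external $\hat w_j = \theta + \mathcal N(0,\tau^2) + \mathcal N(0,\tilde\sigma_j^2)$, so averaging the $K-1$ external estimators gives $\hat w_{\setminus k} = \theta + \mathcal N\!\bigl(0,\tfrac{\tau^2}{K-1} + \tfrac{\sum_{j\neq k}\tilde\sigma_j^2}{(K-1)^2}\bigr)$; (ii) invoking the non-informative-prior reversal $\theta = w_k + \mathcal N(0,\tau^2)$ and adding variances yields
\begin{align}
\hat w_{\setminus k} = w_k + \mathcal N\!\left(0,\; \frac{K\tau^2}{K-1} + \frac{\sum_{j\neq k}\tilde\sigma_j^2}{(K-1)^2}\right).
\end{align}

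Third, I would argue independence of the two observations. Given $\{X_k\}_{k\in[K]}$, $\hat w_k$ depends only on $X_k$ and the silo-$k$ privacy noise $\xi_k$, while $\hat w_{\setminus k}$ depends only on the data and privacy noises of silos $j\neq k$; since all these sources are mutually independent, the Markov-blanket argument from the proof of \cref{lem:wk-as-hatwk-hatwkminus} applies verbatim. Hence the two displays above may be treated as independent Gaussian observations of $w_k$.

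Fourth, I would apply \cref{lem:ivw-lemma} with the variances $\sigma_1^2 = \tilde\sigma_k^2$ and $\sigma_2^2 = \tfrac{K\tau^2}{K-1} + \tfrac{\sum_{j\neq k}\tilde\sigma_j^2}{(K-1)^2}$. The posterior variance is $\bar\sigma_k^2 = (1/\sigma_1^2 + 1/\sigma_2^2)^{-1}$, and after rewriting $1/\sigma_2^2 = (K-1)^2 / \bigl((K-1)K\tau^2 + \sum_{j\neq k}\tilde\sigma_j^2\bigr)$, this is exactly the $\bar\sigma_k^2$ claimed. The posterior mean is $\bar\mu_k = \bar\sigma_k^2 (\hat w_k/\sigma_1^2 + \hat w_{\setminus k}/\sigma_2^2)$, which after the same rewriting matches the stated $\bar\mu_k$.

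I do not anticipate a genuine obstacle: the only delicate point is the correct accounting of variances in step (ii), where one must be careful that the $\tau^2$ term gets promoted to $K\tau^2/(K-1)$ when transporting $\hat w_{\setminus k}$ from an observation of $\theta$ to an observation of $w_k$, and that the silo-specific noise terms $\tilde\sigma_j^2$ enter only through their average (hence divided by $(K-1)^2$, not $K-1$). Once those variances are written down correctly, the rest is mechanical.
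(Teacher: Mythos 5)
Your proof is correct and follows essentially the same route as the paper: the paper likewise writes $\hat w_k$ and $\hat w_{\setminus k}$ as independent Gaussian observations of $w_k$ (with variances $\tilde\sigma_k^2$ and $\tfrac{K\tau^2}{K-1}+\tfrac{\sum_{j\neq k}\tilde\sigma_j^2}{(K-1)^2}$, derived exactly via your steps (i)--(ii)) and then applies the inverse-variance-weighting result (\cref{lem:ivw-lemma}) under the same Markov-blanket conditioning. Your variance accounting and the final algebraic rewriting both match the paper's derivation.
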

\begin{proof}
  The proof follows similarly from \Cref{lem:wk-as-hatwk-hatwkminus}: we apply \Cref{lem:ivw-lemma} to the expressions of $\hat w_k$ and $\hat w_{\setminus k}$ as noisy observations of $w_k$.
\end{proof}

From here we can derive the best $\lambda^*_k$ \textit{specific to each silo}.
\begin{theorem}[Optimal $\lambda$ for silo $k$]
\label{thm:custom-optimal-lambda}
  The optimal choice of the \mrmtl regularization strength for silo $k$ is given by
  \begin{align}
    \lambda^*_k = \dfrac{\tilde{\sigma}_k^2}{\tau^2 + \dfrac{1}{K} \left( \dfrac{\sum_{j \neq k} \tilde{\sigma}_j^2 }{  K-1 } -\tilde{\sigma}_k^2  \right)}
  \end{align}
\end{theorem}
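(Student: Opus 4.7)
The plan is to mirror the proof of \cref{thm:optimal-lambda} but using the heterogeneous posterior given in \cref{lem:heter-scalar-wk-in-hat-wk-hat-wkminus}. The first observation is that \cref{lem:wk-lambda} continues to hold verbatim in the heterogeneous setting: its proof only uses $\nabla F_k(w) = w - \hat w_k$ and the form of the regularizer, neither of which depends on the (silo-specific) local variance $\tilde\sigma_k^2$. Hence the $\mrmtl$ estimator for silo $k$ still decomposes as
\begin{align}
  \hat w_k(\lambda) = \alpha(\lambda)\, \hat w_k + (1-\alpha(\lambda))\, \hat w_{\setminus k},
  \qquad \alpha(\lambda) = \frac{K+\lambda}{(1+\lambda)K}.
\end{align}

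Second, I will argue that the generalization-error-minimizing estimator conditioned on $\hat w_k, \hat w_{\setminus k}, \{X_k\}_{k\in[K]}$ is the posterior mean $\bar\mu_k$ from \cref{lem:heter-scalar-wk-in-hat-wk-hat-wkminus} (standard MMSE). Note that the two coefficients of $\hat w_k$ and $\hat w_{\setminus k}$ in $\bar\mu_k$ sum to one by definition of $\bar\sigma_k^2$, so both $\hat w_k(\lambda)$ and $\bar\mu_k$ live in the one-parameter affine family spanned by $\hat w_k$ and $\hat w_{\setminus k}$. Consequently it suffices to match a single coefficient: the optimal $\lambda^*_k$ is the unique solution to
\begin{align}
  \alpha(\lambda^*_k) \;=\; \frac{\bar\sigma_k^2}{\tilde\sigma_k^2}
  \;=\; \frac{B}{B + (K-1)^2\, \tilde\sigma_k^2},
  \qquad B \triangleq (K-1)K\tau^2 + \sum_{j\neq k} \tilde\sigma_j^2,
\end{align}
where the second equality comes from substituting the explicit form of $\bar\sigma_k^2$.

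Third, I will solve this scalar equation for $\lambda^*_k$. Cross-multiplying $(K+\lambda)(B+(K-1)^2 \tilde\sigma_k^2) = (1+\lambda) K B$ and cancelling the $KB$ terms gives $(K-1)^2 K \tilde\sigma_k^2 = \lambda\bigl((K-1) B - (K-1)^2 \tilde\sigma_k^2\bigr)$, i.e.\ $\lambda^*_k = \frac{K \tilde\sigma_k^2}{\,B/(K-1) - \tilde\sigma_k^2\,}$. Substituting $B/(K-1) = K\tau^2 + \sum_{j\neq k}\tilde\sigma_j^2/(K-1)$ and factoring $K$ out of the denominator produces the stated expression
\begin{align}
  \lambda^*_k = \frac{\tilde\sigma_k^2}{\tau^2 + \tfrac{1}{K}\!\left(\tfrac{\sum_{j\neq k}\tilde\sigma_j^2}{K-1} - \tilde\sigma_k^2\right)}.
\end{align}

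The only nontrivial step is the coefficient-matching: one must justify that solving a single linear equation in $\lambda$ actually recovers the full MMSE estimator rather than just one moment. This is where the ``coefficients-sum-to-one'' observation does the work, reducing a two-equation matching problem to one. The rest is algebra which I expect to go through cleanly, as the structure parallels the homogeneous case of \cref{thm:optimal-lambda} (which is recovered by setting $\tilde\sigma_j^2 = \sigma_{\mathrm{loc}}^2$ for all $j$, so that the $\tfrac{1}{K}(\cdot)$ correction vanishes).
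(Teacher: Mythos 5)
Your proposal is correct and follows essentially the same route as the paper: both match the interpolation coefficient $\alpha(\lambda) = \frac{K+\lambda}{(1+\lambda)K}$ from \cref{lem:wk-lambda} against the posterior-mean coefficient $\bar\sigma_k^2/\tilde\sigma_k^2$ from \cref{lem:heter-scalar-wk-in-hat-wk-hat-wkminus} and solve the resulting scalar equation. Your explicit justification that matching a single coefficient suffices (since both estimators are affine combinations of $\hat w_k$ and $\hat w_{\setminus k}$ with coefficients summing to one, and the posterior mean is the MMSE estimator) is a welcome detail the paper leaves implicit, and your algebra checks out.
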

\begin{proof}
  The result follows from re-arranging and solving for $\lambda^*_k$ in the following equation, as in \Cref{thm:optimal-lambda}:
  \begin{align}
    \frac{K + \lambda^*_k}{(1 + \lambda^*_k)K } = \frac{\bar{\sigma}_k^2}{\tilde{\sigma}_k^2 }
  \end{align}
  where $\mu_k$ is defined in \Cref{lem:heter-scalar-wk-in-hat-wk-hat-wkminus}.
\end{proof}

\cref{thm:custom-optimal-lambda} suggests that if silos have varying local variance (as a result of varying local dataset sizes, inherent data variances, and silo-specific sample-level DP requirements), then it is optimal to have each silo choose its own $\lam^*_k$. In particular, the term $\dfrac{\sum_{j \neq k} \tilde{\sigma}_j^2 }{  K-1 } -\sigmalock^2 $ suggests that the optimal $\lambda^*_k$ depends on how much of an ``outlier'' is silo $k$ compared to the rest of silos:
if it has a smaller local variance, then it benefits from a smaller $\lambda^*_k$ since other silos are noisy; if it has a larger local variance, then $\lambda^*_k$ would be larger to encourage silo $k$ to ``conform''.
Note that under this simple analysis, one could have $\lam^*_k < 0$ (when $\sigmalock^2 > K \tau^2 + \frac{1}{K-1} \sum_{j \neq k} \tilde{\sigma}_j^2$); this case stems from having a gradually larger $\sigmalock^2$ and it suffices to choose $\lam^* \to \infty$ (i.e.\ fall back to FedAvg training).

\clearpage

\section{Additional Details for Private Hyperparameter Tuning \texorpdfstring{(\cref{sec:discussion})}{}}
\label{supp-sec:hparam-tuning}

In Section~\cref{sec:discussion} we took an alternative view on the benefits of personalization by examining the \textit{privacy cost} of tuning the hyperparameter responsible for navigating the tradeoff between costs from heterogeneity and privacy. In this section, we expand on the omitted details and provide more discussions.

\subsection{Additional Background on Privacy Costs of Hyperparameter Tuning}
\label{supp-sec:tnb-hpo}

We consider the typical hyperparameter tuning procedure (denoted as HPO) where a base algorithm $M$, such as one run of a machine learning algorithm to produce a model, is executed $h$ times with different hyperparameters and the best result is recorded. The hyperparameters may be high-dimensional; e.g.\ vector of learning rate and batch size.
Typically, $h$ is a constant (e.g.\ we sweep a fixed grid of learning rates and batch sizes).

On a high level, the works of Liu and Talwar~\cite{liu2019private} and Papernot and Steinke~\cite{papernot2022hyperparameter} show that, with a constant $h$, one can construct $M$ where the privacy guarantee of the HPO output is (roughly) a factor of $h$ weaker than the original privacy guarantee.
Specifically, \cite{papernot2022hyperparameter} gives the following proposition that applies to both pure DP and R\'enyi DP (and thus related to approximate DP).
\begin{proposition}[Proposition 17 of \cite{papernot2022hyperparameter}]
  For any $\eps > 0$ and any $\alpha > 1$, there exist some algorithm $M$ that satisfy $(\eps, 0)$-DP and the corresponding HPO algorithm $A$ that runs $M$ for $h$ times and returns only the best result,  such that
  \begin{enumerate}
    \item $A$ is not $(\tilde \eps, 0)$-DP for any $\tilde \eps < h\eps$, and
    \item $A$ is not $(\alpha, \tilde\eps(\alpha))$-R\'enyi DP for any $\tilde\eps(\alpha) < \hat\eps(\alpha)$ where
    \begin{align} \label{supp-eq:rdp-fixed-hparam-search}
      \hat\eps(\alpha) = h \eps - \frac{h \log(1 + \exp(-\eps))}{\alpha - 1}.
    \end{align}
  \end{enumerate}
\end{proposition}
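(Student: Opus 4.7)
The plan is to establish the impossibility by exhibiting a specific $(\eps, 0)$-DP mechanism $M$ together with a specific ``return the best'' HPO rule $A$ whose output distribution has a single atom that absorbs the full $h\eps$-degradation. Concretely, I would take $M$ to be the one-bit randomized response on $x \in \{0,1\}$ with $\Pr[M(x) = x] = q$ and $\Pr[M(x) = 1 - x] = p$, where $p := 1/(1+e^\eps)$ and $q := e^\eps/(1+e^\eps)$; a standard check shows that this is tight $(\eps,0)$-DP. The wrapper $A$ runs $h$ identical copies $Y_1, \ldots, Y_h$ of $M(x)$ and returns $(I, Y_I)$, where $I = \min\{i: Y_i = 1\}$ when such an $i$ exists and $(I, Y_I) = (h, 0)$ otherwise. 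This fits the ``return the best result'' template via, e.g., the score $Y_i / i$ assigned to run $i$.

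For the first (pure DP) claim, I would just single out the event $\{A = (h, 0)\}$. Writing $p_x := \Pr[Y_i = 0 \mid x]$ so that $p_0 = q$ and $p_1 = p$, the all-zero event has probability $p_x^h$, giving
\[
  \frac{\Pr[A(0) = (h, 0)]}{\Pr[A(1) = (h, 0)]} \;=\; (q/p)^h \;=\; e^{h\eps}.
\]
This single event already rules out $(\tilde\eps, 0)$-DP for any $\tilde\eps < h\eps$.

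For the R\'enyi DP claim, the plan is to lower bound the R\'enyi moment
\[
  M_\alpha \;:=\; \sum_{y} \Pr[A(0) = y]^\alpha \, \Pr[A(1) = y]^{1 - \alpha}
\]
using just the contribution of the boundary atom $(h, 0)$. Using $q/p = e^\eps$, this one contribution simplifies to $q^{h\alpha} p^{h(1 - \alpha)} = (p e^{\alpha\eps})^h$, and since all other atoms contribute non-negative terms we get $M_\alpha \ge (p e^{\alpha\eps})^h$. Hence
\[
  D_\alpha(A(0) \,\|\, A(1)) \;\ge\; \frac{h}{\alpha - 1} \log(p e^{\alpha\eps}) \;=\; \frac{h\bigl(\alpha\eps - \log(1 + e^\eps)\bigr)}{\alpha - 1}.
\]
A short rewrite using $\eps - \log(1 + e^\eps) = -\log(1 + e^{-\eps})$ turns the right-hand side into exactly $\hat\eps(\alpha) = h\eps - \frac{h \log(1 + e^{-\eps})}{\alpha - 1}$, giving the second claim.

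The main subtlety will be selecting $A$: it must be a legitimate ``best of $h$'' rule rather than simply releasing the full transcript $(Y_1, \ldots, Y_h)$, yet still concentrate the full $e^{h\eps}$ ratio onto a single atom of its output distribution. The ``first success / all-failures'' rule achieves this because $\{A = (h, 0)\}$ sits at the tail of a geometric, so its probability is exactly $p_x^h$ and thereby inherits the full $h$-fold ratio. Once this rule is fixed, both claims follow from essentially a one-line inequality (the R\'enyi moment is at least any one of its non-negative summands, and the boundary summand is already tight against $\hat\eps(\alpha)$); the rest is bookkeeping.
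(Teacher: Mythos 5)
Your proposal is correct, but note that the paper itself never proves this statement: it is quoted verbatim as Proposition~17 of Papernot and Steinke and used as a black box, so there is no in-paper proof to compare against. Judged on its own, your argument checks out: randomized response with $q/p = e^{\eps}$ is tightly $(\eps,0)$-DP; the all-failures atom has probability $p_x^h$ and hence likelihood ratio $(q/p)^h = e^{h\eps}$, which rules out $(\tilde\eps,0)$-DP for $\tilde\eps < h\eps$; and retaining only that atom in the R\'enyi moment gives
\begin{align*}
  D_\alpha\bigl(A(0)\,\|\,A(1)\bigr) \;\ge\; \frac{h}{\alpha-1}\log\bigl(p\,e^{\alpha\eps}\bigr)
  \;=\; h\eps - \frac{h\log\bigl(1+e^{-\eps}\bigr)}{\alpha-1},
\end{align*}
where the last equality uses $\alpha\eps - \log(1+e^{\eps}) = (\alpha-1)\eps - \log(1+e^{-\eps})$, exactly matching $\hat\eps(\alpha)$. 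This is essentially the construction in the cited source, with one cosmetic difference: Papernot and Steinke release only $\max_i Y_i$ (the ``best'' of $h$ binary outputs), whereas you release a first-success index alongside the value. Your index bookkeeping is harmless, since your key atom $(h,0)$ coincides with their all-zeros event and carries the same probability $p_x^h$, but it is also unnecessary; releasing the maximum alone already fits the ``best of $h$ runs'' template and concentrates the full $e^{h\eps}$ ratio on the outcome $0$, which would slightly simplify your justification that $A$ is a legitimate tuning rule.
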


Both the pure DP and RDP results suggest that running the typical HPO procedure with a fixed $h$ can be as bad as running $M$ and $h$ times and releasing \textit{all} results.

The authors of \cite{liu2019private} and \cite{papernot2022hyperparameter} provided a mitigating strategy where one simply makes $h$ (the number of tuning runs) a random variable. The authors of~\cite{papernot2022hyperparameter} provide an improved analysis and considered two distributions for $h$ in particular: the truncated negative binomial (TNB) distribution and the Poisson distribution.

Here, we focus on TNB as it provides a spectrum of results for privacy-utility tradeoff, allowing us to choose several points on this spectrum that offer small (but realistic) privacy overhead. We combine it our analysis in Section~\cref{sec:analysis} to motivate our discussions on the practical complications of deploying personalization methods.

Specifically, the probability mass function of the TNB distribution is given by
\begin{align}
  f(h) = \begin{cases}
    \frac{(1-\gamma)^{h}}{\gamma^{-\eta}-1} \prod_{\ell=0}^{h-1}\left(\frac{\ell+\eta}{\ell+1}\right) & \text{for } \eta \neq 0, \\
    \frac{(1-\gamma)^{h}}{h \log (1 / \gamma)} & \text{for } \eta = 0,
  \end{cases}
  \text{ with expected value } \mathbf E[h] = \begin{cases}
    \frac{\eta (1-\gamma)}{\gamma (1-\gamma^\eta)}  & \text{for } \eta \neq 0, \\
    \frac{1 / \gamma-1}{\log (1 / \gamma)}  & \text{for } \eta = 0, \\
  \end{cases} \nonumber
\end{align}
where $\eta$ and $\gamma$ are parameters of the TNB distribution. The privacy of the HPO output using $h$ sampled from the TNB distribution is given by the following theorem from~\cite{papernot2022hyperparameter}.
\begin{theorem}[Theorem 2 of \cite{papernot2022hyperparameter}] \label{supp-thm:tnb-rdp}
  For any $\eta > 1$, any $\gamma \in (0, 1)$, and any algorithm $M$ that satisfy both $(\alpha_1, \eps(\alpha_1))$-RDP and $(\alpha_2, \eps(\alpha_2))$-RDP with $\alpha_1 > 1$ and $\alpha_2 \ge 1$, the HPO algorithm $A$ that runs $M$ for $h$ times and returns only the best result with $h \sim \operatorname{TNB}(\eta, \gamma)$ satisfies $(\alpha_1, \tilde\eps)$-RDP with
  \begin{align}
    \tilde\eps = \eps(\alpha_1) + (1 + \eta) \left(1-\frac{1}{\alpha_2}\right) \eps(\alpha_2)+\frac{(1+\eta) \log (1 / \gamma)}{\alpha_2}+\frac{\log \mathbf{E}[h]}{\alpha_1 -1}.
  \end{align}
\end{theorem}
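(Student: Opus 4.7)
The plan is to bound the $\alpha_1$-R\'enyi divergence between the output distributions of $A$ on neighboring datasets $D, D'$ by treating the random trial count $h$ as a mixing variable and exploiting the closed-form probability generating function of the TNB distribution. First, I would observe that for any fixed $h$, the HPO output is a deterministic post-processing of the tuple $(M_1(D), \ldots, M_h(D))$ of $h$ independent runs of $M$. By standard composition of RDP and the post-processing inequality, conditioned on $h$ the output satisfies $(\alpha, h\,\eps(\alpha))$-RDP for every $\alpha > 1$. The naive approach of taking a worst-case $h$ fails since $\mathrm{TNB}(\eta,\gamma)$ has unbounded support, so the key is to average rather than maximize over $h$.

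Next, I would set up the mixture representation $P^{(D)} = \sum_{h \ge 1} f(h)\, P_h^{(D)}$, where $f(h)$ is the TNB pmf and $P_h^{(D)}$ is the distribution of the best-of-$h$ output on $D$. Writing out the $\alpha_1$-moment,
\begin{align}
\mathbb{E}_{S \sim P^{(D')}} \!\left[ \left(\tfrac{dP^{(D)}}{dP^{(D')}}(S)\right)^{\alpha_1} \right]
= \mathbb{E}_{S \sim P^{(D')}} \!\left[ \left( \tfrac{\sum_h f(h)\, dP_h^{(D)}/dP^{(D')}}{1} \right)^{\alpha_1} \right],
\end{align}
I would apply H\"older's inequality to split the integrand into (i) a factor controlled by the per-run R\'enyi moment of $M$ at the auxiliary order $\alpha_2$, and (ii) a factor involving only $f(h)$ and its generating function. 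The TNB distribution is designed precisely so that the resulting series $\sum_h f(h)\cdot t^h$ admits the closed form $((1-t(1-\gamma))^{-\eta} - 1)/(\gamma^{-\eta} - 1)$, which converges whenever $t(1-\gamma) < 1$. Matching the H\"older exponents to $\alpha_1, \alpha_2, \eta$ ensures both finiteness and the appearance of the multiplicative factor $(1+\eta)$ throughout.

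The explicit calculation, which is the main bookkeeping step, proceeds by (a) absorbing the per-trial RDP cost into a factor of the form $\exp((\alpha_2-1)\,\eps(\alpha_2))$ raised to a power linear in $h$; (b) invoking the TNB generating function to evaluate the resulting $h$-expectation in closed form; and (c) re-expressing the resulting logarithmic terms as contributions from the generating-function normalization ($\log(1/\gamma)$ term), from the composition across runs (the $(1 - 1/\alpha_2)\eps(\alpha_2)$ piece, inflated by the $(1+\eta)$ moment factor), and from converting the $\alpha_2$-bound back to an $\alpha_1$-bound. The $\log \mathbf{E}[h]/(\alpha_1 - 1)$ term appears at the very end when taking the $\log$ and dividing by $\alpha_1 - 1$, and reflects the baseline cost of allowing up to $\mathbf{E}[h]$ effective queries.

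The main obstacle will be the H\"older decomposition itself: the exponents must be chosen so that (i) the generating-function series converges at the resulting argument, (ii) the final bound remains an RDP guarantee at the prescribed order $\alpha_1$ rather than some inflated order, and (iii) the two RDP assumptions on $M$ (at $\alpha_1$ and $\alpha_2$) are used in the ``right places'' so that $\eps(\alpha_1)$ appears only as a single additive baseline while $\eps(\alpha_2)$ absorbs the composition overhead. Getting the coefficients $(1+\eta)$ and $(1-1/\alpha_2)$ to line up exactly -- rather than as loose upper bounds -- is the delicate part, and is what motivates TNB as the ``right'' distribution to randomize $h$ with.
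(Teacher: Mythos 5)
You should first note that the paper you are working from does not actually prove this statement: it is quoted verbatim as Theorem 2 of \cite{papernot2022hyperparameter}, so your attempt can only be compared against the source's argument. Against that standard, there is a genuine gap, and it sits exactly where you locate ``the main bookkeeping step.'' In step (a) you bound the conditional-on-$h$ output by RDP \emph{composition}, i.e.\ a per-component cost $h\,\eps(\alpha)$, hence a moment factor $\exp\bigl((\alpha-1)h\,\eps(\alpha)\bigr)$ that is exponential in $h$, and you then propose to average this against the TNB weights via H\"older and the generating function. This cannot prove the theorem. The TNB tail decays only geometrically, $f(h) \asymp C(1-\gamma)^h h^{\eta-1}$, so every series of the shape $\sum_h f(h)\exp((\alpha-1)h\,\eps)$ that your step (b) would need diverges as soon as $\eps \ge \frac{1}{\alpha-1}\log\frac{1}{1-\gamma}$; the theorem carries no such restriction on $\eps(\alpha_2)$ or $\gamma$, and even in the convergent regime the bound degenerates near the radius of convergence instead of taking the clean additive form claimed. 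No choice of H\"older exponents rescues this, because the obstruction is information-theoretic: for \emph{fixed} $h$, composition is essentially tight --- that is precisely the negative result (Proposition 17 of \cite{papernot2022hyperparameter}) quoted earlier in this same appendix --- and one can construct abstract component families $(P_h, P'_h)$ satisfying every per-$h$ RDP bound you invoke whose identically-weighted mixtures have unbounded R\'enyi divergence (take the pairs supported on disjoint sets, so the output reveals $h$). Any argument that uses only ``conditioned on $h$, the output is $(\alpha, h\,\eps(\alpha))$-RDP'' together with the mixture weights is therefore doomed.

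What makes the theorem true is structure that your decomposition throws away: conditioned on $K=h$, the best of $h$ i.i.d.\ runs has density $h\,q(y)F(y)^{h-1}$, where $q$ is the single-run output density and $F$ the single-run score CDF, so the \emph{unconditional} output density collapses to $q(y)\,g'(F(y))$ with $g$ the PGF of $K$; the factor $F(y)^{h-1}\le 1$ is exactly what cancels the growth in $h$, and it is invisible to composition. The source's proof starts from this identity. RDP at order $\alpha_1$ is applied once, to the single-run ratio $q(y)/q'(y)$, producing the lone additive $\eps(\alpha_1)$. The auxiliary order $\alpha_2$ enters not through composition at all but through a probability-preservation bound relating $F$ and $F'$ on neighboring datasets, which is pushed through the closed form $g'(x)\propto(1-(1-\gamma)x)^{-(\eta+1)}$ to bound the ratio $g'(F(y))/g'(F'(y))$ uniformly in $y$; this is where $(1+\eta)\left(1-\frac{1}{\alpha_2}\right)\eps(\alpha_2)$ and $\frac{(1+\eta)\log(1/\gamma)}{\alpha_2}$ arise. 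The term $\frac{\log \mathbf{E}[h]}{\alpha_1-1}$ is the normalization $g'(1)=\mathbf{E}[K]$, not a conversion between orders. Your instincts about which ingredients must appear --- the TNB generating function, a single additive $\eps(\alpha_1)$, with $\eps(\alpha_2)$ absorbing the overhead --- are correct, but the mechanism you propose for combining them fails at the first quantitative step.
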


\cref{supp-thm:tnb-rdp} tells us that by drawing $h$ from the TNB distribution, the privacy cost of the HPO procedure is a \textit{constant} factor of the original cost of $M$ for pure DP (when $\alpha_1, \alpha_2 \to \infty$), or up to a factor of $\log \mathbf E[h]$ for approximate DP. This substantially improves the default HPO procedure.

\subsection{Interpretation of \texorpdfstring{\cref{fig:hparam-cost}}{Hyperparameter Tuning Costs for MR-MTL}}
\label{supp-sec:hparam-cost-interpretation}

In \cref{fig:hparam-cost}, we considered 6 pairs of $(\eta, \gamma)$ to arrive at the same $\mathbf E[h] = 10$ (i.e.\ on average we want to try 10 values of $\lam$ when tuning \mrmtl).
Intuitively, for a fixed $\mathbf E[h]$, we can expect a smaller $\eta$ (and consequently a smaller $\gamma$) to give a tighter privacy guarantee from \cref{supp-thm:tnb-rdp} (i.e.\ the privacy cost of hyperparameter tuning is smaller).
At the same time, however, the TNB distribution would be more concentrated around $h=1$ with a smaller $\eta$, which means we may end up trying only 1 hyperparameter even when $\mathbf E[h]$ is large. $(\eta, \gamma)$ thus provides an implicit privacy-utility tradeoff.

In \cref{fig:hparam-cost} we chose a list of $\eta$ values that favors strong privacy while being realistic, since picking $\eta \to -1$ (or just $\eta < -0.5$) would give a even smaller privacy overhead, but it will perform very poorly utility-wise and was not considered by~\cite{papernot2022hyperparameter}.
We then applied \cref{supp-thm:tnb-rdp} with these $(\eta, \gamma)$ values to the the expected error of \mrmtl as a function of $\lam$ under mean estimation, by computing and applying the extra noise needed to account for the privacy cost of tuning.
Here, the closed form of the expected error of \mrmtl is derived in \cref{lem:general-lambda-error}, but in \cref{fig:hparam-cost} we ran numerical simulations with 500 repetitions.

\cref{fig:hparam-cost}~(a) and (c) suggests that there are settings in which, after applying \cref{supp-thm:tnb-rdp}, the best $\lam^*$ one could find would already give an error that is larger than simply running FedAvg and local training \textit{without any tuning}, respectively (the \textcolor{OliveGreen}{\textbf{green}} curves). \cref{fig:hparam-cost}~(b) suggests that in less extreme cases, the private hyperparameter tuning cost would significantly diminish the utility improvement of \mrmtl.

\subsection{Implementing Hyperparameter Tuning under Silo-Specific Sample-Level DP}

Under silo-specific sample-level privacy, it can be intricate to implement the improved, randomized hyperparameter tuning protocol described above in~\cref{supp-sec:tnb-hpo}.
For silo $k$ to satisfy its own $(\eps_k, \delta_k)$ sample-level DP target, it must draw $h$ \textit{independently} to all other silos, and it must only return the best result of hyperparameter tuning at the \textit{end of training} (instead of returning the best model iterate at every round).

Using \mrmtl as an example, the above has several implications:
\begin{enumerate}[leftmargin=*]
  \item Each silo should now maintain a list of personalized models, each corresponding to a choice of $\lam$.
  \item Depending on the specific distribution of $h$, silos may or may not end up trying the same set of $\lam$ values.
  \item During training rounds, silos must return model updates for -- and consequently regularize their personalized models towards -- a ``pivot'' model using some public $\bar \lam$.
\end{enumerate}

The analysis in \cref{supp-sec:analysis-varying-silo-params} suggests that having silo-specific choices of $\lam$ (\#2 above) should not have adverse effects,
but the above adaptations in general may influence final utility or convergence properties in iterative learning settings.

Other more sophisticated implementation (potentially tailored to the specific personalization method) may also be possible.
It would be an interesting future direction to study the best approaches for implementing private hyperparameter tuning under silo-specific sample-level privacy.

\clearpage

\section{Future Directions for Auto-tuning / Online Estimation\texorpdfstring{ of $\lambda$}{}}
\label{supp-sec:future-directions}

Apart from tuning $\lam$ for \mrmtl (or other hyperparameters for other personalization methods) non-adaptively with grid search or random search, another approach is to leverage some form of online estimation as in~\cite{van2018three,andrew2021differentially,pichapati2019adaclip} (mentioned in Section~\cref{sec:discussion}). That is, $\lam$ is adjusted at every training round or iteration such that it gradually converges to a good $\lam^*$ (in expectation).

This line of approaches would be promising if their privacy overhead (if any) can be contained within a small factor of the original privacy guarantee---smaller than that offered by the randomized tuning procedure discussed above in \cref{supp-sec:tnb-hpo}.
However, since the hyperparameters of interest are those that navigate the tradeoff between costs from heterogeneity and privacy (such as $\lam$ for \mrmtl), there are several factors that make auto-tuning particularly challenging in our setting:
\begin{itemize}[leftmargin=*]
  \item \textbf{A universal and practical measure of heterogeneity is generally unavailable}.
  For \mrmtl, our analysis in Section~\cref{sec:analysis} suggests that the optimal $\lam^*$ depends on a measure of data heterogeneity across silos ($\tau^2$).
  However, in practice, the level of heterogeneity is abstract and hard to quantify even with non-private oracle access to the silo datasets.
  Past work has devised custom measures of heterogeneity such as the variance of client (silo) model or gradient updates~\cite{karimireddy2020scaffold,fedprox}, or the difference between the true global loss and the aggregate of the true local losses, but these measures may not be consistent throughout training~\cite{karimireddy2020scaffold,fedprox} such that they cannot provide a useful estimate of the heterogeneity which should be invariant during training. These metrics may also simply not be obtainable in practice~\cite{Li2020On}.
  The work of~\cite{smith2017federated} considers the use of a client-relationship matrix, but such matrix may be too large ($K^2$ elements) and too noisy under privacy as we may need to directly or implicitly noise every coordinate of the matrix.
  Directly estimating the dataset statistics may also be challenging, since silos may apply custom dataset transformations (e.g.\ standardization or augmentation) that can drastically alter such statistics without affecting the learning dynamics.
  \item \textbf{The estimated heterogeneity may have high variance.}
  In addition to the above, heterogeneity would generally be measured on the client distribution (as some form of ``variance'' across clients, as in~\cite{karimireddy2020scaffold}), but there is generally a limited number of clients in cross-silo settings, meaning that this estimated ``variance'' as a measure of heterogeneity may itself have high variance.
  Moreoever, such measures must also be estimated privately, which means that it would have even higher variance from the additional noise.
  This could be problematic since the best hyperparameter for personalization may depend on such heterogeneity measure (as is the case for \mrmtl in simplified settings), and the precision of a potential auto-tuning precedure may suffer as a result.
  \item \textbf{Auto-tuning procedures may need to be developed specifically for each method.}
  For example, local finetuning may use the number of federated training rounds as the hyperparameter for determining ``how much'' to personalize, but such hyperparameter would likely be auto-tuned in a drastically different way compared to auto-tuning for $\lam$ for \mrmtl.
  It is in general an open question as to whether method-specific auto-tuning procedures will always outperform the simple randomized procedure described in \cref{supp-sec:tnb-hpo}, in terms of both precision and privacy overhead.
\end{itemize}

\clearpage

\section{Additional Experimental Results}

\subsection{Local Finetuning \texorpdfstring{(Extension of \cref{fig:finetune-gap})}{}}

In \cref{fig:finetune-gap}, we made the observation that local finetuning~\cite{wang2019federated,yu2020salvaging,cheng2021fine} as a personalization strategy (warm-starting from a global model and continue training using local data) may not always improve utility as expected.
Extending on \cref{fig:finetune-gap}, we examine the behavior of local finetuning starting at different stages of training (under a fixed total number of rounds) as well as under varying privacy budgets in \cref{supp-fig:finetune-sweep}. We observe that the phenomenon illustrated in \cref{fig:finetune-gap} is indeed reflected across different settings: as soon as local finetuning begins, the DP utility gap can quickly widen, and the utility of finetuning under privacy can roughly reduce to the utility of local training.

Note, however, that these observations serve to give insights into an interesting behavior of local finetuning under silo-specific sample-level DP and \textit{do not} preclude the possibility that local finetuning can still outperform both local training and FedAvg. Indeed, local finetuning was observed to outperform local training in, e.g., \cref{fig:pareto-merged}~(d) and \cref{supp-fig:pareto-school-t200}.

\begin{figure}[ht]
  \centering
  \includegraphics[width=0.328\linewidth]{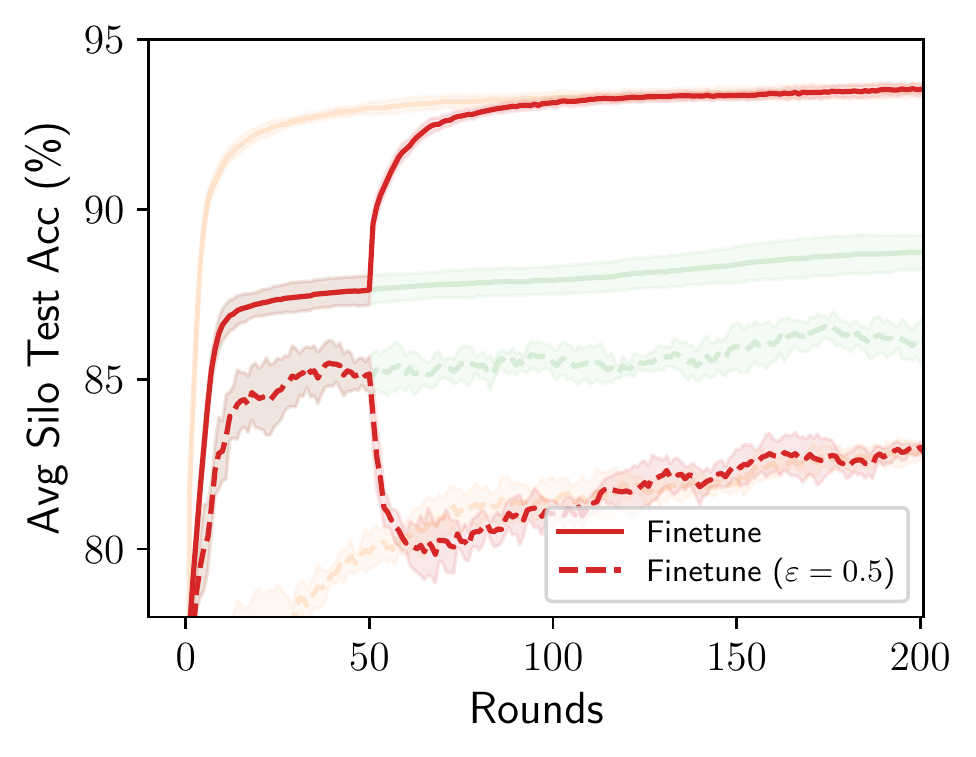}
  \includegraphics[width=0.328\linewidth]{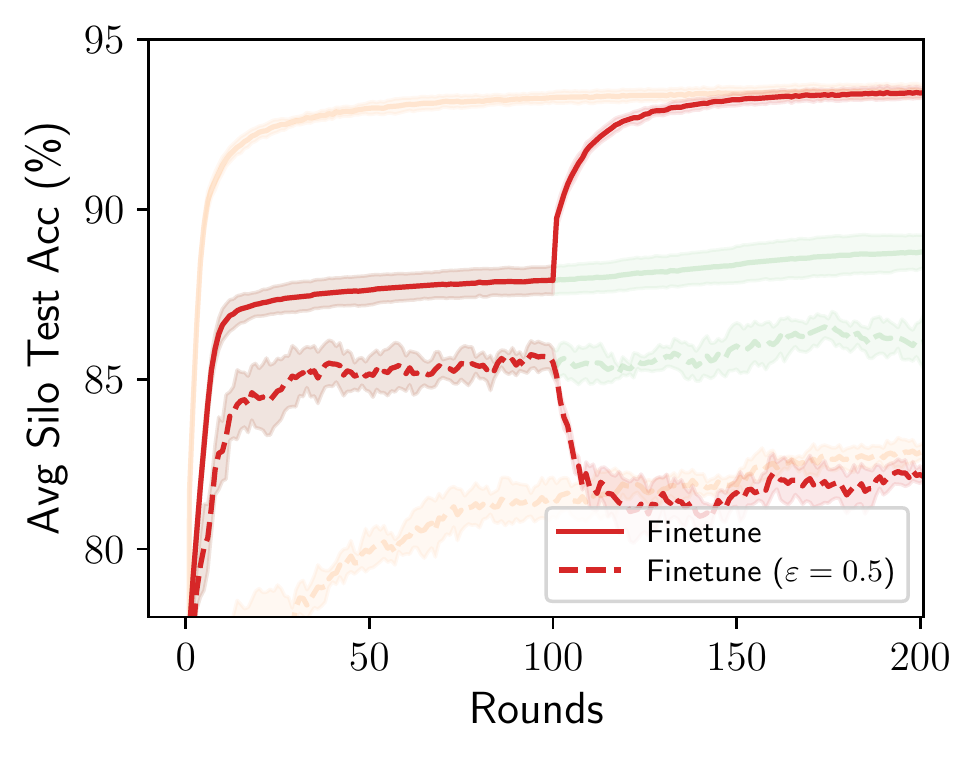}
  \includegraphics[width=0.328\linewidth]{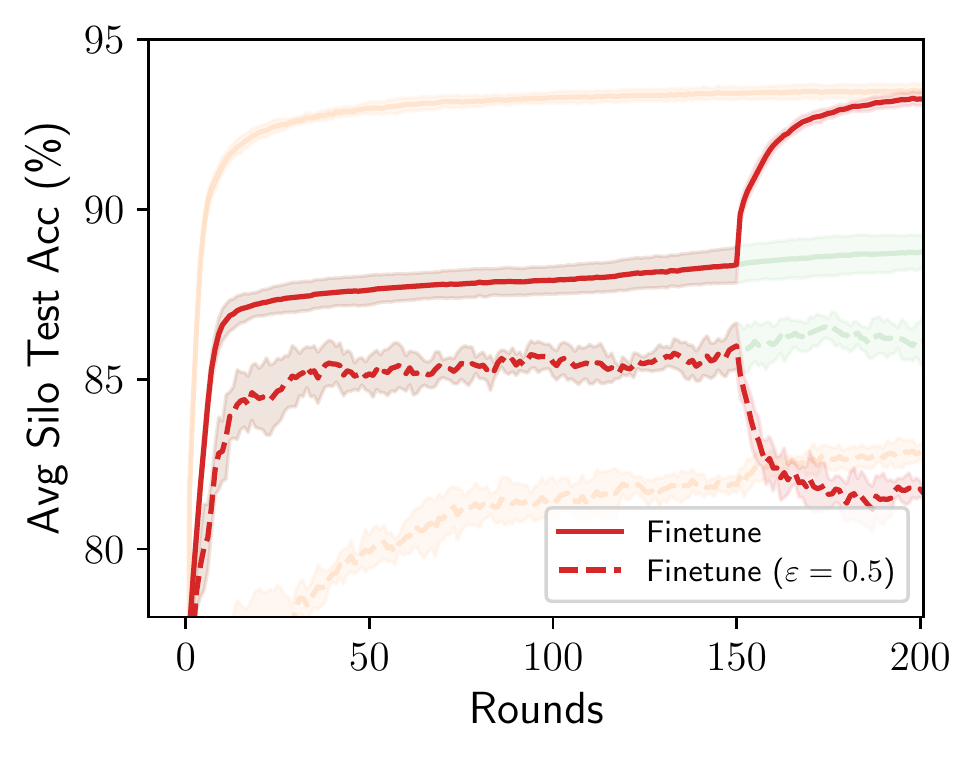}
  \includegraphics[width=0.328\linewidth]{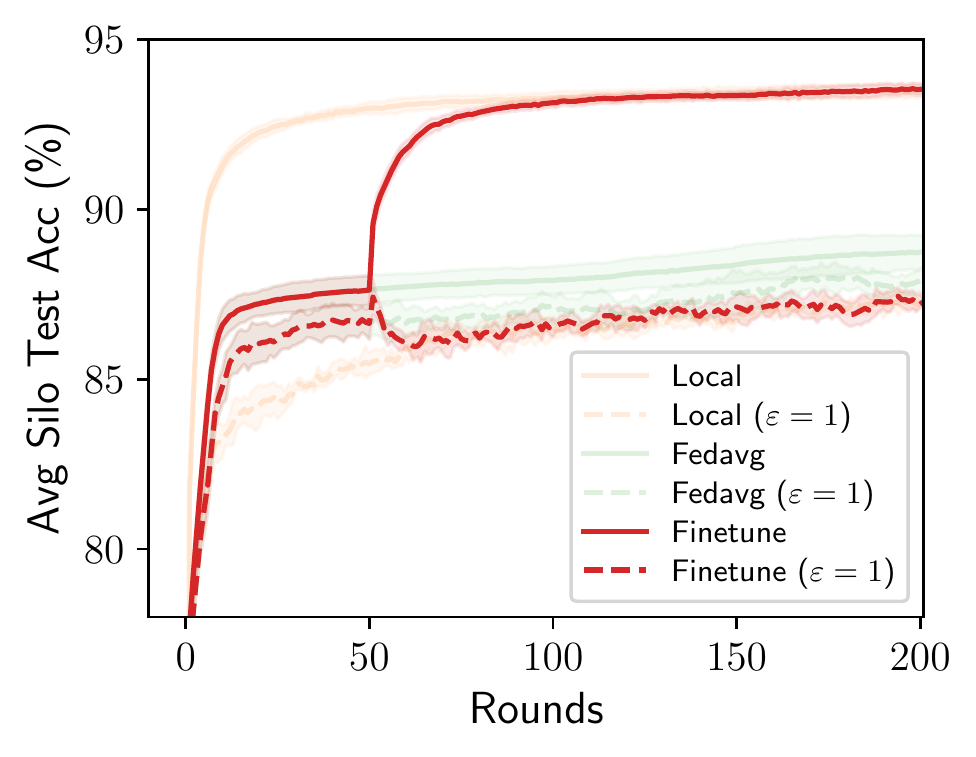}
  \includegraphics[width=0.328\linewidth]{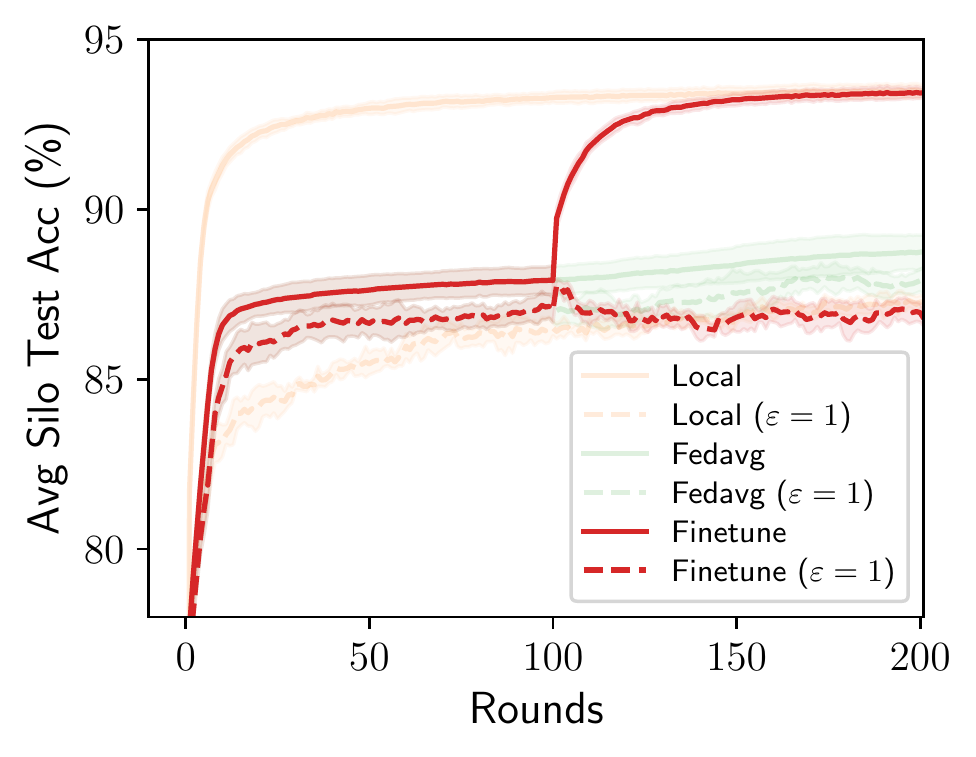}
  \includegraphics[width=0.328\linewidth]{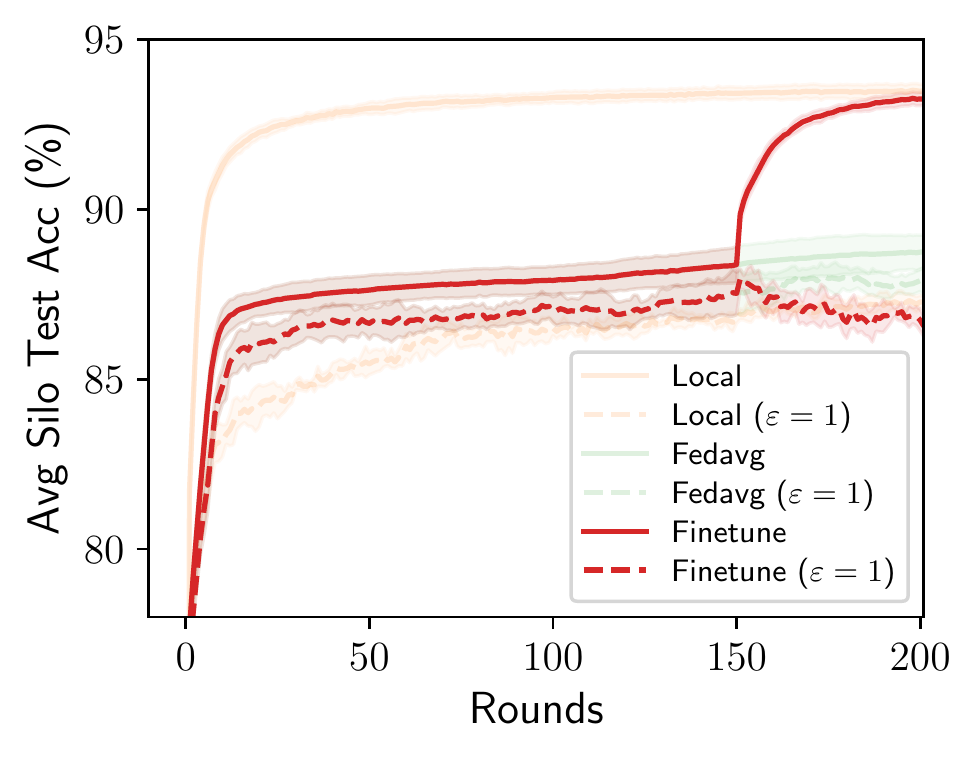}
  \includegraphics[width=0.328\linewidth]{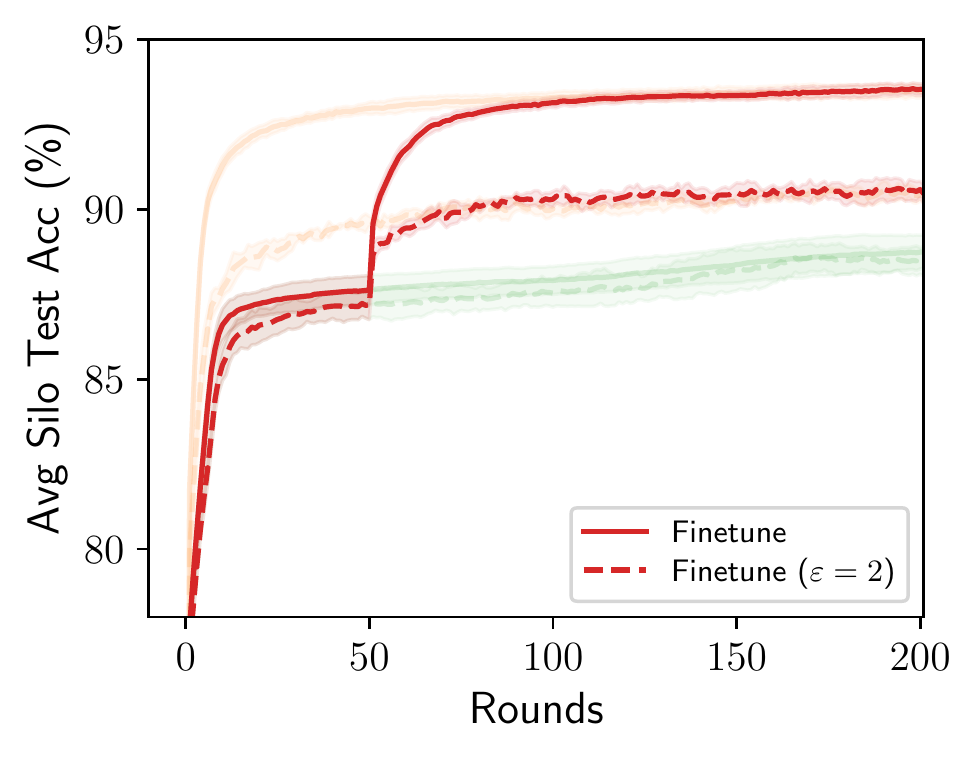}
  \includegraphics[width=0.328\linewidth]{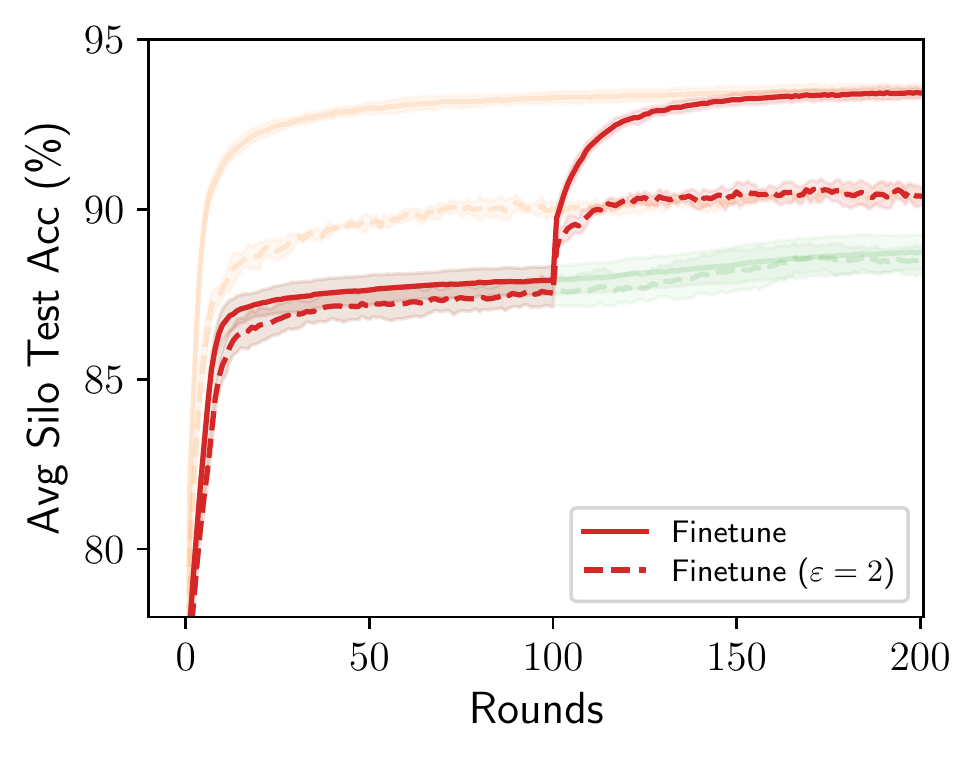}
  \includegraphics[width=0.328\linewidth]{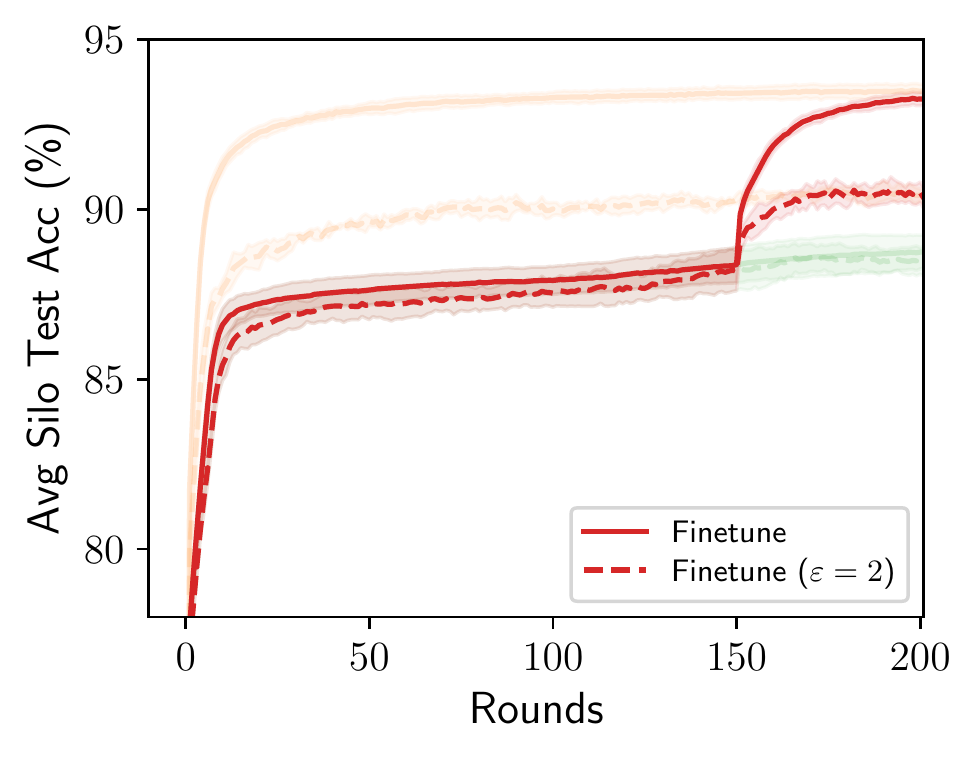}
  \caption{
    (Extension of \cref{fig:finetune-gap}) \textbf{Behavior of local finetuning} starting at different stages of training (25\%, 50\%, 75\%) and under varying privacy budgets ($\eps \in [0.5, 1, 2], \delta=10^{-7}$) for $T=200$ rounds on the \textbf{Vehicle} dataset.
  }
  \label{supp-fig:finetune-sweep}
\end{figure}

\subsection{Utility of \texorpdfstring{$\mrmtl$}{MR-MTL} as a Function of \texorpdfstring{$\lambda$}{Lambda} \texorpdfstring{(Extension of \cref{fig:mrmtl-bump})}{}}
\label{supp-sec:bump-extension}

  We also extend on \cref{fig:mrmtl-bump} to consider varying privacy budgets, and the results are shown in \cref{supp-fig:mrmtl-varying-bumps} (Vehicle), \cref{supp-fig:mrmtl-varying-bumps-school} (School), and \cref{supp-fig:mrmtl-varying-bumps-cifar10} (Heterogeneous CIFAR-10). Note that on School, the test metric is MSE thus lower is better. There are several notable observations:
  \begin{itemize}[leftmargin=*]
    \item \textbf{$\lambda^*$ decreases as $\eps$ grows (weaker privacy)}: With weaker privacy (larger $\eps$), we have a smaller optimal $\lambda^*$ (i.e. the location of the utility ``bump'' under DP gradually shifts to the left). This behavior is characterized by \cref{thm:optimal-lambda}, where under stronger privacy, silos benefit from more federation as a means to reduce the effect of privacy noise.
    \item \textbf{$\mrmtl$ does not always outperform FedAvg} (at high privacy regimes):
    One minor caveat is that \cref{thm:optimal-global-error-gap} says that the utility gap from $\mrmtl$ to FedAvg is always nonnegative under federated mean estimation;
    however, as discussed in Sections~\cref{sec:method} and \cref{sec:analysis}, $\mrmtl$ does not approach FedAvg with larger $\lambda$ in general learning settings since the \mrmtl objective may become too hard to solve via (DP-)SGD.
    Thus, despite its utility advantage over local training at $\lambda^*$, it may never reach the performance of FedAvg. See, e.g., subplots of $\eps \in [0.1, 0.2]$ in \cref{supp-fig:mrmtl-varying-bumps}.
    \item \textbf{Utility advantage of \mrmtl~($\lam^*$) over local/FedAvg changes with $\eps$}: Observe that with larger $\eps$, the utility advantage of \mrmtl~($\lam^*$) is larger compared to FedAvg and is smaller compared to local training. This behavior is characterized by \cref{thm:optimal-local-error-gap,thm:optimal-global-error-gap}.
  \end{itemize}

  Note also that for Heterogeneous CIFAR-10 (\cref{supp-fig:mrmtl-varying-bumps-cifar10}), Ditto~\cite{li2021ditto} exhibits a slightly different interpolation behavior compared to \mrmtl (e.g.\ it has larger optimal $\lambda^*$ under privacy), though at $\lambda^*$ its utility underperforms that of \mrmtl under privacy.

  \begin{figure}[t]
    \centering
    \includegraphics[width=0.329\linewidth]{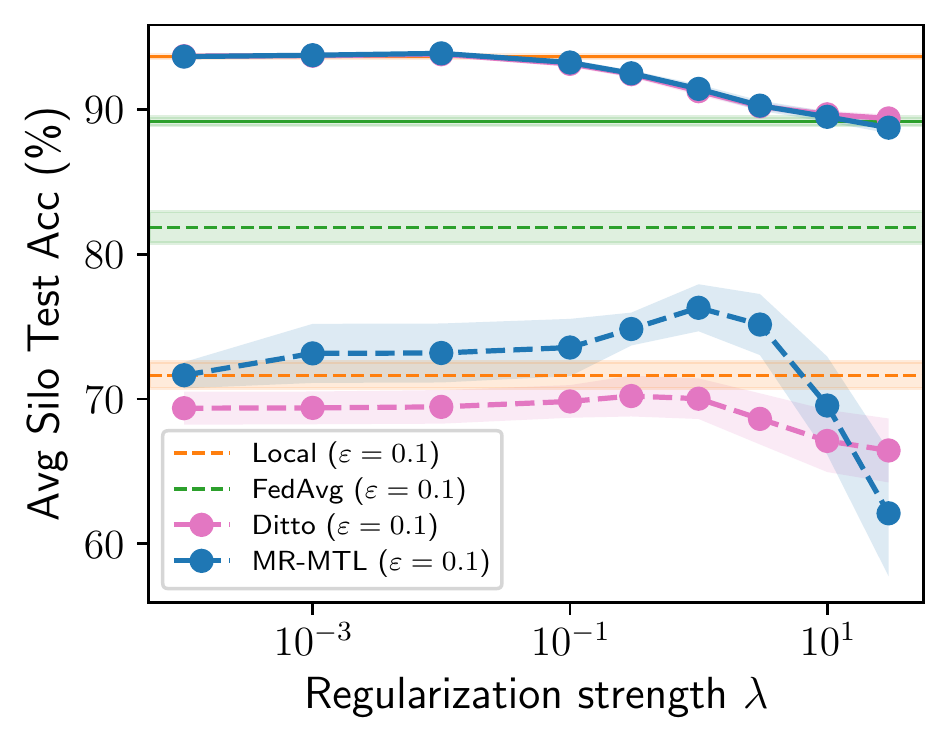}
    \includegraphics[width=0.329\linewidth]{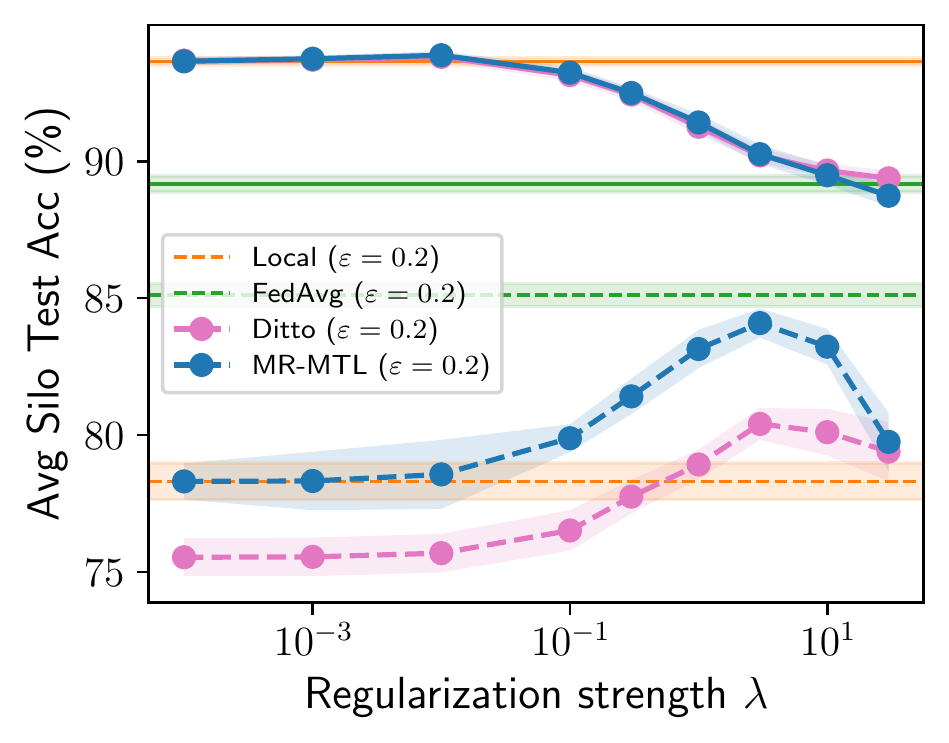}
    \includegraphics[width=0.329\linewidth]{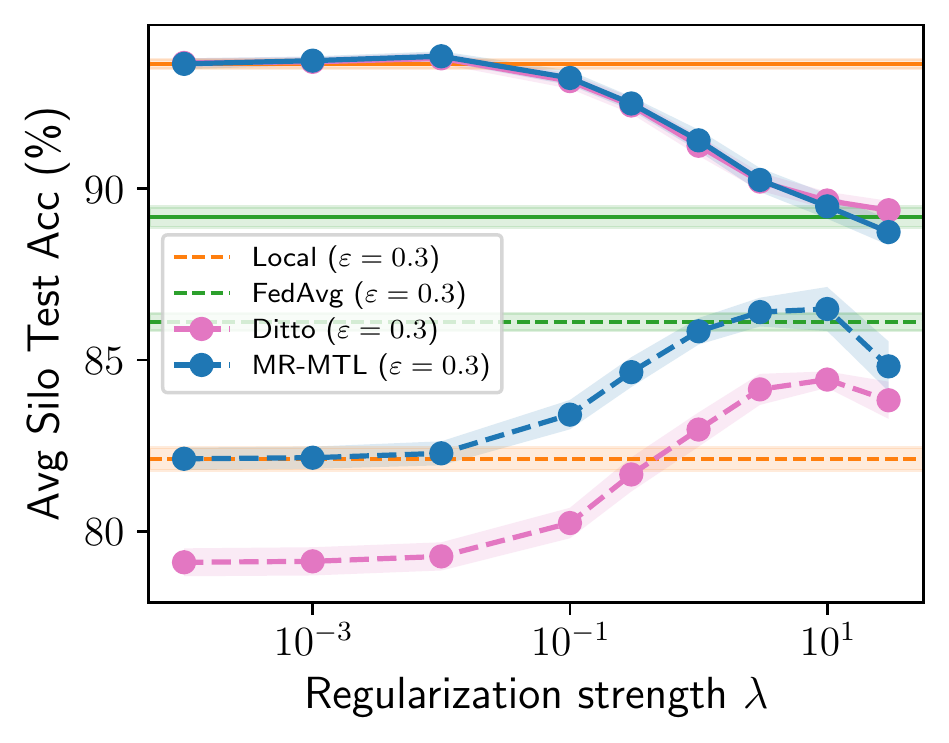}
    \includegraphics[width=0.329\linewidth]{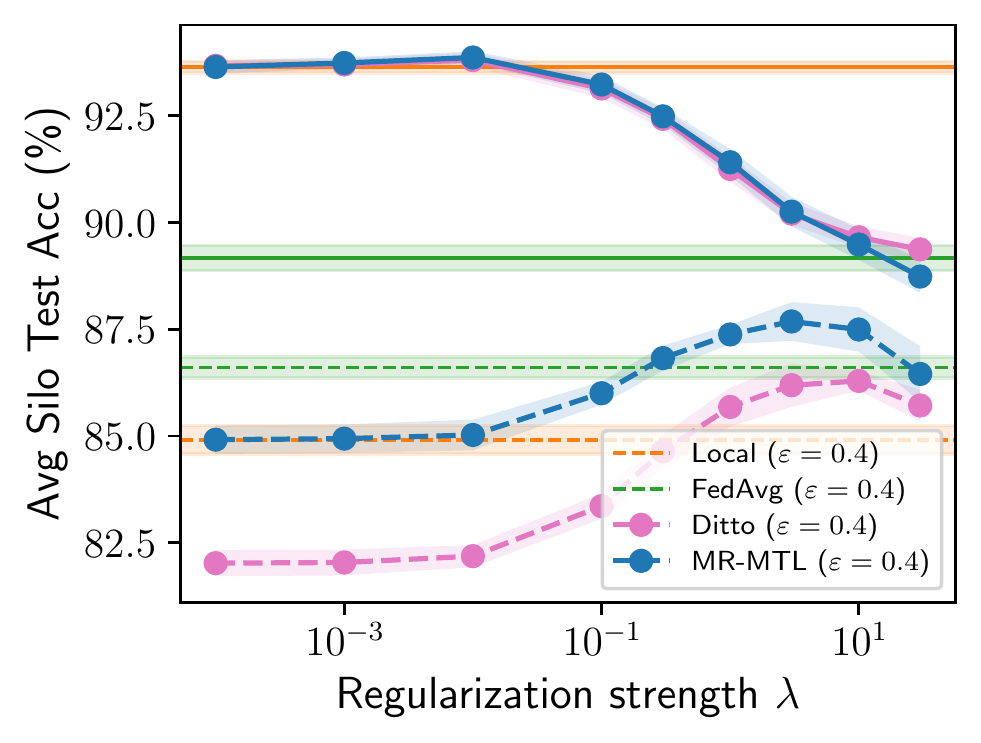}
    \includegraphics[width=0.329\linewidth]{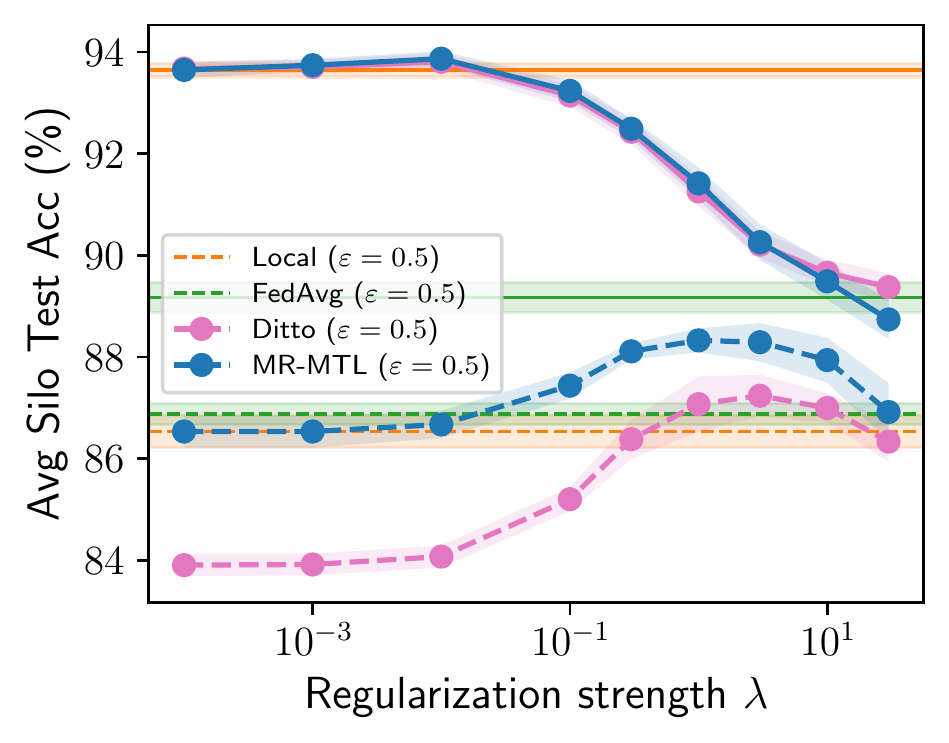}
    \includegraphics[width=0.329\linewidth]{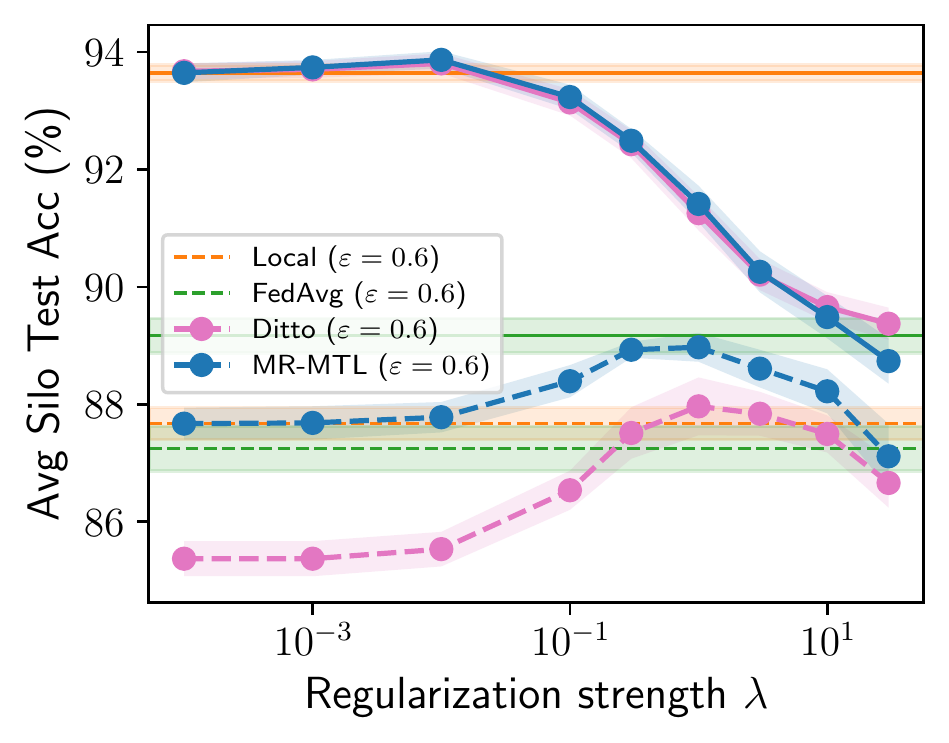}
    \includegraphics[width=0.329\linewidth]{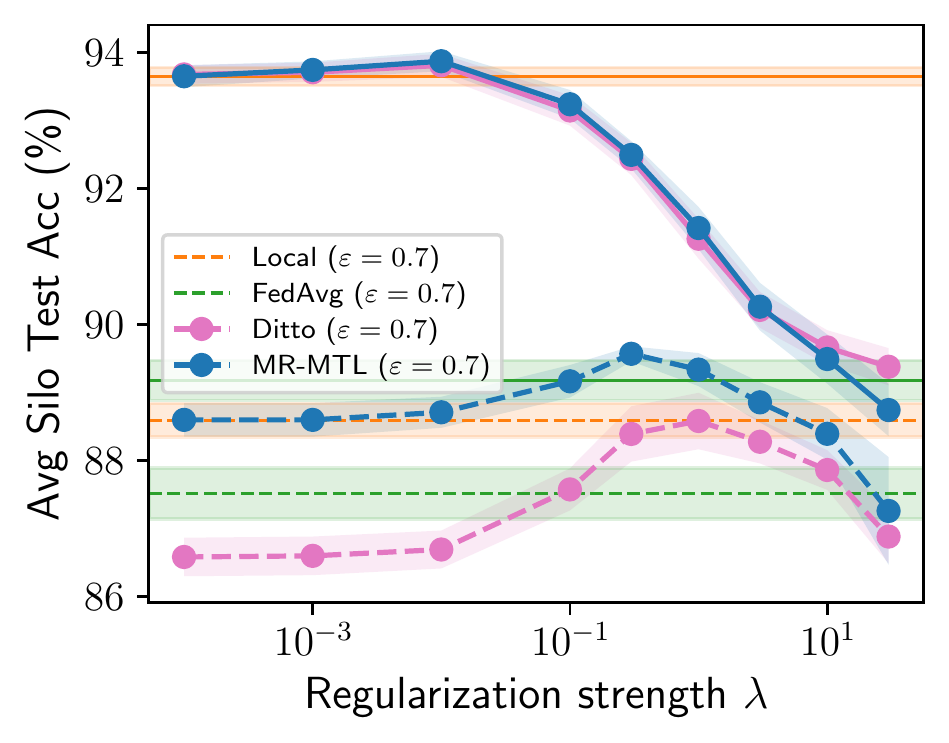}
    \includegraphics[width=0.329\linewidth]{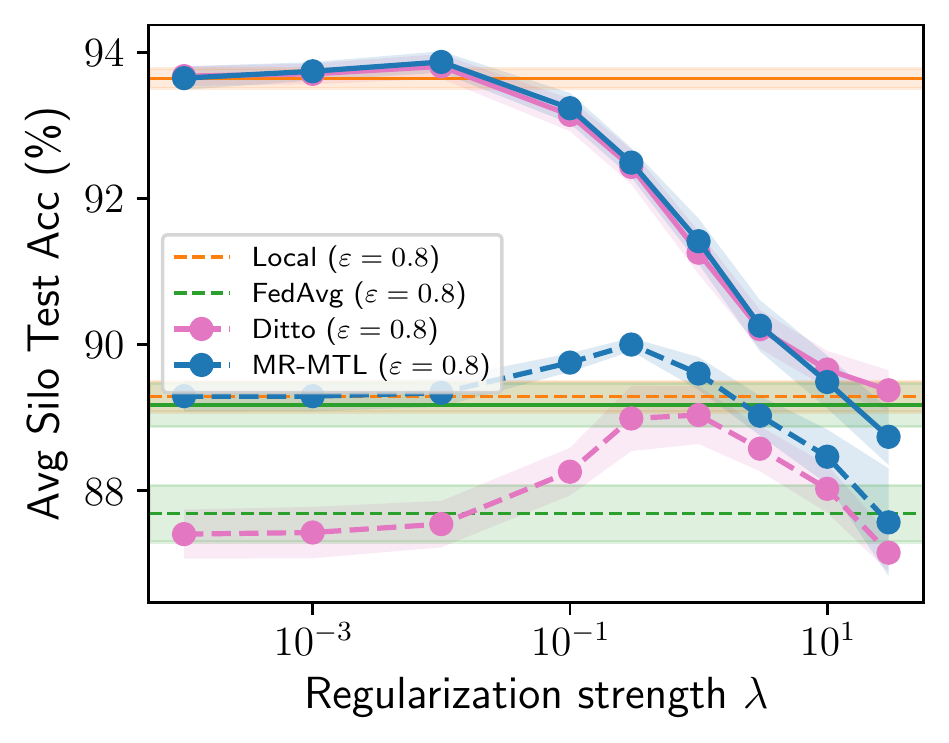}
    \includegraphics[width=0.329\linewidth]{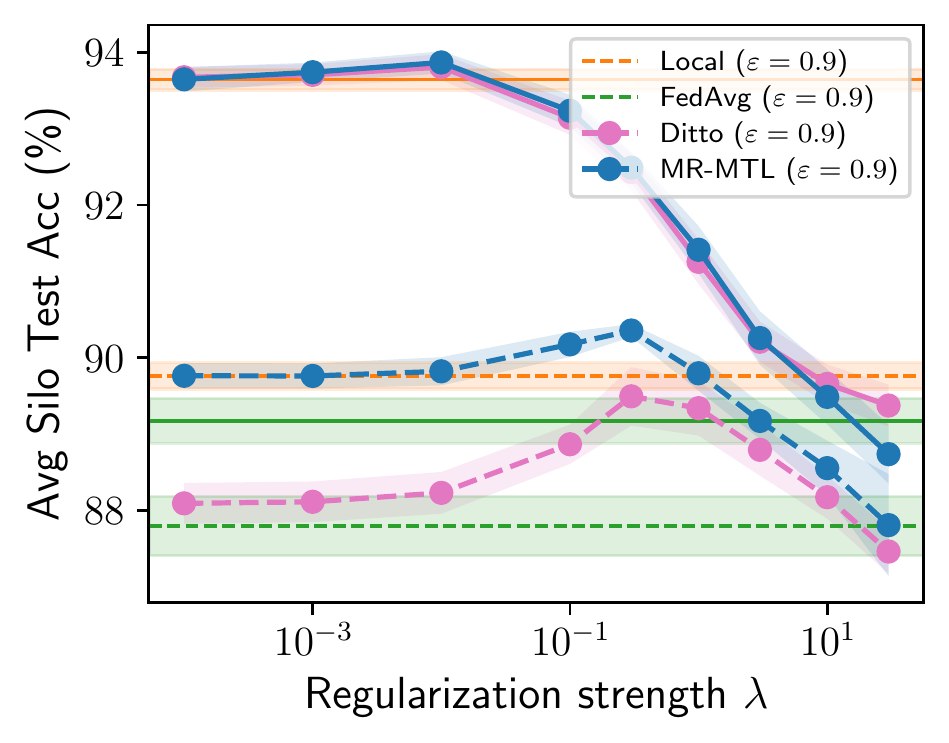}
    \caption{
      (Extension of \cref{fig:mrmtl-bump}) \textbf{Behavior of $\mrmtl$ as a function of $\lambda$ across varying privacy budgets} ($\eps \in [0.1, 0.2, ..., 0.9], \delta=10^{-7}$) for $T=400$ rounds on the \textbf{Vehicle} dataset.
      Solid lines refer to the non-private runs (same across all plots).
    }
    \label{supp-fig:mrmtl-varying-bumps}
  \end{figure}
  \begin{figure}[t]
    \centering
    \includegraphics[width=0.329\linewidth]{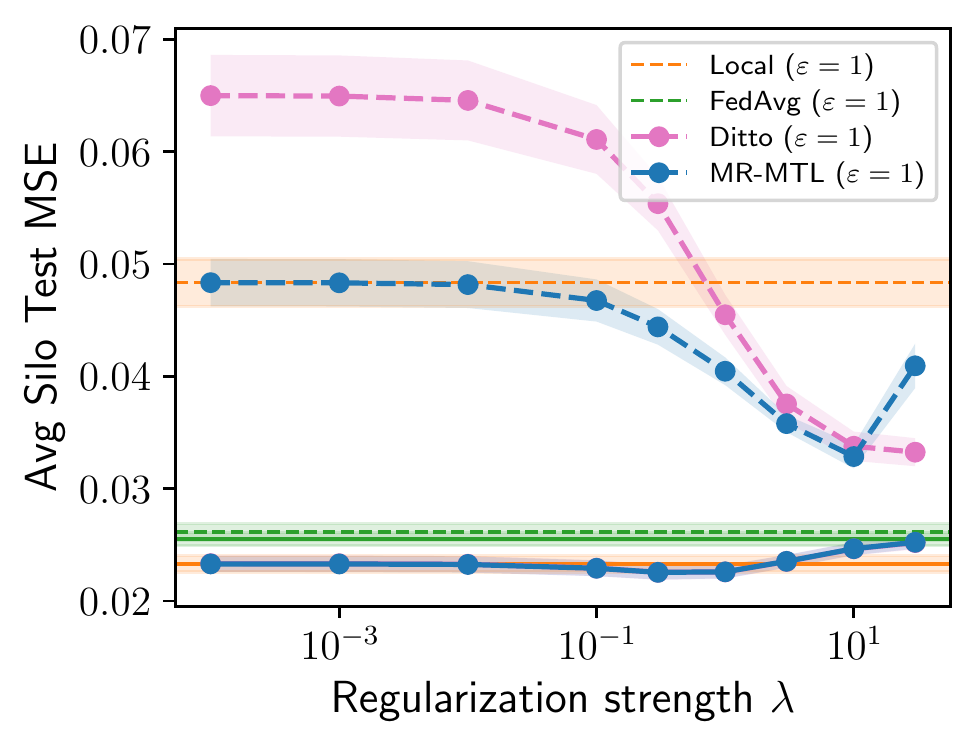}
    \includegraphics[width=0.329\linewidth]{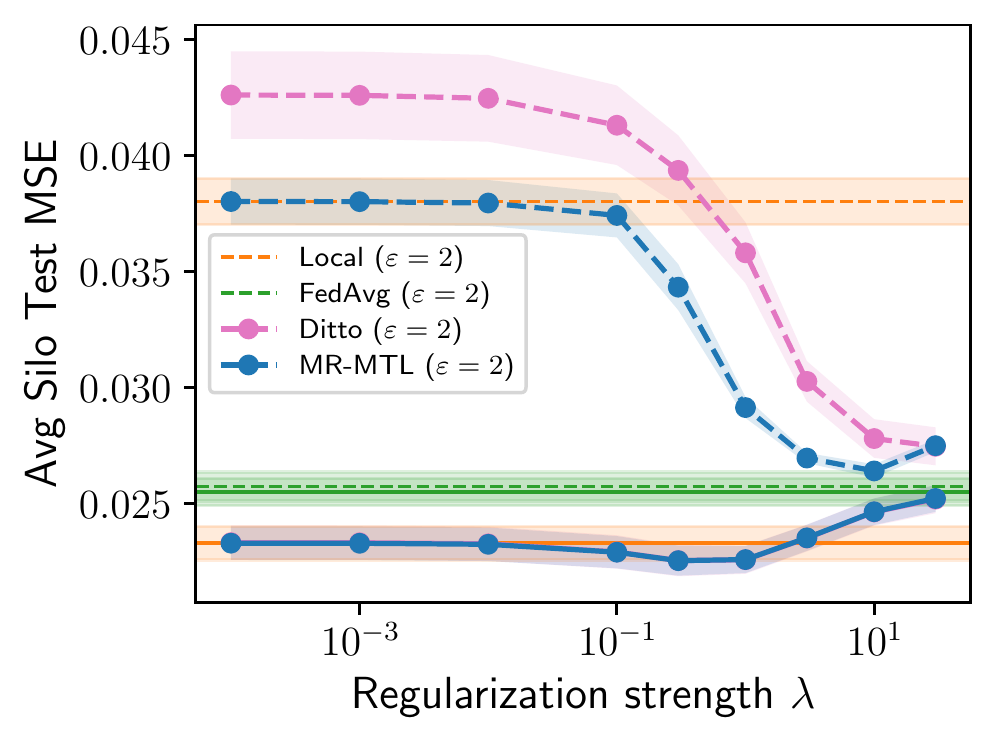}
    \includegraphics[width=0.329\linewidth]{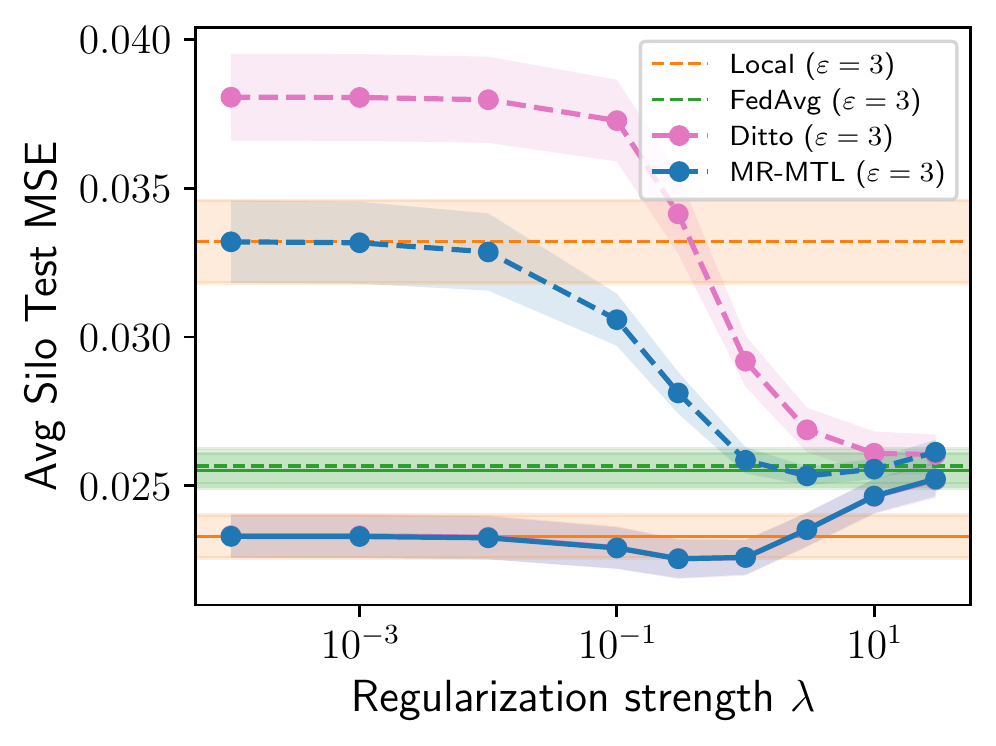}
    \includegraphics[width=0.329\linewidth]{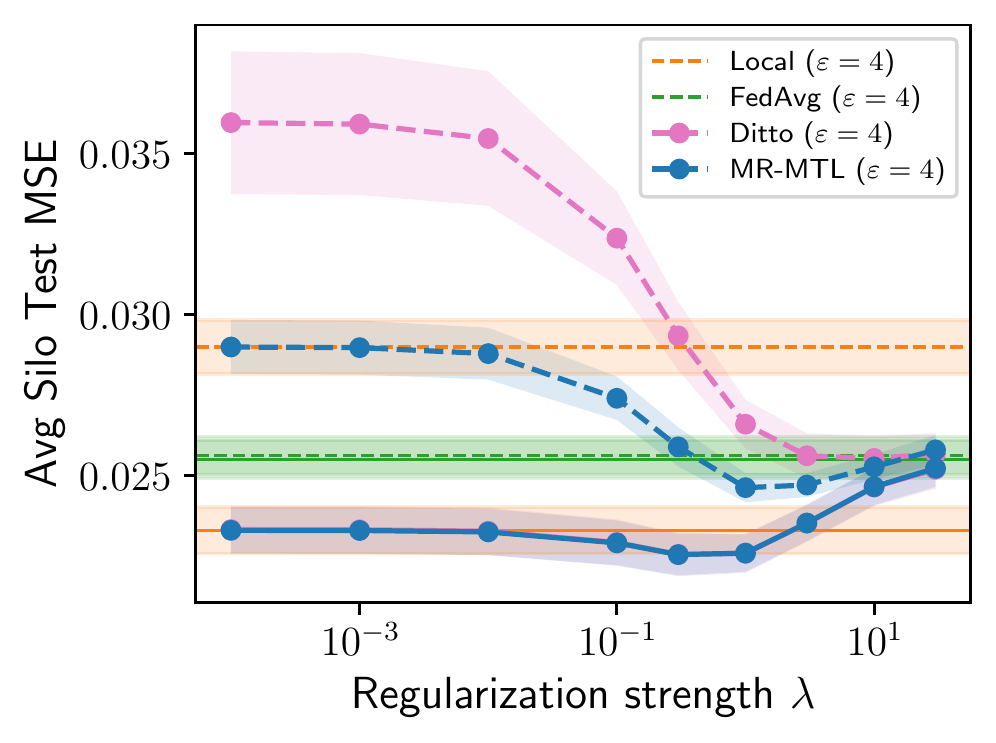}
    \includegraphics[width=0.329\linewidth]{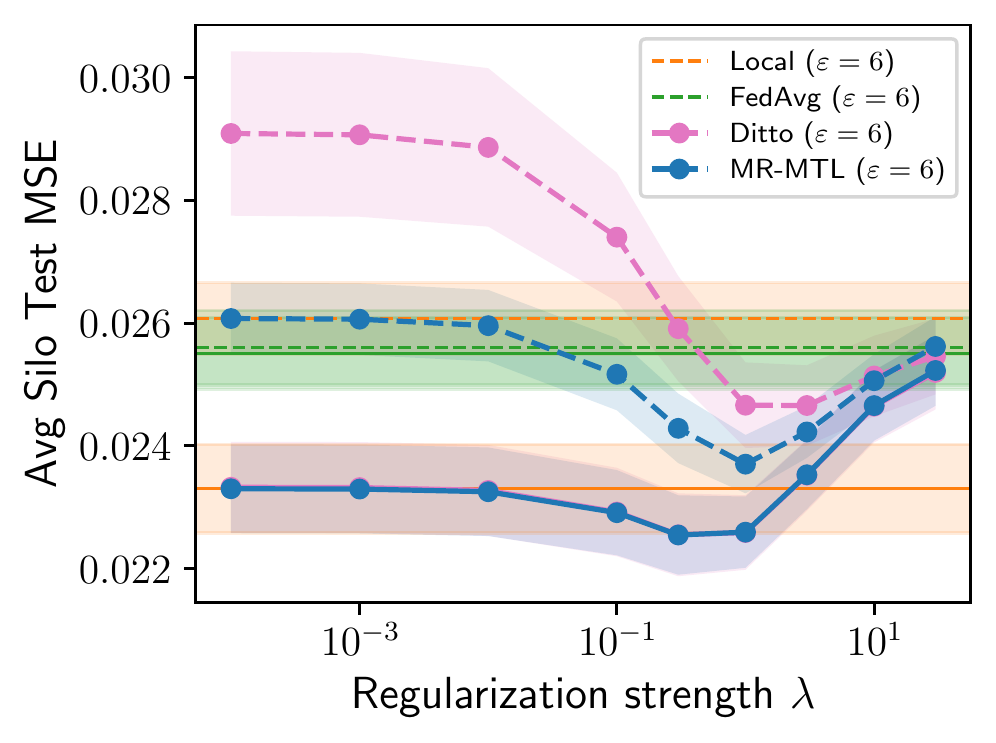}
    \includegraphics[width=0.329\linewidth]{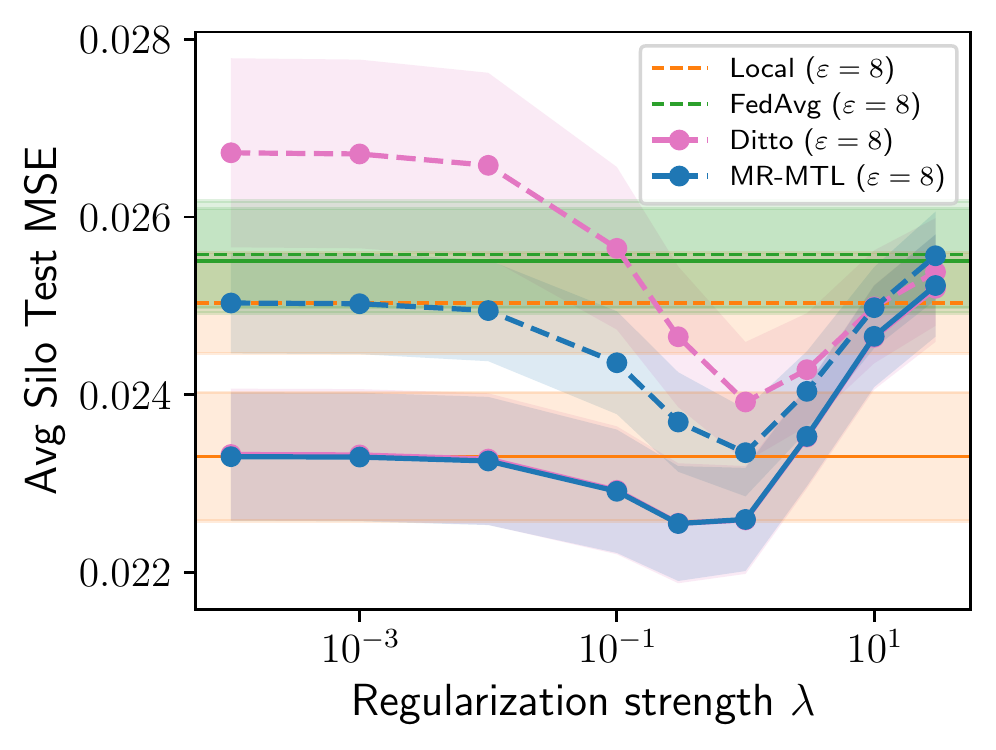}
    \caption{
      (Extension of \cref{fig:mrmtl-bump}) \textbf{Behavior of $\mrmtl$ as a function of $\lambda$ across varying privacy budgets} ($\eps \in [1, 2, 3, 4, 6, 8], \delta=10^{-3}$) for $T=200$ rounds on the \textbf{School} dataset. Cf.~\cref{fig:pareto-merged}~(b) and \cref{supp-fig:pareto-school-t200}, where the privacy regime of interest is around $\eps \approx 6$.
      Solid lines refer to the non-private runs (same across all plots).
    }
    \label{supp-fig:mrmtl-varying-bumps-school}
  \end{figure}
  \begin{figure}[t]
    \centering
    \includegraphics[width=0.329\linewidth]{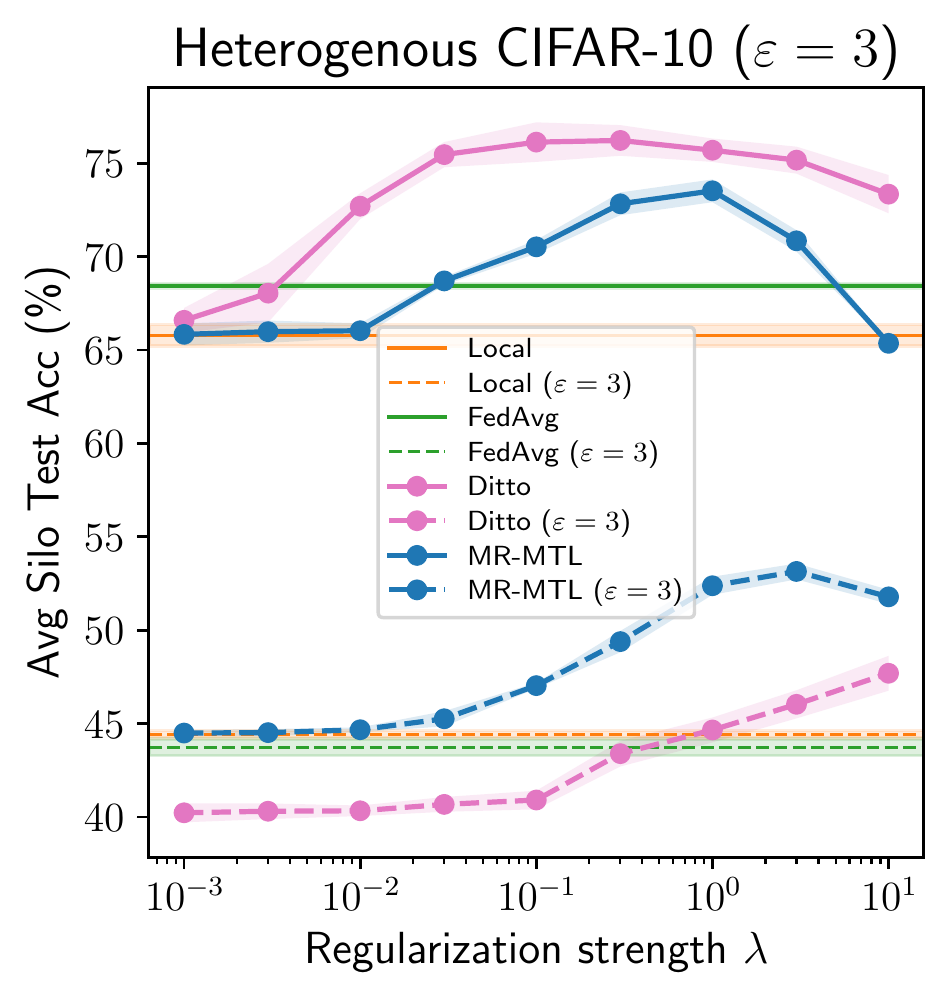}
    \includegraphics[width=0.329\linewidth]{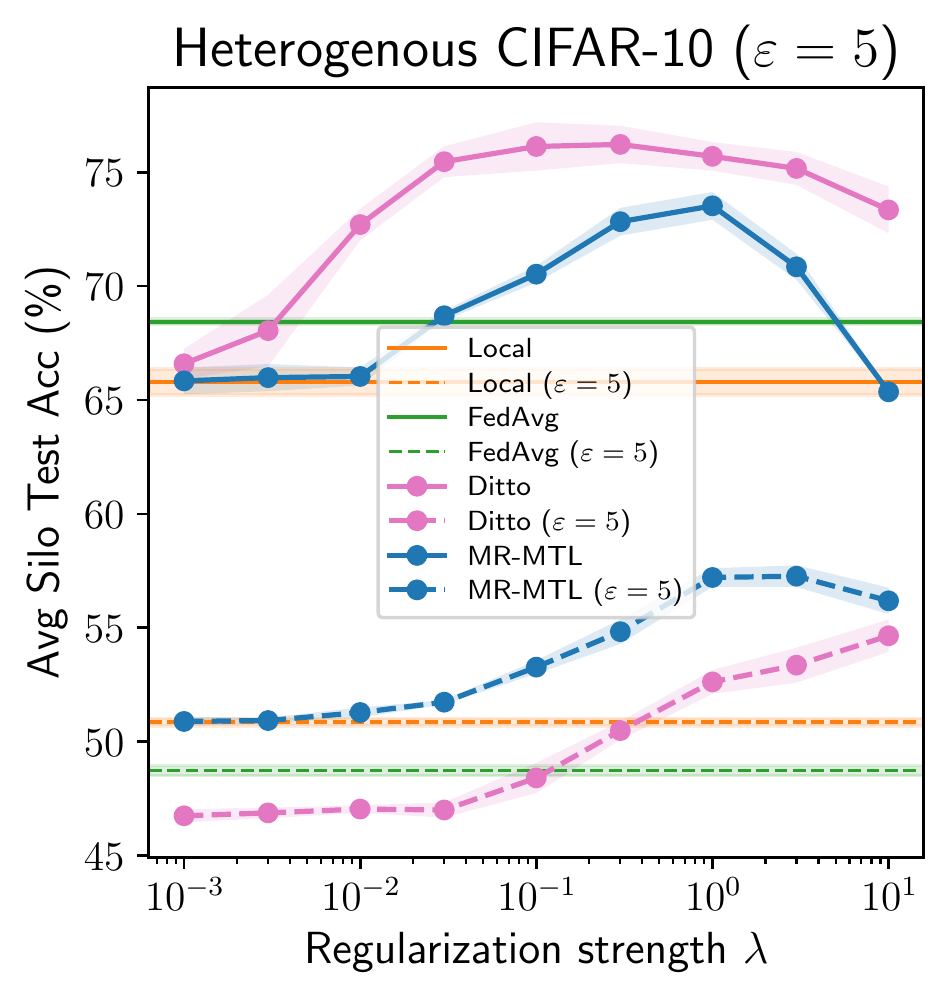}
    \includegraphics[width=0.329\linewidth]{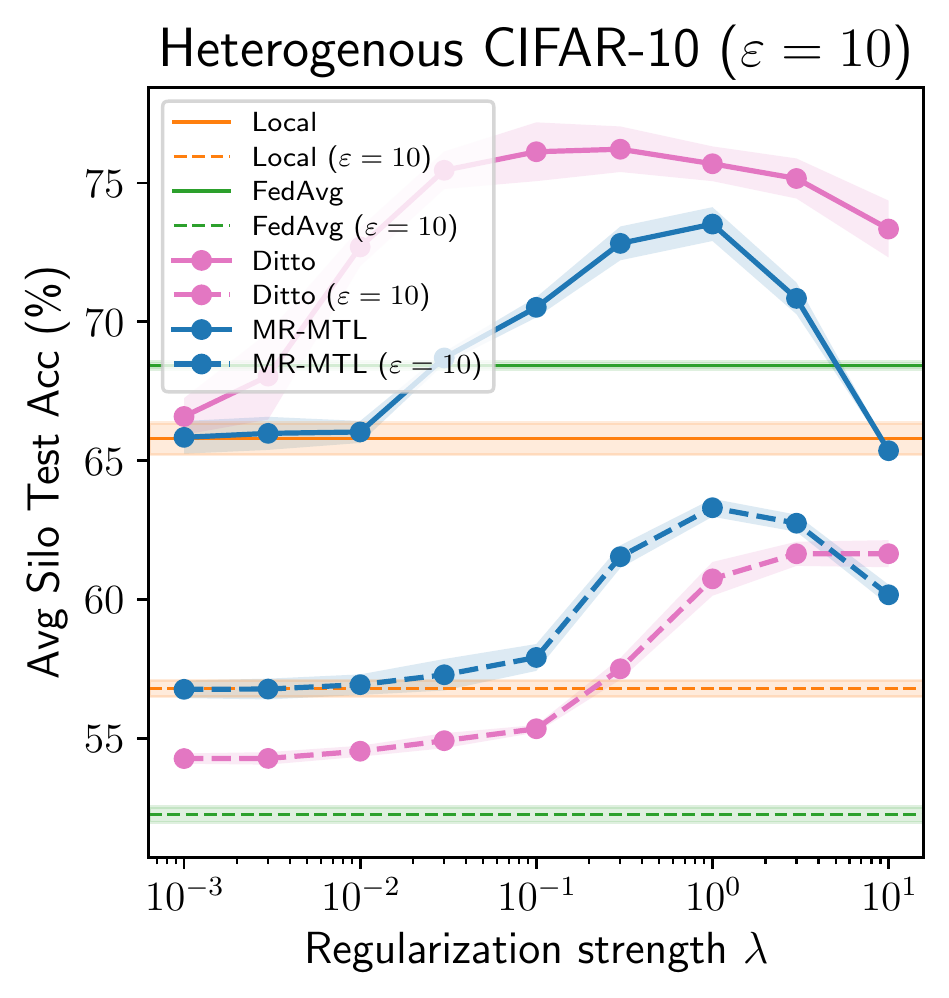}
    \includegraphics[width=0.329\linewidth]{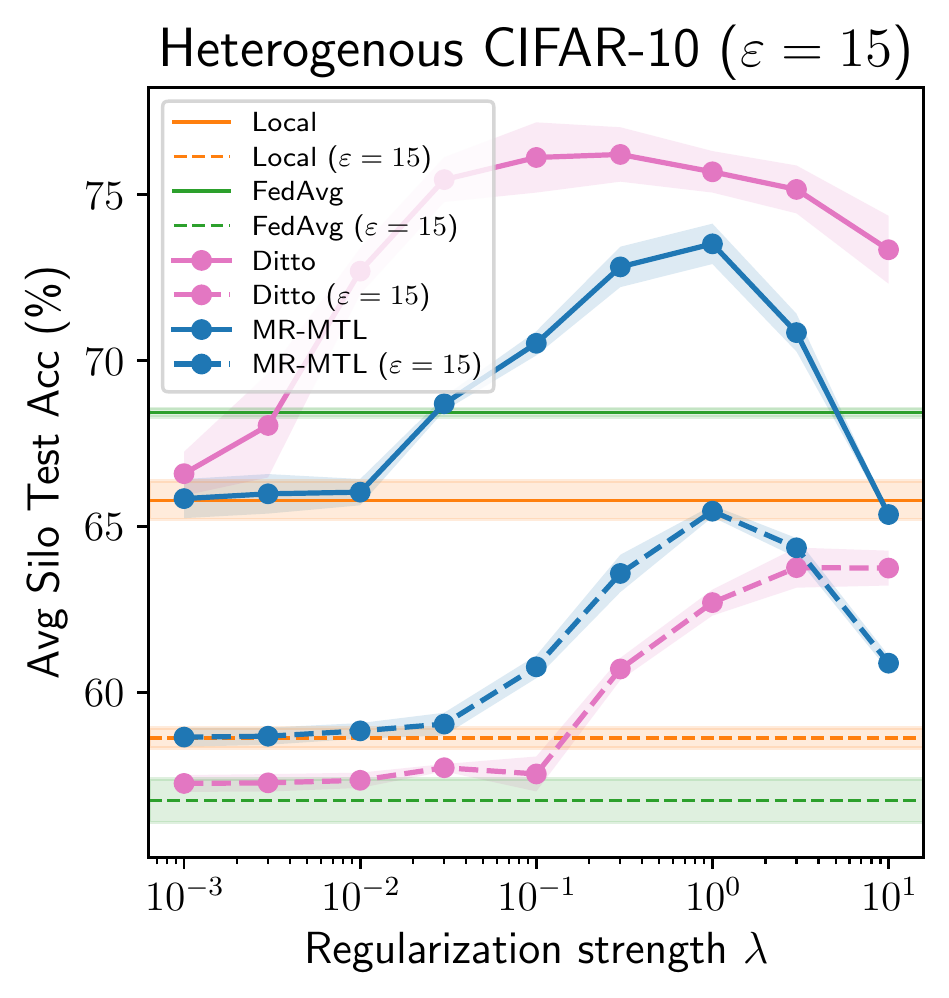}
    \includegraphics[width=0.329\linewidth]{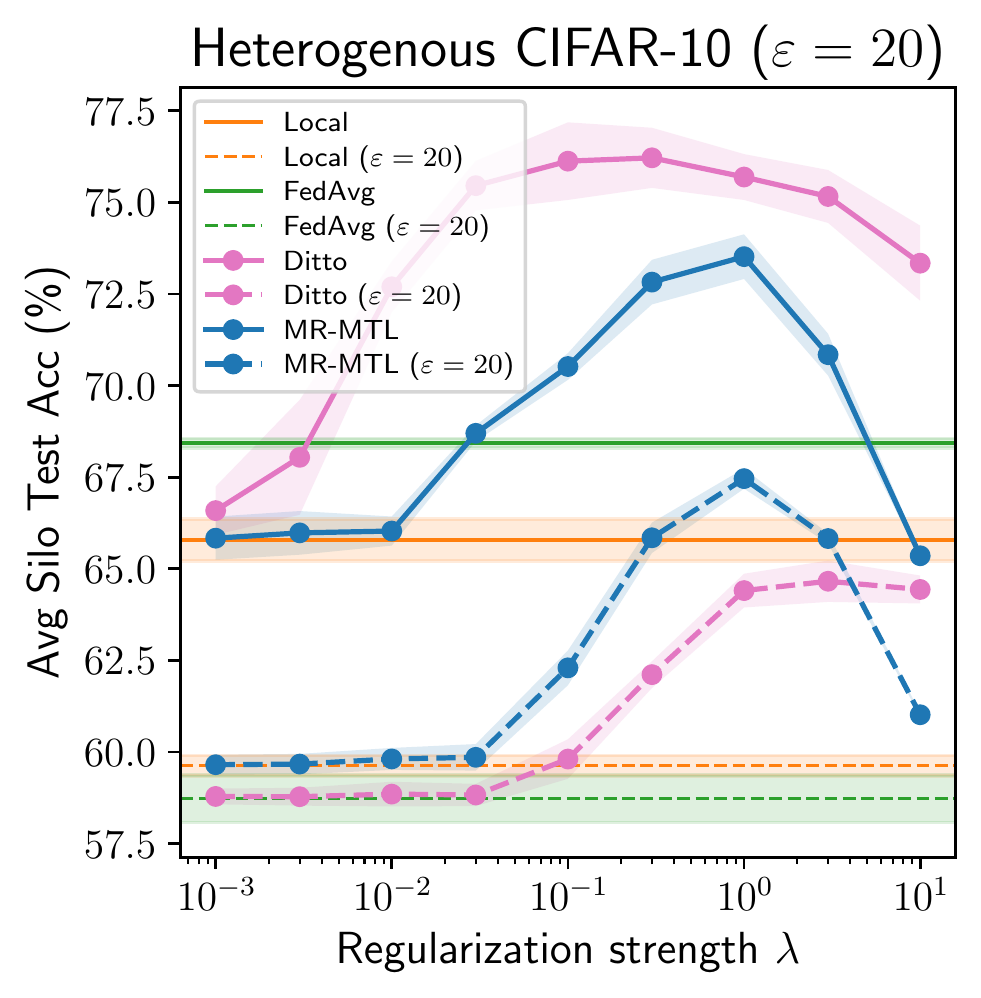}
    \includegraphics[width=0.329\linewidth]{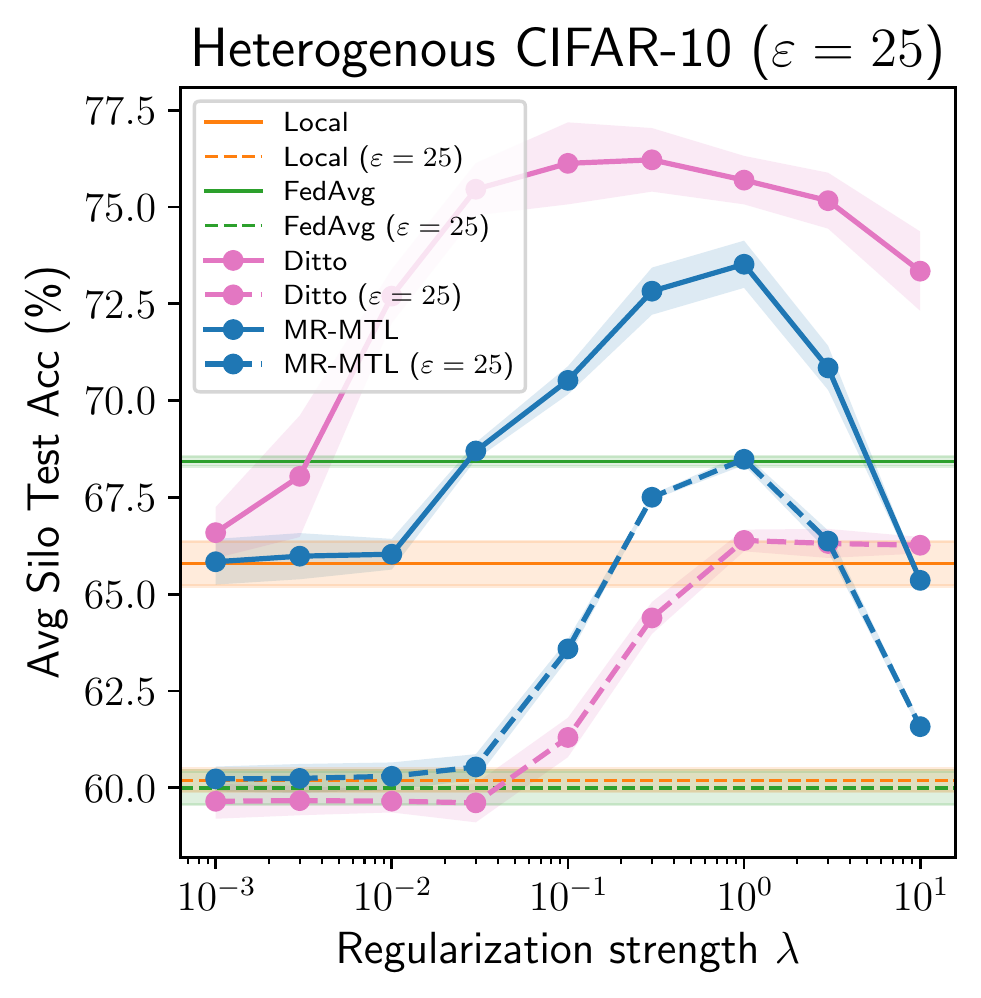}
    \caption{
      (Extension of \cref{fig:mrmtl-bump}) \textbf{Behavior of $\mrmtl$ as a function of $\lambda$ across varying privacy budgets} ($\delta=10^{-4}$) for $T=200$ rounds on \textbf{Heterogeneous CIFAR-10}.
      Solid lines refer to non-private runs (same across all plots) and dashed/dotted lines refer to private runs (with silo-specific sample-level DP).
    }
    \label{supp-fig:mrmtl-varying-bumps-cifar10}
  \end{figure}

\subsection{Subsampled ADNI Dataset}
\label{supp-sec:adni}

We further evaluate the suite of personalization methods on the subsampled ADNI dataset, and the results are shown in \cref{supp-fig:adni-pareto}. There are several notable observations:
\begin{enumerate}[leftmargin=*]
  \item \mrmtl is essentially recovering local training due to the high degree of heterogeneity present in this dataset, with $\lam^* \approx 0$ for most $\eps$ values except around $\eps \approx 3.5$ (annotated with arrows in \cref{supp-fig:adni-pareto}~(a, b)).
  \item Unlike results on other datasets, the privacy regime of interest where \mrmtl outperforms both local/FedAvg (around $\eps\approx 4$) is extremely narrow, compared to $\eps \approx 0.5$ for Vehicle (\cref{fig:pareto-merged}~(a)), $\eps \approx 6$ for School (\cref{fig:pareto-merged}~(b)), and $\eps \approx 1.5$ for GLEAM (\cref{fig:pareto-merged}~(c)). The utility advantage of \mrmtl is also insignificant.
  \item The underlying heterogeneity structure of the dataset is rather pathological in that it is not ameanable to \textit{both} mean-regularization and clustering.
  First, observe from \cref{supp-fig:adni-pareto}~(a) that \mrmtl still opts for a small $\lam^*$ in high-privacy regimes even when FedAvg performs better; one would expect \mrmtl to opt for a larger $\lam$ for lower DP noise (at a cost of higher heterogeneity).
  Second, from \cref{supp-fig:adni-pareto}~(b, c), we observe that both clustering and cluster-preconditoning did not lead to significant utility improvements, although the latter reduces the degree of heterogeneity and allows \mrmtl to opt for a larger $\lam^*$ in the privacy regime of interest.
\end{enumerate}

In general, determining how to better model such heterogeneity (especially in high privacy regimes)  is an interesting direction of future work.

\begin{figure}[t]
  \centering
  \includegraphics[width=\linewidth]{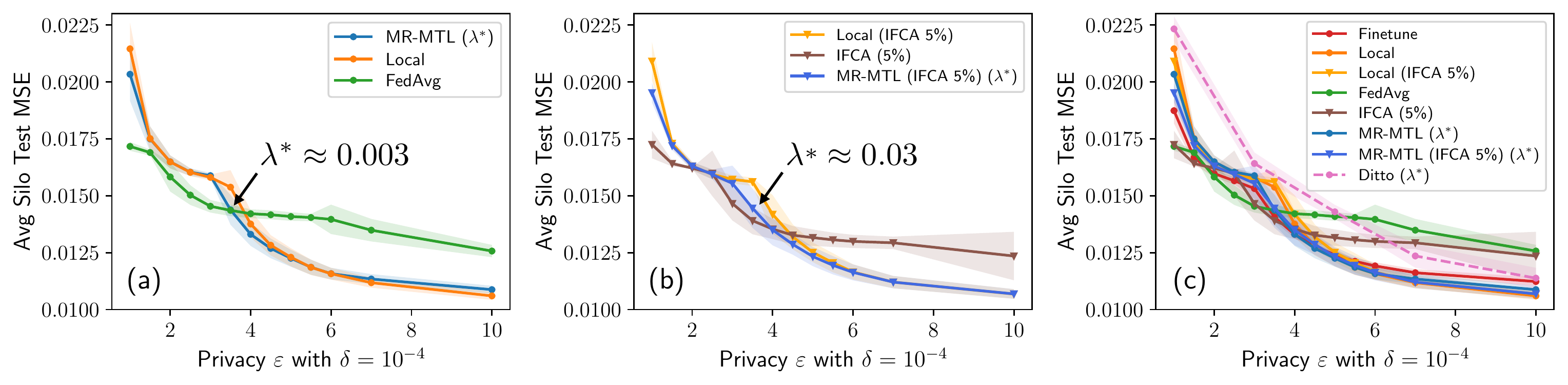}
  \caption{
    \textbf{Test accuracy} (mean~$\pm$~std with 3 seeds) vs \textbf{privacy budgets $\eps$} for various personalization methods on the {\textbf{subsampled ADNI}} dataset with $T=500$ rounds.
    The heterogeneity present in this dataset is unique in that it is not ameanable to both mean-regularization (a) and clustering (b, c), though clustering can mitigate the level of heterogeneity by allowing a larger $\lam^*$ for \mrmtl.
    In such cases of high heterogeneity, local performance tends to be superior for most privacy budgets, and we find that MR-MTL recovers this behavior. Determining how to better model such heterogeneity (especially in high privacy regimes) is an interesting direction of future work.
  }
  \label{supp-fig:adni-pareto}
\end{figure}

\subsection{Additional Discussions}

\begin{figure}
  \centering
  \includegraphics[width=0.4\linewidth]{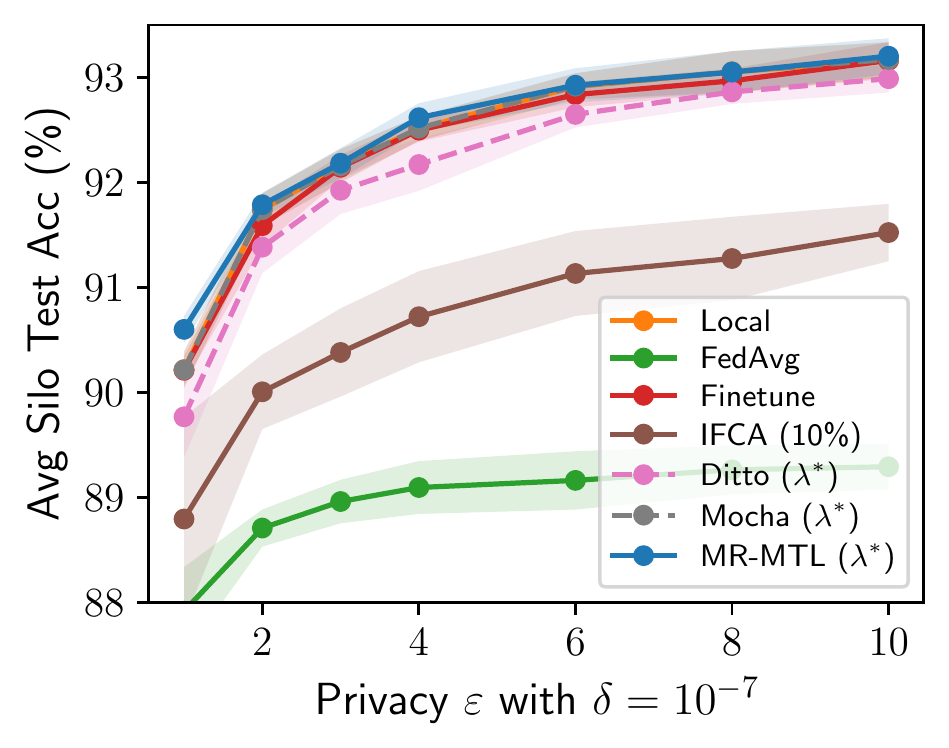}
  \captionof{figure}{(Extension of \cref{fig:pareto-merged}~(b)) Results on the \textbf{Vehicle} dataset for low privacy regimes ($1 \le \eps \le 10$). As privacy becomes weaker, the need for federation diminishes and personalization methods (including local training) perform better. \mrmtl can recover local training by using a small $\lam$.}
  \label{fig:pareto-sweep-eps10}
\end{figure}

\textbf{Behavior on larger privacy budgets (\cref{fig:pareto-sweep-eps10}).}\enspace
We extend the results on Vehicle (\cref{fig:pareto-merged}~(a, b)) by considering larger privacy budgets, and the results are shown in \cref{fig:pareto-sweep-eps10}. We note that as privacy requirement loosens, personalized learning (where each silo maintains its own model) tend to perform better than the case where models are shared across clients (FedAvg and IFCA). This is expected as clients in cross-silo datasets tend to have sufficient data for learning a good local model. Moreover, this also means $\mrmtl$ remains competitive as it can use a smaller $\lambda$.

\textbf{Effect of dataset subsampling (\cref{supp-fig:pareto-school-t200}).}\enspace
Subsampling local datasets would in principle turn the cross-silo learning setting closer to a cross-device learning setting, in which local training becomes less attractive as due to insufficient local data and may opt for federated training despite data heterogeneity. In \cref{supp-fig:pareto-school-t200}, we examine this behavior on the School dataset by using different train/test split ratios of 80\%/20\%, 50\%/50\%, and 20\%/80\%.
We observe that with less local training data: (1) FedAvg performs better since silos benefit from federation despite data heterogeneity; (2) the cross-over point between local training and FedAvg shifts to larger $\eps$ (or they may not be a cross-over point); and (3) \mrmtl may no longer provide an optimal point on the personalization spectrum, since small local datasets with silo-specific sample-level DP necessitate a larger $\lam$ (cf.~\cref{thm:optimal-lambda}) but \mrmtl may not recover FedAvg under (DP-)SGD.
In these cases, client-level DP protection (as is commonly used for cross-device FL) may be more appropriate.

\begin{figure}[ht]
  \centering
  \includegraphics[width=0.329\linewidth]{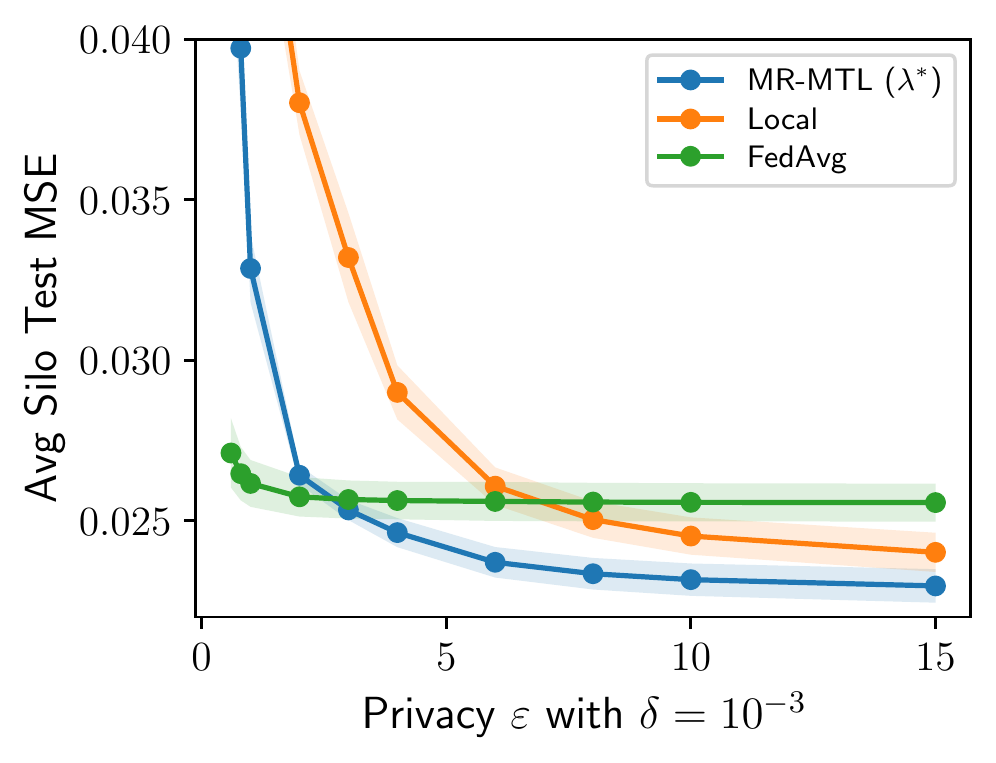}
  \includegraphics[width=0.329\linewidth]{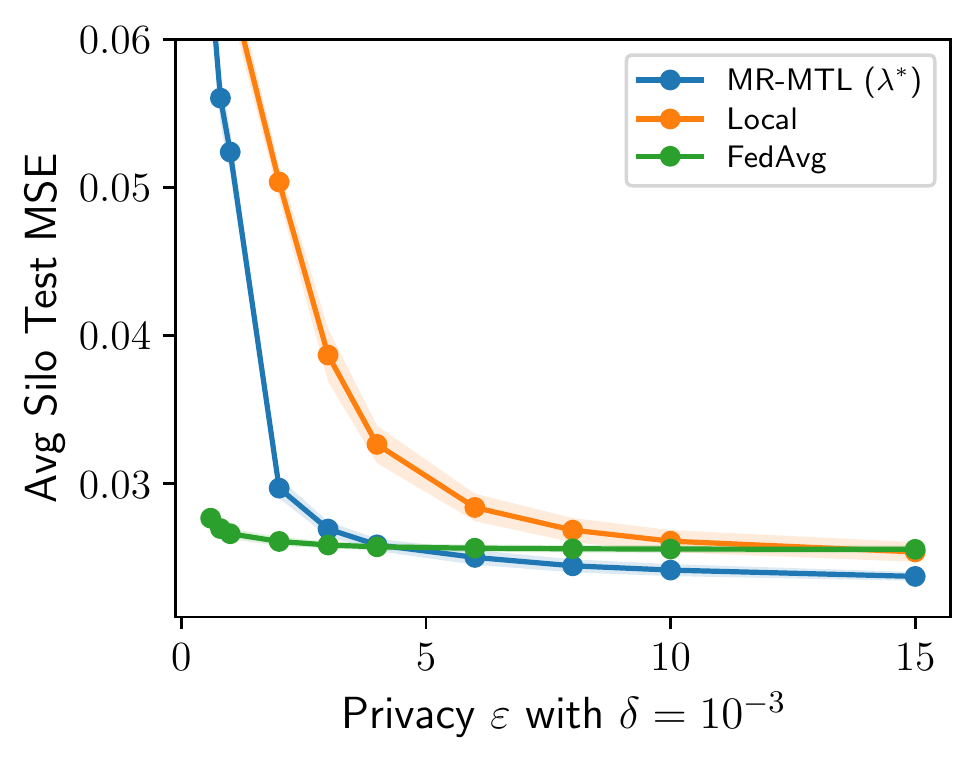}
  \includegraphics[width=0.329\linewidth]{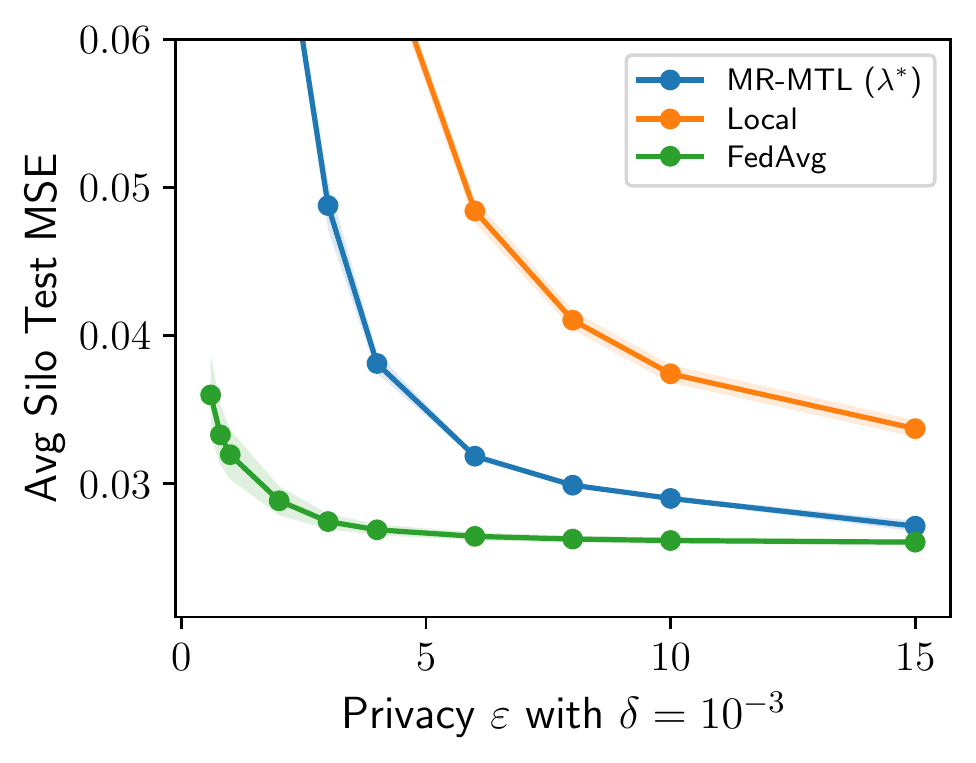}
  \includegraphics[width=0.329\linewidth]{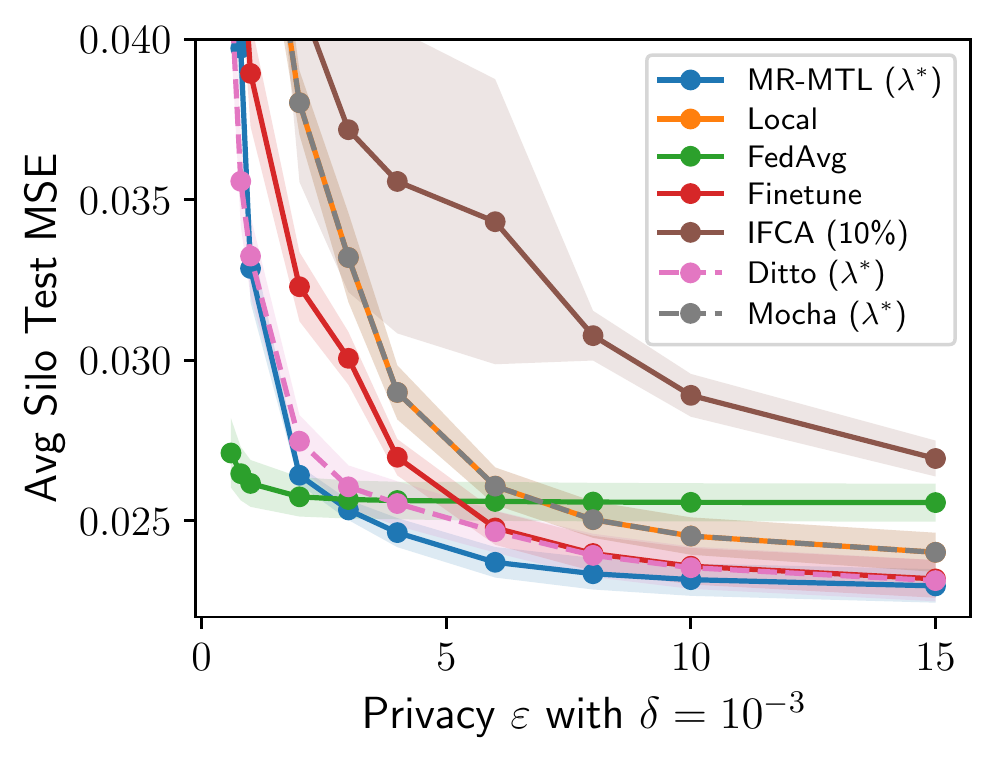}
  \includegraphics[width=0.329\linewidth]{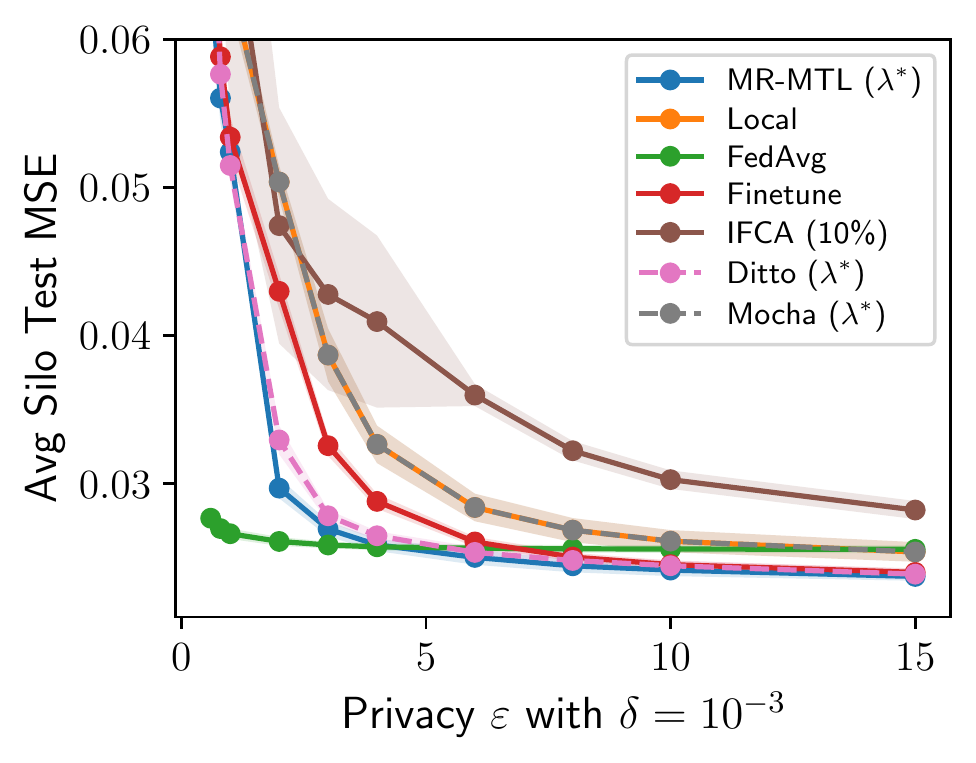}
  \includegraphics[width=0.329\linewidth]{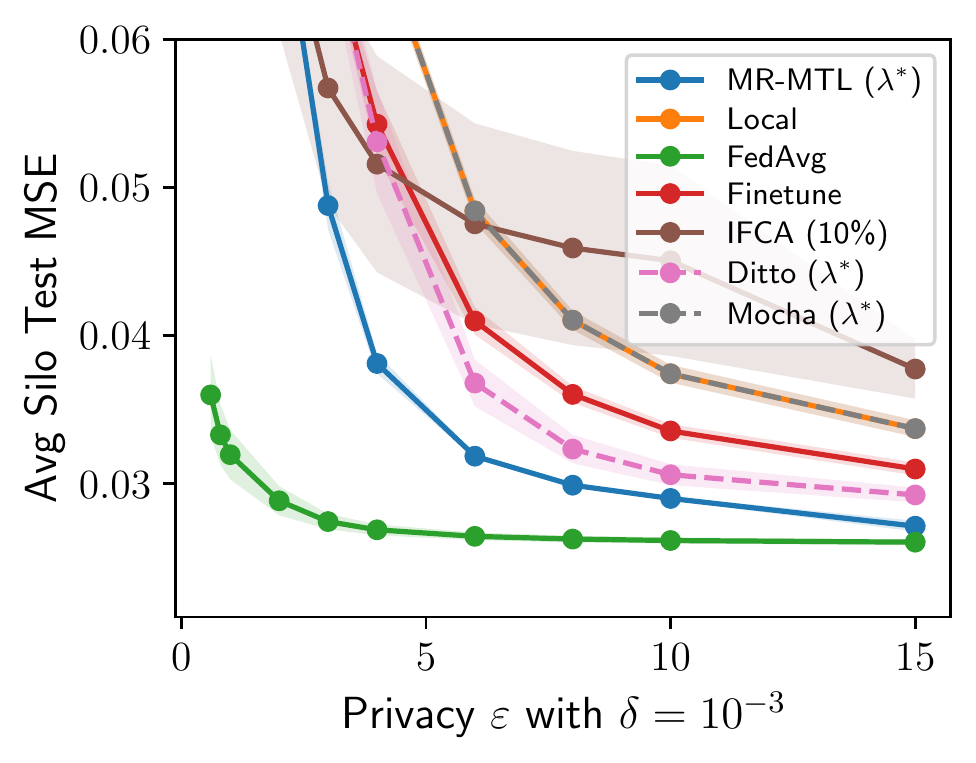}
  \caption{\textbf{Test MSE vs total privacy budgets on the School dataset} (80\%/20\%, 50\%/50\%, and 20\%/80\% train/test split of the local dataset by columns from left to right).
  The top row compares $\mrmtl$ to local training/FedAvg, which form the endpoints of the personalization spectrum with constant privacy costs, and the bottom row compares against other personalization methods.
  Under data subsampling (as little as 20\% training data in the 3rd column), we obtain a setting closer to cross-device FL where FedAvg outperforms personalization since clients benefit from others' training data despite their heterogeneity.
  }
  \label{supp-fig:pareto-school-t200}
\end{figure}

\end{document}